\newtheorem*{rep@theorem}{\rep@title}
\newcommand{\newreptheorem}[2]{%
\newenvironment{rep#1}[1]{%
 \def\rep@title{#2 \ref{##1}}%
 \begin{rep@theorem}}%
 {\end{rep@theorem}}}
\newtheorem{theorem}{Theorem}
\newtheorem{proposition}[theorem]{Proposition}
\newtheorem{lemma}[theorem]{Lemma}
\theoremstyle{definition}
\newtheorem{definition}[theorem]{Definition}
\theoremstyle{remark}
\newtheorem*{remark}{Remark}
\DeclareMathOperator*{\argmax}{arg\,max}
\DeclareMathOperator*{\argmin}{arg\,min}
\DeclareMathOperator{\sign}{sign}
\DeclareMathOperator{\Prb}{\mathbb{P}}
\newcommand{\innerp}[2]{\langle#1, #2\rangle}
\newcommand{\noparcite}[1]{\citeauthor{#1} (\citeyear{#1})}
\newcommand{\mynorm}[1]{{\lVert #1 \rVert}_{\Sigma^{-1}}}
\newcommand{\insidephi}[1]{\left(\frac{-c - \ln{\frac{#1}{1 - #1}}}{d}\right)}
\newcommand{\insidephipos}[1]{\left(\frac{-c + \ln{\frac{#1}{1 - #1}}}{d}\right)}
\newcommand{\natparam}[1]{\ln{\frac{#1}{1- #1}}}
\icmltitlerunning{The Interplay between Distribution Parameters and the Accuracy-Robustness Tradeoff in Classification}
\begin{document}

\twocolumn[
\icmltitle{The Interplay between Distribution Parameters and the Accuracy-Robustness Tradeoff in Classification}



\icmlsetsymbol{equal}{*}

\begin{icmlauthorlist}
\icmlauthor{Alireza Mousavi Hosseini}{equal,sharif}
\icmlauthor{Amir Mohammad Abouei}{equal,sharif}
\icmlauthor{Mohammad Hossein Rohban}{sharif}

\end{icmlauthorlist}

\icmlaffiliation{sharif}{Department of Computer Engineering, Sharif University of Technology, Tehran, Iran}

\icmlcorrespondingauthor{Alireza Mousavi Hosseini}{mousavihosseini@ce.sharif.edu}
\icmlcorrespondingauthor{Amir Mohammad Abouei}{amabouei@ce.sharif.edu}

\icmlkeywords{Machine Learning, ICML}

\vskip 0.3in
]



\printAffiliationsAndNotice{\icmlEqualContribution} 

\begin{abstract}
Adversarial training tends to result in models that are less accurate on natural (unperturbed) examples compared to standard models. This can be attributed to either an algorithmic shortcoming or a fundamental property of the training data distribution, which admits different solutions for optimal standard and adversarial classifiers. In this work, we focus on the latter case under a binary Gaussian mixture classification problem. Unlike earlier work, we aim to derive the natural accuracy gap between the optimal Bayes and adversarial classifiers, and study the effect of different distributional parameters, namely separation between class centroids, class proportions, and the covariance matrix, on the derived gap. We show that under certain conditions, the natural error of the optimal adversarial classifier, as well as the gap, are locally minimized when classes are balanced, contradicting the performance of the Bayes classifier where perfect balance induces the worst accuracy. Moreover, we show that with an $\ell_\infty$ bounded perturbation and an adversarial budget of $\epsilon$, this gap is $\Theta(\epsilon^2)$ for the worst-case parameters, which for suitably small $\epsilon$ indicates the theoretical possibility of achieving robust classifiers with near-perfect accuracy, which is rarely reflected in practical algorithms.
\end{abstract}

\section{Introduction}
It is well known that despite the huge success of deep learning in different tasks such as computer vision and natural language processing, deep models suffer from the existence of adversarial examples \cite{szegedy2013intriguing}. Adversarial training and its variants \cite{madry2018towards, zhang2019theoretically, Wang2020Improving, carmon2019unlabeled} have been one of the most successful defenses against adversarial attacks. However, models obtained via these training methods demonstrate an undesirable drop in the natural accuracy. 

There have been many attempts to justify this accuracy drop. \noparcite{tsipras2018robustness} showed that the objectives of standard and adversarial training can be fundamentally at odds with each other by proving the existence of plausible settings in which optimal standard and adversarial classifiers exploit inherently different features. Built upon this idea, \noparcite{zhang2019theoretically} provided an algorithm to train models with a balance between robustness and accuracy. Furthermore, \noparcite{javanmard2020precise} proved the existence of such an innate tradeoff even in the simple case of Gaussian linear regression. Strikingly, \noparcite{yang2020closer} empirically revealed that different classes in benchmark image classification problems are separated enough, which leads to the existence of classifiers that are both Bayes and adversarially optimal. Thus in these cases, the tradeoff can not be attributed to distributional properties.

Another line of work points at the statistical hardness of adversarial training as the cause of the natural accuracy drop. This hardness was first proved by \noparcite{schmidt2018adversarially} in terms of a higher sample complexity for adversarial learning in a binary Gaussian mixture setting. \noparcite{bhattacharjee2020sample} also showed a higher sample complexity for adversarial training in separated settings. \noparcite{rice2020overfitting} demonstrated that virtually all adversarial training methods suffer from a huge generalization gap between training and test errors. Empirically, it has been observed that adversarially trained models can enjoy better generalization through a larger training set \cite{carmon2019unlabeled, raghunathan2020understanding}.

Perhaps closest to this work are that of \noparcite{dobriban2020provable} and \noparcite{javanmard2020precisestat}. \noparcite{javanmard2020precisestat} study the output of the adversarial training algorithm on a similar binary Gaussian mixture problem in high-dimensional settings. However, our work focuses on information-theoretically optimal classifiers, i.e. optimal classifiers when we are aware of the underlying data distribution and have access to its parameters. \noparcite{dobriban2020provable} also study information-theoretically optimal standard and adversarial classifiers, and arrive at conditions that imply their difference and subsequently lead to robustness-accuracy tradeoff. However, they do not quantify the exact drop in accuracy and how it can be affected by distributional parameters. We indeed show that even though the two optimal classifiers tend to be different most of the time, with an $\ell_\infty$ adversarial budget of $\epsilon$, their gap in accuracy is $\Theta(\epsilon^2)$ at worst, which can be negligible under practical settings. Besides, we prove the existence of non-trivial global maxima in the natural risk of the optimal adversarial classifier, as well as its gap with the Bayes optimal classifier, under class imbalance.

\section{Preliminaries and Notation}
The training set is defined as $S_n = \{(x^{(1)}, y^{(1)}),...,(x^{(n)}, y^{(n)})\}$, where $(x^{(i)}, y^{(i)})$ are i.i.d. samples drawn from a distribution $\mathcal{D}$, with $x \in \mathcal{X} \subseteq \mathbb{R}^d$ and $y \in \{-1,+1\}$. For a positive-definite matrix $A$, we define $\lVert x \rVert_A = \sqrt{x^TAx}$. Finally, we denote the CDF of the standard Gaussian distribution by $\Phi$. 

We define the adversarial risk and the empirical adversarial risk of a classifier with the weight vector $w$ with respect to the loss function $\ell$ over distribution $\mathcal{D}$ as
\begin{align}
    R^{adv}_{\mathcal{D},\ell}(w) &= \mathbb{E}_{x,y \sim \mathcal{D}}\left[\max_{\lVert x' - x \rVert_\infty \leq \epsilon}\ell(x',y;w)\right],\\
    R^{adv}_{S_n,\ell}(w) &= \frac{1}{n}\sum_{i=1}^{n}\max_{\lVert {x'}^{(i)} -  x^{(i)} \rVert_\infty \leq \epsilon}\ell({x'}^{(i)},y^{(i)};w).
\end{align}
Furthermore, the optimal adversarial weights, when optimized on the entire distribution and the training set, can be defined as
\begin{align}
    w^{adv} &= \argmin_{w \in \mathcal{W}}R^{adv}_{\mathcal{D},\ell}(w),\label{adv_opt}\\
    w^{adv}_n &= \argmin_{w \in \mathcal{W}}R^{adv}_{S_n,\ell}(w)\label{adv_train}
\end{align}
respectively, where $\mathcal{W}$ is the class of all classifiers (represented by weight vectors), which we search from. We can also define the natural counterpart of the above definitions, denoted by $nat$ superscript rather than $adv$, by setting $\epsilon = 0$.

One quantity that is of particular interest in this paper, is the gap between the natural risk of the optimal adversarial and the Bayes optimal classifiers. Thus, we define
\begin{align}
    G_{\mathcal{D},\ell} &= R^{nat}_{\mathcal{D},\ell}(w^{adv}) - R^{nat}_{\mathcal{D},\ell}(w^{nat}),\\
    G^{(n)}_{\mathcal{D},\ell} &= R^{nat}_{\mathcal{D},\ell}(w^{adv}_n) - R^{nat}_{\mathcal{D},\ell}(w^{nat}_n).
\end{align}

Although we consider one of the strongest threat models in the literature, the $\ell_\infty$ bounded adversary, our results are easily extensible to other attack norms.

\section{Warm-up: Linear Loss Function}
In this section, we briefly discuss the case of linear classifiers with bounded weights when trained on a linear loss function, i.e. we are interested in the following setting
\begin{align}
&\mathcal{W}_p = \{w \in \mathbb{R}^d \; : \; \lVert w \rVert_p \leq W\}\label{w_p_def},\\
&\ell_\text{lin}(x,y;w) = -y\innerp{w}{x}\label{l_lin_def}.
\end{align}

Considering the simple linear loss function will help us understand how a particular property of the distribution, namely $\mu$, can influence the inherent existence of a tradeoff between natural and adversarial accuracy. The analysis of this section is similar to \noparcite{chen20more}, which also considered linear loss functions and analyzed the generalization gap between standard and adversarial classifiers. Though they  considered only the $\ell_\infty$ regularization case, which does not capture the entire truth, and they restrict their analysis to Gaussian and Bernoulli distributions. The next proposition, proved in Appendix \ref{linear_prop_proof}, will quantify the effect of $\mu$.

\begin{proposition}
\label{linear_prop}
Consider $w^{nat}$ and $w^{adv}$ that are obtained using $\mathcal{W}_p$ and $\ell_\text{lin}$. For any arbitrary distribution $\mathcal{D}$, let $\mu = \mathbb{E}_{x, y \sim \mathcal{D}}[yx]$. Then, the following conditions are necessary and sufficient to have $w^{nat} = w^{adv}$ for different cases of $p$:
\begin{enumerate}
    \item When $p=1$, we need $\lVert \mu \rVert_\infty \geq \epsilon$ or $\mu = 0$.
    \item When $p=\infty$, for each $i \in [d]$, we either need $\mu_i = 0$ or $|\mu_i| \geq \epsilon$.
    \item When $1<p<\infty$, there must exist $c \geq \epsilon$ such that for each $i \in [d]$, we either have $\mu_i = 0$ or $|\mu_i| = c$.
\end{enumerate}
\end{proposition}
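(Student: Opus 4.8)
The plan is to reduce both optimization problems to explicit finite-dimensional convex programs by evaluating the risks in closed form. For the natural risk, linearity of expectation gives
\begin{equation}
R^{nat}_{\mathcal{D}, \ell_{\text{lin}}}(w) = \mathbb{E}_{x,y}[-y\innerp{w}{x}] = -\innerp{w}{\mu}.
\end{equation}
For the adversarial risk I would write the perturbed point as $x' = x + \delta$ and observe that, since $|y| = 1$, the inner maximization $\max_{\lVert\delta\rVert_\infty \le \epsilon} -y\innerp{w}{\delta}$ equals exactly $\epsilon\lVert w\rVert_1$ by the duality between the $\ell_\infty$ and $\ell_1$ norms; this perturbation can be chosen independently of $(x,y)$, so
\begin{equation}
R^{adv}_{\mathcal{D}, \ell_{\text{lin}}}(w) = -\innerp{w}{\mu} + \epsilon\lVert w\rVert_1.
\end{equation}
Thus $w^{nat}$ maximizes $\innerp{w}{\mu}$ and $w^{adv}$ maximizes $\innerp{w}{\mu} - \epsilon\lVert w\rVert_1$, both over the ball $\lVert w\rVert_p \le W$.

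Next I would solve each program. By Hölder's inequality the natural maximizer places its mass on the coordinates aligned with $\mu$; concretely, for $1 < p < \infty$ with conjugate exponent $q$, the unique maximizer is $w^{nat}_i \propto \sign(\mu_i)|\mu_i|^{q-1}$, while the extreme cases are handled directly ($w^{nat}$ concentrates on a coordinate of maximal $|\mu_i|$ when $p=1$, and decouples coordinatewise as $w^{nat}_i = W\sign(\mu_i)$ when $p = \infty$). For the adversarial program I would absorb the sign of each coordinate by substituting $w_i = \sign(\mu_i)v_i$ with $v_i \ge 0$, which turns the objective into $\sum_i (|\mu_i| - \epsilon)v_i$; the same Hölder/KKT analysis then yields $w^{adv}_i \propto \sign(\mu_i)(|\mu_i| - \epsilon)_+^{q-1}$ for $1 < p < \infty$, together with the corresponding coordinatewise and concentration descriptions for $p \in \{1, \infty\}$.

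The comparison step then drives the three cases. For $p = \infty$ both programs separate across coordinates, so equality holds iff it holds coordinatewise, and on each coordinate the scalar problem $\max_{|w_i| \le W}(w_i\mu_i - \epsilon|w_i|)$ coincides with $\max_{|w_i|\le W} w_i\mu_i$ exactly when $|\mu_i| \ge \epsilon$ or $\mu_i = 0$. For $p = 1$ I would reduce the adversarial objective at radius $t = \lVert w\rVert_1$ to $t(\lVert\mu\rVert_\infty - \epsilon)$ and note that the optimal radius is $W$, matching the natural solution, precisely when $\lVert\mu\rVert_\infty \ge \epsilon$, collapsing to $w = 0$ otherwise. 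For $1 < p < \infty$, equating the two explicit maximizers forces, on the common support, $|\mu_i|/\lVert\mu\rVert_q = (|\mu_i| - \epsilon)/\lVert(|\mu|-\epsilon)_+\rVert_q$; since the two normalizing constants do not depend on $i$, this linear relation in $|\mu_i|$ can hold for several coordinates at once only if all nonzero $|\mu_i|$ share a common value $c$, and support matching forces $c \ge \epsilon$.

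The main obstacle I anticipate is the careful treatment of the boundary and degenerate cases, where the optimizers are not unique. In particular, at $|\mu_i| = \epsilon$ (equivalently $c = \epsilon$ in the third case) the adversarial objective becomes flat along the relevant directions, so $w^{adv}$ is only determined up to a tie; verifying that $w^{nat}$ can be selected among these tied maximizers is exactly what turns the naive strict inequality $c > \epsilon$ into the stated $c \ge \epsilon$, and the analogous argument justifies including equality in cases 1 and 2 as well as absorbing the $\mu = 0$ case. Making the tie-breaking convention explicit and checking that $w^{nat}$ remains feasible and optimal for the adversarial program on the boundary is the delicate part of the argument.
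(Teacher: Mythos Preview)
Your proposal is correct and follows essentially the same route as the paper: both compute $R^{adv}(w) = -\innerp{w}{\mu} + \epsilon\lVert w\rVert_1$ via $\ell_1$--$\ell_\infty$ duality, solve the resulting $\ell_p$-constrained programs explicitly (yielding $w^{adv}_i \propto \sign(\mu_i)(|\mu_i|-\epsilon)_+^{1/(p-1)}$ for $1<p<\infty$ and the corresponding coordinatewise/concentration forms at the endpoints), and then compare the $\epsilon>0$ and $\epsilon=0$ solutions. Your treatment of the boundary and tie-breaking cases is in fact more explicit than the paper's, which dispatches them with a one-line ``one can easily verify.''
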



While Proposition \ref{linear_prop} generally stipulates a large enough distance between class centroids, in the case of $1 < p < \infty$ the condition is much more exclusive, essentially requiring all non-zero elements of $\mu$ to be equal in magnitude. As will be seen later, this condition reappears with the 0-1 loss and Gaussian classification, which hints at its possible importance beyond the settings considered here.

It is easy to provide a finite-sample analysis and observe that the same conditions of Proposition \ref{linear_prop} control whether $w^{nat}_n$ and $w^{adv}_n$ converge given enough samples, or that their gap is bounded away from zero. We will formally state and prove this result in Appendix \ref{linear_theorem_appendix}.

\section{Complete Analysis of the Tradeoff for Binary Gaussian Mixtures}
In this section, we study optimal standard and adversarial classifiers, when samples are generated from a binary Gaussian mixture model. Note that in this section, we choose the 0-1 loss function, which for a linear classifier with weight vector $w$ and bias $w_0$ can be defined as $\ell_\text{0-1}(x,y;w) = \mathbbm{1}[y(\innerp{w}{x} + w_0) \leq 0]$.

\begin{definition}
\label{binary_gaussian_mixture}
Let $\mathcal{D}$ be a distribution on $\mathbb{R}^d \times \mathcal{Y},$ where $\mathcal{Y} = \{-1,+1\}$, such that
\begin{equation}
    \mathbb{P}[y=\pm1] = \pi_\pm, \qquad x|y \sim \mathcal{N}(y\mu, \Sigma),
\end{equation}
where $p$ is a probability density function. We call $\mathcal{D}$ a $(\mu,\Sigma,\pi_+)$-binary Gaussian mixture.
\end{definition}

This class of distributions has been used in many recent works on the theory of adversarial machine learning \cite{schmidt2018adversarially, chen20more, dobriban2020provable}, although the form that we consider here is more general than  prior works. It is well known that the standard (Bayes) optimal classifier for this distribution, i.e. the classifier that minimizes the natural risk, is a linear classifier with $w=\Sigma^{-1}\mu$ and $w_0 = \frac{\ln(\pi_+/\pi_-)}{2}$. Several prior work have established optimal adversarial classifiers for the binary Gaussian mixture \cite{bhagoji2019, dan2020, dobriban2020provable}. Here, we restate a formulation from \noparcite{dan2020}, which we found useful for our analysis, with a slight extension to incorporate class imbalance. The proof is deferred to Appendix \ref{proof_optimal_classifier}.

\begin{proposition}
\label{optimal_classifier_proposition}
Let distribution $\mathcal{D}$ be a $(\mu,\Sigma,\pi_+)$-binary Gaussian mixture according to Definition \ref{binary_gaussian_mixture}. Then, the optimal adversarial classifier, which is also linear, is given by
\begin{align}
\label{optimal_solution_z}
    &z^* = \argmin_{\lVert z \rVert_\infty \leq 1}{\lVert \mu - \epsilon z \rVert}^2_{\Sigma^{-1}},\\
    &w^{adv} = \Sigma^{-1}(\mu - \epsilon z^*),\\
    &w^{adv}_0 = \frac{\ln(\pi_+/\pi_-)}{2}.
\end{align}
\end{proposition}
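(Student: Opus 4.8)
The plan is to first reduce the problem to optimizing over linear classifiers, for which the inner adversarial maximization has a closed form, and then to minimize the resulting risk over the weight vector and bias. For a linear classifier $(w,w_0)$ under the $0$--$1$ loss, the worst-case perturbation inside the $\ell_\infty$ ball degrades the signed margin by exactly $\epsilon\lVert w\rVert_1$, since $\min_{\lVert\delta\rVert_\infty\le\epsilon} y\innerp{w}{\delta}=-\epsilon\lVert w\rVert_1$ (the dual of $\ell_\infty$ is $\ell_1$). Hence the inner maximum equals $\mathbbm{1}[y(\innerp{w}{x}+w_0)\le\epsilon\lVert w\rVert_1]$, and conditioning on $y=\pm1$ with $x\mid y\sim\mathcal{N}(y\mu,\Sigma)$ turns each class-conditional error into a Gaussian tail. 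Writing $m=\innerp{w}{\mu}$ and $\sigma_w=\sqrt{w^T\Sigma w}$, this gives the closed form
\begin{equation}
R^{adv}_{\mathcal{D},\ell}(w,w_0)=\pi_+\Phi\!\left(\frac{\epsilon\lVert w\rVert_1-w_0-m}{\sigma_w}\right)+\pi_-\Phi\!\left(\frac{\epsilon\lVert w\rVert_1+w_0-m}{\sigma_w}\right).
\end{equation}

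Next I would minimize over $(w,w_0)$. Optimizing the bias first, the stationarity condition $\partial R^{adv}/\partial w_0=0$ reduces, after taking logarithms of the two Gaussian densities, to a single linear equation with solution $w_0=\frac{\sigma_w^2}{2(m-\epsilon\lVert w\rVert_1)}\ln(\pi_+/\pi_-)$. Substituting back, and using scale invariance of the $0$--$1$ classifier to normalize $\sigma_w=1$, the optimal risk becomes a function of the scalar $t:=\epsilon\lVert w\rVert_1-m$ alone; by the envelope theorem its derivative is $\pi_+\phi(t-w_0)+\pi_-\phi(t+w_0)>0$, so the risk is strictly increasing in $t$. Minimizing the risk therefore amounts to maximizing $\innerp{w}{\mu}-\epsilon\lVert w\rVert_1$ over $\lVert w\rVert_\Sigma\le1$. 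Rewriting $\epsilon\lVert w\rVert_1=\max_{\lVert z\rVert_\infty\le1}\epsilon\innerp{w}{z}$ turns this into the bilinear minimax $\max_{\lVert w\rVert_\Sigma\le1}\min_{\lVert z\rVert_\infty\le1}\innerp{w}{\mu-\epsilon z}$; since the objective is linear in each variable over convex compact sets, Sion's theorem lets me swap the order, and the inner maximization is the dual norm $\max_{\lVert w\rVert_\Sigma\le1}\innerp{w}{v}=\mynorm{v}$. This collapses the problem to $z^*=\argmin_{\lVert z\rVert_\infty\le1}\mynorm{\mu-\epsilon z}$ (equivalently the squared objective of \eqref{optimal_solution_z}), with optimal direction $w^{adv}\propto\Sigma^{-1}(\mu-\epsilon z^*)$; scale invariance then lets me fix $w^{adv}=\Sigma^{-1}(\mu-\epsilon z^*)$.

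It remains to pin down the bias under this normalization. The box-constrained (KKT) optimality conditions for $z^*$ force $z^*_i=\sign(w^{adv}_i)$ whenever $w^{adv}_i\ne0$, so $\innerp{w^{adv}}{z^*}=\lVert w^{adv}\rVert_1$. Expanding $\innerp{w^{adv}}{\mu}-\epsilon\lVert w^{adv}\rVert_1=(\mu-\epsilon z^*)^T\Sigma^{-1}(\mu-\epsilon z^*)=\sigma_w^2$ shows $m-\epsilon\lVert w^{adv}\rVert_1=\sigma_w^2$, and plugging this into the bias formula collapses it to $w^{adv}_0=\frac{1}{2}\ln(\pi_+/\pi_-)$, as claimed; specializing $\epsilon=0$ recovers the stated Bayes classifier as a consistency check.

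The main obstacle is the very first step: justifying that the minimizer over the class $\mathcal{W}$ of \emph{all} measurable classifiers is linear, rather than merely optimizing within the linear family. Writing the positive decision region as $A$, the adversarial risk equals $\pi_+P_+((A^c)^\epsilon)+\pi_-P_-(A^\epsilon)$, where $P_\pm=\mathcal{N}(\pm\mu,\Sigma)$ and $S^\epsilon$ is the $\ell_\infty$-expansion of $S$, and minimizing this over all $A$ is a nonconvex variational (adversarial-Bayes) problem. I would establish halfspace optimality for this two-Gaussian, equal-covariance instance by adapting the adversarial-Bayes characterization used for the balanced case in \noparcite{dan2020} (see also the optimal-transport argument of \noparcite{bhagoji2019}) to weighted class priors, which is precisely the extension to class imbalance that the bias term $\tfrac12\ln(\pi_+/\pi_-)$ reflects. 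Once linearity is granted, the computation above is routine.
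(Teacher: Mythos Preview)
Your approach is essentially the same as the paper's: derive the closed-form adversarial risk of a linear classifier via the $\ell_\infty/\ell_1$ duality, normalize $\lVert w\rVert_\Sigma$, reduce to maximizing $\innerp{w}{\mu}-\epsilon\lVert w\rVert_1$, apply the minimax swap to obtain $z^*$, use KKT to get $z^*_i=\sign(w^{adv}_i)$, and then pin down the bias. The only substantive difference is organizational---the paper first observes that the optimal $w$ is independent of $w_0$ (since $\pi_+\phi+\pi_-\phi>0$) and optimizes $w$ then $w_0$, whereas you optimize $w_0$ first and invoke the envelope theorem; both reach the same place.

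On the one step you flag as an obstacle (global optimality over all measurable classifiers), the paper does not merely cite prior work but carries out exactly the argument you sketch: it proves a short lemma showing that for \emph{any} classifier $f$, $R^{adv}_{\mathcal{D},\ell_{0\text{-}1}}(f)\ge R^{nat}_{\mathcal{D}',\ell_{0\text{-}1}}(f)$, where $\mathcal{D}'$ is the $(\mu-\epsilon z^*,\Sigma,\pi_+)$ mixture obtained by the fixed shift $\epsilon z^*$ (a feasible perturbation since $\lVert z^*\rVert_\infty\le1$). The right-hand side is then bounded below by the Bayes risk of $\mathcal{D}'$, whose Bayes classifier is precisely $(w^{adv},w^{adv}_0)$; the KKT identity $\innerp{w^{adv}}{z^*}=\lVert w^{adv}\rVert_1$ that you already derived shows this lower bound equals $R^{adv}_{\mathcal{D},\ell_{0\text{-}1}}(w^{adv},w^{adv}_0)$, closing the loop. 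So the ``adaptation to weighted priors'' you propose is exactly this two-line lemma, and your proposal is complete once it is stated.
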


For the next sections, we implicitly assume $\epsilon < \lVert \mu \rVert_\infty$, omitting the trivial case of $w^{adv} = 0$ according to Proposition \ref{optimal_classifier_proposition}.   

\subsection{Class Imbalance}
\label{Class_imbalance}
In this section, we investigate the effect of class imbalance on the natural accuracy of the optimal adversarial classifier and its gap with the accuracy of the Bayes classifier. Although \noparcite{dobriban2020provable} derived the optimal adversarial classifier for binary and ternary Gaussian mixtures with respect to the $\ell_2$ perturbation norm with imbalanced class priors, they did not analyze the impact of this imbalance on the natural accuracy of that classifier.

It is well known that under constant $\Sigma$ and $\mu$, and varying $\pi_+$, the Bayes optimal classifier for the binary Gaussian mixture model of Definition \ref{binary_gaussian_mixture} achieves better accuracies as $\pi_+$ moves away from $\frac{1}{2}$. Perhaps surprisingly, we prove that under certain conditions, this is not the case for the natural risk of the optimal adversarial classifier. The behavior of this risk as a function of $\pi_+$ can have two distinct regimes. In the first one, it behaves similarly to the natural risk of the Bayes classifier, increasing monotonically before and decreasing monotonically after $\frac{1}{2}$. We call this regime \emph{standard}. In the second regime however, this risk will have 3 extreme values, a local minimum at $\frac{1}{2}$ and two maxima located symmetrically around $\frac{1}{2}$. We call this regime \emph{surprising}. Moreover, this pattern can be seen in the gap between the accuracies of the Bayes and the optimal adversarial classifier as well. In other words, unlike standard classification, when we are in the surprising regime, having perfectly balanced classes makes adversarial classification more naturally accurate and closes its gap with natural classification, compared to a slight class imbalance. This behavior is illustrated in Figure \ref{imbalanced_natural_fig}. The next theorem, proved in Appendix \ref{proof_imbalanced}, precisely characterizes the condition that separates the two regimes for both the natural risk of the adversarial classifier and the gap.

\begin{figure}
\vskip 0.2in
\begin{center}
\centerline{\includegraphics[width=0.9\columnwidth, height= 0.7\columnwidth]{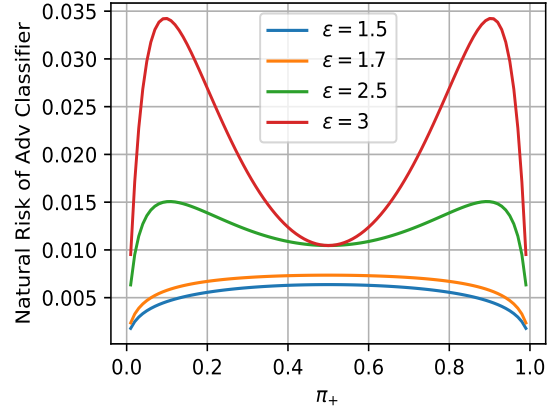}}
\caption{The natural risk of the optimal adversarial classifier for different values of $\pi_{+}$ with fixed parameters are $\mu = (1.5, 2, 4)$ and $\Sigma = 3I_{3}$. We plot 4 curves according to 4 adversarial budgets in $\ell_{\infty}$ norm. This figure shows that in a certain regime the natural risk of the optimal adversarial classifier can get worse even as the distribution gets more imbalanced.}
\label{imbalanced_natural_fig}
\end{center}
\vskip -0.2in
\end{figure}

\begin{theorem}
\label{imbalanced_theorem}
Let distribution $\mathcal{D}$ be a $(\mu,\Sigma,\pi_+)$-binary Gaussian mixture according to Definition \ref{binary_gaussian_mixture}, and let $c \coloneqq 2\innerp{\mu}{\Sigma^{-1}(\mu - \epsilon z^{*})}$ and $d \coloneqq 2\mynorm{\mu - \epsilon z^{*}}$ where $z^*$ is the solution of Equation \eqref{optimal_solution_z}. Consider $R^{nat}_{\mathcal{D},\ell_\text{0-1}}(w^{adv})$ as a function of $\pi_+ \in (0,1)$ under fixed $\Sigma$ and $\mu$. This function is surprising if and only if
\begin{equation}
    c > d^2\label{risk_surprising_cond}
\end{equation}
and standard otherwise. Furthermore, consider $G_{\mathcal{D},\ell_\text{0-1}}$, i.e. the gap, as a function of $\pi_+\in (0,1)$ under fixed $\Sigma$ and $\mu$. This function is surprising if and only if
\begin{equation}
    \frac{2(\frac{c}{d ^ 2} - 1) e^{-\frac{(\frac{c}{d})^2}{2}}}{d} > - \frac{e^{-\frac{\mynorm{\mu}^2}{2}}}{2\mynorm{\mu}}\label{gap_surprising_cond}
\end{equation}
and standard otherwise.
\end{theorem}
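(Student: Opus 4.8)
The plan is to reduce the statement to a one–variable second–derivative test at $\pi_+=\tfrac12$. Using Proposition \ref{optimal_classifier_proposition}, the optimal adversarial classifier is linear with $w=\Sigma^{-1}(\mu-\epsilon z^{*})$ and $w^{adv}_0=\tfrac12\ln(\pi_+/\pi_-)$, so under each class the projected score $\innerp{w}{x}$ is Gaussian with variance $w^\top\Sigma w=\mynorm{\mu-\epsilon z^{*}}^2=(d/2)^2$ and class–conditional mean $\pm\innerp{w}{\mu}=\pm c/2$, where $c,d$ are exactly the quantities in the statement. Writing each per–class misclassification probability as a value of $\Phi$ gives
\begin{equation}
R^{nat}_{\mathcal{D},\ell_\text{0-1}}(w^{adv}) = \pi_+\,\Phi\insidephi{\pi_+} + (1-\pi_+)\,\Phi\insidephipos{\pi_+}.
\end{equation}
The only nontrivial $\pi_+$–dependence sits in the logit, so I would reparametrize by $t=\ln\frac{\pi_+}{1-\pi_+}$ (equivalently $\pi_+=\sigma(t)$), which makes the arguments of $\Phi$ affine in $t$ and is a strictly increasing diffeomorphism, hence preserves the type of every critical point and the monotonicity structure.

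In the $t$–variable the risk is manifestly even, since $t\mapsto-t$ interchanges the two summands; thus $t=0$ (that is, $\pi_+=\tfrac12$) is always a critical point, and its type is decided by the second derivative there. Differentiating twice and using $\phi'(u)=-u\,\phi(u)$ for the standard Gaussian density $\phi$, the contribution carrying the difference of the two $\Phi$–values drops out at $t=0$ (the two arguments coincide there) and the survivors collapse to a single term proportional to $\phi(c/d)\,(c-d^2)/d^3$. Since $\phi(c/d)>0$ and $d>0$ away from the trivial case $w^{adv}=0$, the point $t=0$ is a local minimum exactly when $c>d^2$, which is \eqref{risk_surprising_cond}. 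For the gap I would run the identical computation for the Bayes risk, obtained by setting $z^{*}=0$, i.e. replacing $(c,d)$ by $(c_0,d_0)=(2\mynorm{\mu}^2,2\mynorm{\mu})$; here $c_0<d_0^2$ always, recovering the classical fact that the Bayes natural risk peaks at $\pi_+=\tfrac12$. Writing the second derivative of $G_{\mathcal{D},\ell_\text{0-1}}$ at $t=0$ as the difference of the two second derivatives and rearranging $G''(0)>0$ then yields precisely \eqref{gap_surprising_cond}.

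I expect the main obstacle to be upgrading this local test to the global dichotomy the theorem actually asserts: that the \emph{surprising} regime is a local minimum at $\tfrac12$ flanked by exactly two symmetric maxima, while the \emph{standard} regime is strict monotonicity on each side of $\tfrac12$. The boundary behavior helps, since as $\pi_+\to0^+$ or $\pi_+\to1^-$ both the risk and the gap tend to $0$ while both are strictly positive at $\tfrac12$, so by the evenness in $t$ it suffices to bound the number of zeros of the derivative for $t>0$. Showing this derivative vanishes at most once on $t>0$ is the delicate part, because its explicit form mixes a term $\sigma'$ times a negative difference of two $\Phi$–values with two Gaussian–density terms of opposite sign. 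I would attack it by factoring out the densities and using the log-concavity encoded in $\phi'=-u\,\phi$ to show the relevant ratio is monotone in $t$, which forces a single crossing in the surprising regime and none in the standard regime. Together with the second–derivative test and the symmetry, this pins down the three–versus–one extremum structure and closes both ``if and only if'' claims.
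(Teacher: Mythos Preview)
Your local analysis is correct and is exactly the engine the paper uses: compute $R''$ and $G''$ at $\pi_+=\tfrac12$ and read off conditions \eqref{risk_surprising_cond} and \eqref{gap_surprising_cond} from their sign. The reparametrization by $t=\ln\frac{\pi_+}{1-\pi_+}$ and the evenness observation are fine, and your claim that the Bayes pair satisfies $c_0/d_0^2=\tfrac12$ (hence $c_0<d_0^2$) is right.

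Where your proposal diverges from the paper is in the global step, and here your plan is underspecified in a way that matters. You propose to count zeros of the \emph{first} derivative on $t>0$ via a ``monotone ratio / log-concavity'' argument. The paper instead counts zeros of the \emph{second} derivative on $(\tfrac12,1)$: it computes $R''_{c,d}(\pi)$ explicitly, changes variables to $x=\ln\frac{\pi}{1-\pi}$, and shows that $R''=0$ reduces to
\[
x\bigl(e^{Ax}-e^{-Ax}\bigr)-\beta(A-\tfrac12)\bigl(e^{Ax}+e^{-Ax}\bigr)=0,\qquad A=\tfrac{c}{d^2}-\tfrac12,\ \beta=d^2,
\]
which has exactly one positive root when $A>\tfrac12$ and none when $A\le\tfrac12$. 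Combined with $R'(\tfrac12)=0$ and a separate lemma that $R$ is eventually decreasing, this forces the dichotomy. For the gap, the analogous reduction picks up an extra exponential term from the Bayes piece; the paper handles it through a nontrivial auxiliary lemma showing that
\[
S(x)=\alpha x^2-\ln\bigl(x(e^{Ax}-e^{-Ax})-\beta(A-\tfrac12)(e^{Ax}+e^{-Ax})\bigr)
\]
is decreasing then increasing on its domain, which bounds the number of roots of $G''$ by two and, together with $G''(\tfrac12)$'s sign and the eventual decrease of $G$, pins down the three-extremum versus one-extremum structure.

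Two concrete gaps in your plan. First, ``log-concavity encoded in $\phi'=-u\phi$'' is not by itself enough: the first derivative of $R$ mixes a $\sigma'$-weighted difference of $\Phi$ values with two $\phi$ terms of opposite sign, and there is no single monotone ratio that obviously governs its sign changes; you would still need something like the paper's $R''$-root count (or an equivalent) to rule out extra oscillations. Second, the gap case is strictly harder than the risk case because $G''$ can have \emph{two} roots on $(\tfrac12,1)$, not one, and the argument must show that even then $G$ is unimodal on that interval; your proposal does not anticipate this added complexity. If you want to push the first-derivative route, you will almost certainly end up re-deriving the same auxiliary control on convexity that the paper obtains through $S(x)$.
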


\begin{remark}
As expected, the satisfaction of \eqref{risk_surprising_cond} also implies the satisfaction of \eqref{gap_surprising_cond}.
\end{remark}

Although not proved formally, we believe that the transition to the surprising regime generally occurs with a non-trivially (less than $\lVert \mu \rVert_\infty$) large enough $\epsilon$. For instance, let $\Sigma$ be diagonal and let all elements of $\mu$ have the same magnitude $|\mu|$. Then, $\epsilon > \frac{|\mu|}{2}$ would suffice to satisfy \eqref{risk_surprising_cond} and subsequently \eqref{gap_surprising_cond}. Note that in this case, the latter is satisfied for all $\epsilon$.

\subsection{Gap Lower and Upper Bounds}
In this section, we focus on the case where $\pi_+ = \pi_-$. We begin our discussion by specifying the conditions under which the adversarial and Bayes optimal classifiers are equivalent, i.e. $w^{nat}$ and $w^{adv}$ are parallel, in the following proposition which is proved in Appendix \ref{no_tradeoff_proof}. This is indeed an extension to the results of \noparcite{dobriban2020provable}, where they only consider isotropic covariance matrices.

\begin{theorem}
\label{no_tradeoff_theorem}
Let distribution $\mathcal{D}$ be a $(\mu,\Sigma,\frac{1}{2})$-binary Gaussian mixture according to Definition \ref{binary_gaussian_mixture}. Then the optimal adversarial classifier is also a Bayes optimal classifier if and only if a constant $c \geq \epsilon$ exists such that
\begin{align}
    &\forall i \in [d] \quad (\Sigma^{-1}\mu)_i = 0 \implies |\mu_i| \leq c,\\
    &\forall i \in [d] \quad (\Sigma^{-1}\mu)_i \neq 0 \implies \mu_i = c\sign((\Sigma^{-1}\mu)_i).
\end{align}
\end{theorem}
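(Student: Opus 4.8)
The plan is to reduce the geometric statement ``the adversarial classifier is Bayes optimal'' to the purely algebraic statement ``$w^{adv}$ is a positive multiple of $w^{nat}=\Sigma^{-1}\mu$'', and then to read off the two stated conditions from the KKT optimality conditions of the box-constrained quadratic program defining $z^*$. First I would note that since $\pi_+=\pi_-$ both classifiers have zero bias, so each is the rule $x\mapsto\sign(\innerp{w}{x})$ with $w\in\{w^{adv},w^{nat}\}$. Two distinct hyperplanes through the origin differ on a set of positive Lebesgue measure, and on that set the adversarial and Bayes rules disagree; since the Bayes posterior equals $1/2$ only on the Bayes boundary (a null set), any such disagreement strictly raises the natural risk above the Bayes risk. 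Hence the adversarial classifier attains the Bayes risk if and only if $w^{adv}=\lambda w^{nat}$ for some $\lambda>0$. As $\Sigma^{-1}$ is invertible, this is equivalent to $\mu-\epsilon z^*=\lambda\mu$, i.e. to the \emph{coordinatewise} relation $\epsilon z^*_i=(1-\lambda)\mu_i$.

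Next I would characterize $z^*$ through its KKT conditions. The objective $f(z)=\mynorm{\mu-\epsilon z}^2$ has Hessian $2\epsilon^2\Sigma^{-1}\succ0$, so it is strictly convex and $z^*$ is the unique point satisfying stationarity over the box $\lVert z\rVert_\infty\le1$. Since $\nabla f(z)=-2\epsilon\,w^{adv}$, these conditions read, coordinatewise, as $|z^*_i|<1\Rightarrow (w^{adv})_i=0$, as $z^*_i=1\Rightarrow (w^{adv})_i\ge0$, and as $z^*_i=-1\Rightarrow (w^{adv})_i\le0$. This is the bridge between the optimizer and the sign pattern of $w^{adv}$ (equivalently, under parallelism, of $\Sigma^{-1}\mu$).

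For the forward direction, assume $w^{adv}=\lambda w^{nat}$ with $\lambda>0$. Feasibility of $z=0$ and optimality of $z^*$ give $\mynorm{\lambda\mu}=\mynorm{\mu-\epsilon z^*}\le\mynorm{\mu}$, so $\lambda\le1$; the standing assumption $\epsilon<\lVert\mu\rVert_\infty$ rules out $\lambda=1$ (which would force $z^*=0$ and hence $\mu=0$), leaving $0<\lambda<1$. Setting $c\coloneqq\epsilon/(1-\lambda)\ge\epsilon$ yields $z^*_i=\mu_i/c$. Feeding this into the KKT conditions, a coordinate with $|\mu_i|<c$ is interior, forcing $(w^{adv})_i=0$ and hence $(\Sigma^{-1}\mu)_i=0$; a coordinate with $\mu_i=c$ (resp.\ $-c$) sits at $z^*_i=1$ (resp.\ $-1$), forcing $(w^{adv})_i$, and so $(\Sigma^{-1}\mu)_i$, to be $\ge0$ (resp.\ $\le0$). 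Reading these off coordinatewise yields exactly the two displayed implications.

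For the converse, given $c\ge\epsilon$ satisfying the two conditions I would guess $z=\mu/c$, check $\lVert z\rVert_\infty\le1$ (which holds since $|\mu_i|\le c$ in both cases of the hypothesis), and verify the KKT conditions using the sign and magnitude information on $\mu_i$; strict convexity then upgrades this to $z^*=\mu/c$ uniquely, whence $w^{adv}=(1-\epsilon/c)\,\Sigma^{-1}\mu$, a positive multiple of $w^{nat}$ (positive because $\lVert\mu\rVert_\infty>\epsilon$ forces $c=\lVert\mu\rVert_\infty>\epsilon$). The main obstacle I anticipate is bookkeeping rather than depth: the conditions must be matched to the KKT conditions one coordinate at a time, while carefully treating the overlap case where $(\Sigma^{-1}\mu)_i=0$ coincides with $|\mu_i|=c$, which is simultaneously a boundary coordinate of $z^*$ and covered by the first implication, so that neither direction of the equivalence leaves a gap.
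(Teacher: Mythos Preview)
Your proposal is correct and follows essentially the same route as the paper: reduce to $w^{adv}\parallel w^{nat}$, use the KKT/complementary-slackness characterization of $z^*$, set $c=\epsilon/(1-\lambda)$, and verify the converse by plugging $z=\mu/c$ back in. Your version is in fact more carefully argued than the paper's (you justify why parallelism is exactly the right equivalence, you invoke strict convexity to certify uniqueness of $z^*$ in the converse, and you rule out $\lambda=1$ via the interior-point argument rather than the paper's sign argument based on $\innerp{\mu}{\Sigma^{-1}\mu}>0$), but the skeleton is the same.
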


\begin{remark}
For a diagonal $\Sigma$, the condition in Theorem \ref{no_tradeoff_theorem} will be the same as the condition for $1 < p < \infty$ in Proposition \ref{linear_prop}, which demands a strong symmetry between all non-zero elements of $\mu$ that is unimaginable to hold in practice.
\end{remark}

As a consequence of Theorem \ref{no_tradeoff_theorem}, it is reasonable to assume that in most natural settings, there is indeed a tradeoff between optimizing standard accuracy and robustness. However, the natural question to ask is, when the conditions stated above are not met, how much accuracy drop should we expect? The next theorem, proved in Appendix \ref{small_gap_proof}, addresses this question.

\begin{theorem}
\label{small_gap_theorem}
Let distribution $\mathcal{D}$ be a $(\mu,\Sigma,\frac{1}{2})$-binary Gaussian mixture according to Definition \ref{binary_gaussian_mixture}, and let $\epsilon \leq \min(\frac{\lVert \mu \rVert_{\Sigma^{-1}}^2}{2\lVert \Sigma^{-1}\mu \rVert_1}, \frac{\lVert \mu \rVert_{\Sigma^{-1}}}{2\sqrt{C_{\Sigma,\mu}}})$ where $C_{\Sigma, \mu}=\lVert \sign(\Sigma^{-1}\mu)\rVert_{\Sigma^{-1}}^2-\frac{\lVert \Sigma^{-1}\mu \rVert_1^2}{\lVert \mu \rVert_{\Sigma^{-1}}^2}$. Then we have
\begin{align}
    G_{\mathcal{D},\ell_\text{0-1}} &\leq \Phi\left(-\lVert \mu \rVert_{\Sigma^{-1}}+\frac{2C_{\Sigma,\mu}\epsilon^2}{\lVert \mu \rVert_{\Sigma^{-1}}}\right) - \Phi(-\lVert \mu \rVert_{\Sigma^{-1}})\nonumber\\
    &\leq \frac{2e^{\frac{-\lVert \mu \rVert_{{\Sigma^{-1}}}^2}{8}}C_{\Sigma,\mu}\epsilon^2}{\sqrt{2\pi}\lVert \mu \rVert_{\Sigma^{-1}}}.
\end{align}
Moreover, when $\Sigma$ is diagonal and $\min_{i\in[d]}|\mu_i| \geq \epsilon$, we have
\begin{equation}
    G_{\mathcal{D},\ell_\text{0-1}} \geq \frac{e^{\frac{-\lVert \mu \rVert^2_{\Sigma^{-1}}}{2}}C_{\Sigma,\mu}\epsilon^2}{3\sqrt{2\pi}\lVert \mu \rVert_{\Sigma^{-1}}}.
\end{equation}
\end{theorem}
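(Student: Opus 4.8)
The plan is to reduce both natural risks to values of $\Phi$ and then control a single scalar. Since $\pi_+=\pi_-$ gives $w^{adv}_0=w^{nat}_0=0$, the symmetry of the two Gaussian components makes the natural $0$--$1$ risk of any linear classifier $w$ equal to $\Phi\!\left(-\innerp{w}{\mu}/\sqrt{w^T\Sigma w}\right)$. Substituting $w^{nat}=\Sigma^{-1}\mu$ gives $R^{nat}_{\mathcal D,\ell_\text{0-1}}(w^{nat})=\Phi(-\mynorm{\mu})$, while $w^{adv}=\Sigma^{-1}(\mu-\epsilon z^*)$ gives $R^{nat}_{\mathcal D,\ell_\text{0-1}}(w^{adv})=\Phi(-c/d)$ with $c,d$ exactly as defined in the theorem. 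Hence $G_{\mathcal D,\ell_\text{0-1}}=\Phi(-c/d)-\Phi(-\mynorm{\mu})$, and the whole statement reduces to sandwiching the ratio $c/d$ around $\mynorm{\mu}$.

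Next I would pin down $z^*$. The objective $\mynorm{\mu-\epsilon z}^2$ is a box-constrained convex quadratic, and for $\epsilon$ in the stated range its minimizer is the corner $z^*=\sign(\Sigma^{-1}\mu)$; the stationarity condition asks that $\Sigma^{-1}(\mu-\epsilon z^*)$ retain the sign pattern of $\Sigma^{-1}\mu$, which the first bound $\epsilon\le\mynorm{\mu}^2/(2\lVert\Sigma^{-1}\mu\rVert_1)$ is designed to guarantee. In the diagonal case the problem separates coordinatewise and $z_i^*=\sign(\mu_i)$ holds \emph{exactly} as soon as $|\mu_i|\ge\epsilon$, which is precisely the extra hypothesis of the lower bound. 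Substituting $z^*=\sign(\Sigma^{-1}\mu)$ gives $c=2(\mynorm{\mu}^2-\epsilon\lVert\Sigma^{-1}\mu\rVert_1)$ and $d^2=4(\mynorm{\mu}^2-2\epsilon\lVert\Sigma^{-1}\mu\rVert_1+\epsilon^2\mynorm{\sign(\Sigma^{-1}\mu)}^2)$, and a short completion of the square produces the key identity
\[ \frac{c}{d}=\mynorm{\mu}\,(1+t)^{-1/2},\qquad t=\frac{\epsilon^2 C_{\Sigma,\mu}\,\mynorm{\mu}^2}{\bigl(\mynorm{\mu}^2-\epsilon\lVert\Sigma^{-1}\mu\rVert_1\bigr)^2}, \]
in which $C_{\Sigma,\mu}$ appears exactly as the nonnegative quantity $\mynorm{\sign(\Sigma^{-1}\mu)}^2-\lVert\Sigma^{-1}\mu\rVert_1^2/\mynorm{\mu}^2$.

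For the upper bound, convexity of $s\mapsto(1+s)^{-1/2}$ gives $(1+t)^{-1/2}\ge 1-t/2$, so $\mynorm{\mu}-c/d\le \tfrac12\mynorm{\mu}\,t$; the first $\epsilon$-bound forces $(\mynorm{\mu}^2-\epsilon\lVert\Sigma^{-1}\mu\rVert_1)^2\ge\tfrac14\mynorm{\mu}^4$, which collapses this to $\mynorm{\mu}-c/d\le 2C_{\Sigma,\mu}\epsilon^2/\mynorm{\mu}$, and monotonicity of $\Phi$ yields the first displayed inequality. For the second I would apply $\Phi(y)-\Phi(x)\le (y-x)\max_{[x,y]}\phi$ (with $\phi=\Phi'$) at $x=-\mynorm{\mu}$, $y=-\mynorm{\mu}+2C_{\Sigma,\mu}\epsilon^2/\mynorm{\mu}$; the second $\epsilon$-bound $\epsilon\le\mynorm{\mu}/(2\sqrt{C_{\Sigma,\mu}})$ keeps $y\le-\mynorm{\mu}/2<0$, so $\phi$ is increasing on $[x,y]$ and is maximized by $\phi(-\mynorm{\mu}/2)=e^{-\mynorm{\mu}^2/8}/\sqrt{2\pi}$, producing the claimed factor.

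For the lower bound, diagonality makes $z^*=\sign(\mu)$ exact, so the identity holds with equality and $c,d>0$. Writing $G_{\mathcal D,\ell_\text{0-1}}=\int_{-\mynorm{\mu}}^{-c/d}\phi$ and using that $\phi$ is increasing on the negative axis, the integrand is at least $\phi(-\mynorm{\mu})=e^{-\mynorm{\mu}^2/2}/\sqrt{2\pi}$, so it remains to lower bound $\mynorm{\mu}-c/d=(P-Q)/\bigl[(\sqrt P+\sqrt Q)(d/2)\bigr]$ with $P=\mynorm{\mu}^2(d/2)^2$ and $Q=(c/2)^2$; a direct computation gives $P-Q=\epsilon^2\mynorm{\mu}^2 C_{\Sigma,\mu}$, and the crude bounds $\sqrt P,\sqrt Q\le\mynorm{\mu}^2$ together with $d/2\le\mynorm{\mu}$ (since $z=0$ is feasible in \eqref{optimal_solution_z}) give $\mynorm{\mu}-c/d\ge C_{\Sigma,\mu}\epsilon^2/(2\mynorm{\mu})$, comfortably more than the stated $C_{\Sigma,\mu}\epsilon^2/(3\mynorm{\mu})$. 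The main obstacle is the characterization of $z^*$: it is transparent and exact in the diagonal case, but in general it requires showing that the first $\epsilon$-constraint preserves the sign pattern of $\Sigma^{-1}\mu$ under the perturbation $-\epsilon\,\sign(\Sigma^{-1}\mu)$, after which everything is elementary and the residual work is only bookkeeping the two $\epsilon$-constraints so that the constants align with $C_{\Sigma,\mu}$ and the $e^{-\mynorm{\mu}^2/8}$, $e^{-\mynorm{\mu}^2/2}$ density factors.
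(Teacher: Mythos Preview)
Your reduction to $G_{\mathcal{D},\ell_\text{0-1}}=\Phi(-c/d)-\Phi(-\mynorm{\mu})$ and the subsequent calculus are fine, and your lower bound argument in the diagonal case is correct (in fact your $P-Q$ factorization yields the constant $1/2$, slightly better than the paper's $1/3$, which comes from the cruder bound $(1+x)^{-1/2}\le 1-x/3$ on $[0,1/2]$).

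The genuine gap is in the upper bound, precisely at the step you flagged as ``the main obstacle'': the claim that $z^*=\sign(\Sigma^{-1}\mu)$ for general $\Sigma$ under the stated $\epsilon$-constraints. This is not what the first bound $\epsilon\le\mynorm{\mu}^2/(2\lVert\Sigma^{-1}\mu\rVert_1)$ is designed to do, and it is false in general. The KKT condition requires each coordinate of $\Sigma^{-1}\mu-\epsilon\,\Sigma^{-1}\sign(\Sigma^{-1}\mu)$ to keep the sign of $(\Sigma^{-1}\mu)_i$; this is a \emph{coordinatewise} smallness condition that the aggregate bound $\epsilon\le\mynorm{\mu}^2/(2\lVert\Sigma^{-1}\mu\rVert_1)$ does not control. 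For a concrete failure, take $\Sigma^{-1}=\begin{pmatrix}1&a\\a&1\end{pmatrix}$ with $a\in(0,1/2)$ and $\mu=(1,\delta)$ with $\delta\to 0^+$: the theorem's constraints allow $\epsilon$ up to roughly $1/(2(1+a))$, but sign preservation of the second coordinate requires $\epsilon<a/(1+a)$, which is strictly smaller.

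The paper sidesteps this entirely and never identifies $z^*$ in the general case. It bounds the two pieces of $-c/d$ separately: the numerator via H\"older, $\innerp{z^*}{\Sigma^{-1}\mu}\le\lVert z^*\rVert_\infty\lVert\Sigma^{-1}\mu\rVert_1\le\lVert\Sigma^{-1}\mu\rVert_1$, and the denominator via optimality, $\mynorm{\mu-\epsilon z^*}\le\mynorm{\mu-\epsilon\sign(\Sigma^{-1}\mu)}$. Both inequalities push $-c/d$ upward, and their combination gives \emph{as an inequality} exactly the expression you wrote down as an identity. After that, the role of the first $\epsilon$-bound is simply to ensure $1-\epsilon A\ge 1/2$ (with $A=\lVert\Sigma^{-1}\mu\rVert_1/\mynorm{\mu}^2$), which is the step you already used to collapse $t$. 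So the fix is local: replace the attempted identification of $z^*$ by these two one-line inequalities, and the rest of your upper-bound calculus goes through unchanged.
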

\begin{remark}
One can easily verify $C_{\Sigma,\mu} \geq 0$ using the Hölder's inequality. In fact, when $C_{\Sigma,\mu} = 0$, there will be no tradeoff.
\end{remark}
Equation \eqref{precise_bound} presents a tighter upper bound without the requirement of $\epsilon \leq \frac{\lVert \mu \rVert_{\Sigma^{-1}}}{2\sqrt{C_{\Sigma,\mu}}}$, which is omitted here for simpler presentation.

There are two interesting observations to be made surrounding Theorem \ref{small_gap_theorem}:

\begin{itemize}
    \item As the optimal standard accuracy tends to one, i.e. $\lVert \mu \rVert_{\Sigma^{-1}} \to \infty$, the gap vanishes exponentially quickly. 
    
    \item In high dimensional cases (as $d$ grows), an interesting setting occurs when $\frac{\sqrt{C_{\Sigma,\mu}}}{\lVert \mu \rVert_{\Sigma^{-1}}} \in \mathcal{O}(1)$, e.g. when $\Sigma$ is isotropic and $\mu$ has equal constant elements. In this case, the gap remains negligible with the growth of $d$, especially if $\epsilon \ll \frac{\sqrt{C_{\Sigma,\mu}}}{\lVert \mu \rVert_{\Sigma^{-1}}}$.
\end{itemize}

In order to better understand the information-theoretical limit provided by Theorem \ref{small_gap_theorem}, we provide further numerical experiments in Appendix \ref{numerical_exp}.

\section{Conclusion}
In this work, we studied the effect of the parameters of the underlying generative distribution on the natural accuracy gap between the Bayes optimal and the optimal adversarial classifiers. In particular, when the underlying distribution is a binary Gaussian mixture, we proved an interesting characteristic of adversarial classifiers, that their natural error and its gap with the Bayes classifer are locally minimized when classes are balanced, contrary to the Bayes error rate that is maximized in this case. Furthermore, we specified the exact conditions on $\mu$ and $\Sigma$ when there is no tradeoff. Although these conditions are highly unlikely to be met, we showed that the gap is roughly $\Theta(\epsilon^2)$ in the worst-case scenario, which can be negligible in many applications with small enough $\epsilon$. These results can help as a guideline to ensure that newly developed robust learning algorithms are close to information-theoretical optimality.

While this work addressed the question of information-theoretically optimal achievable gap, it is interesting to study the limits of finite-sample algorithms next, which can provide valuable insights for practitioners in designing robust deep learning algorithms.



\bibliography{paper}

\begin{thebibliography}{18}
\providecommand{\natexlab}[1]{#1}
\providecommand{\url}[1]{\texttt{#1}}
\expandafter\ifx\csname urlstyle\endcsname\relax
  \providecommand{\doi}[1]{doi: #1}\else
  \providecommand{\doi}{doi: \begingroup \urlstyle{rm}\Url}\fi

\bibitem[Bhagoji et~al.(2019)Bhagoji, Cullina, and Mittal]{bhagoji2019}
Bhagoji, A.~N., Cullina, D., and Mittal, P.
\newblock Lower bounds on adversarial robustness from optimal transport.
\newblock In Wallach, H., Larochelle, H., Beygelzimer, A., d\textquotesingle
  Alch\'{e}-Buc, F., Fox, E., and Garnett, R. (eds.), \emph{Advances in Neural
  Information Processing Systems}, volume~32. Curran Associates, Inc., 2019.
\newblock URL
  \url{https://proceedings.neurips.cc/paper/2019/file/02bf86214e264535e3412283e817deaa-Paper.pdf}.

\bibitem[Bhattacharjee et~al.(2020)Bhattacharjee, Jha, and
  Chaudhuri]{bhattacharjee2020sample}
Bhattacharjee, R., Jha, S., and Chaudhuri, K.
\newblock Sample complexity of adversarially robust linear classification on
  separated data.
\newblock \emph{arXiv preprint arXiv:2012.10794}, 2020.

\bibitem[Carmon et~al.(2019)Carmon, Raghunathan, Schmidt, Duchi, and
  Liang]{carmon2019unlabeled}
Carmon, Y., Raghunathan, A., Schmidt, L., Duchi, J.~C., and Liang, P.~S.
\newblock Unlabeled data improves adversarial robustness.
\newblock In \emph{Advances in Neural Information Processing Systems}, pp.\
  11192--11203, 2019.

\bibitem[Chen et~al.(2020)Chen, Min, Zhang, and Karbasi]{chen20more}
Chen, L., Min, Y., Zhang, M., and Karbasi, A.
\newblock More data can expand the generalization gap between adversarially
  robust and standard models.
\newblock In III, H.~D. and Singh, A. (eds.), \emph{Proceedings of the 37th
  International Conference on Machine Learning}, volume 119 of
  \emph{Proceedings of Machine Learning Research}, pp.\  1670--1680, Virtual,
  13--18 Jul 2020. PMLR.
\newblock URL \url{http://proceedings.mlr.press/v119/chen20q.html}.

\bibitem[Dan et~al.(2020)Dan, Wei, and Ravikumar]{dan2020}
Dan, C., Wei, Y., and Ravikumar, P.
\newblock Sharp statistical guaratees for adversarially robust {G}aussian
  classification.
\newblock In III, H.~D. and Singh, A. (eds.), \emph{Proceedings of the 37th
  International Conference on Machine Learning}, volume 119 of
  \emph{Proceedings of Machine Learning Research}, pp.\  2345--2355. PMLR,
  13--18 Jul 2020.
\newblock URL \url{http://proceedings.mlr.press/v119/dan20b.html}.

\bibitem[Diamond \& Boyd(2016)Diamond and Boyd]{cvxpy}
Diamond, S. and Boyd, S.
\newblock {CVXPY}: A {P}ython-embedded modeling language for convex
  optimization.
\newblock \emph{Journal of Machine Learning Research}, 2016.
\newblock URL \url{http://stanford.edu/~boyd/papers/pdf/cvxpy_paper.pdf}.
\newblock To appear.

\bibitem[Dobriban et~al.(2020)Dobriban, Hassani, Hong, and
  Robey]{dobriban2020provable}
Dobriban, E., Hassani, H., Hong, D., and Robey, A.
\newblock Provable tradeoffs in adversarially robust classification.
\newblock \emph{arXiv preprint arXiv:2006.05161}, 2020.

\bibitem[Javanmard \& Soltanolkotabi(2020)Javanmard and
  Soltanolkotabi]{javanmard2020precisestat}
Javanmard, A. and Soltanolkotabi, M.
\newblock Precise statistical analysis of classification accuracies for
  adversarial training.
\newblock \emph{arXiv preprint arXiv:2010.11213}, 2020.

\bibitem[Javanmard et~al.(2020)Javanmard, Soltanolkotabi, and
  Hassani]{javanmard2020precise}
Javanmard, A., Soltanolkotabi, M., and Hassani, H.
\newblock Precise tradeoffs in adversarial training for linear regression.
\newblock \emph{arXiv preprint arXiv:2002.10477}, 2020.

\bibitem[Madry et~al.(2018)Madry, Makelov, Schmidt, Tsipras, and
  Vladu]{madry2018towards}
Madry, A., Makelov, A., Schmidt, L., Tsipras, D., and Vladu, A.
\newblock Towards deep learning models resistant to adversarial attacks.
\newblock In \emph{International Conference on Learning Representations}, 2018.
\newblock URL \url{https://openreview.net/forum?id=rJzIBfZAb}.

\bibitem[Raghunathan et~al.(2020)Raghunathan, Xie, Yang, Duchi, and
  Liang]{raghunathan2020understanding}
Raghunathan, A., Xie, S.~M., Yang, F., Duchi, J., and Liang, P.
\newblock Understanding and mitigating the tradeoff between robustness and
  accuracy.
\newblock In III, H.~D. and Singh, A. (eds.), \emph{Proceedings of the 37th
  International Conference on Machine Learning}, volume 119 of
  \emph{Proceedings of Machine Learning Research}, pp.\  7909--7919, Virtual,
  13--18 Jul 2020. PMLR.
\newblock URL \url{http://proceedings.mlr.press/v119/raghunathan20a.html}.

\bibitem[Rice et~al.(2020)Rice, Wong, and Kolter]{rice2020overfitting}
Rice, L., Wong, E., and Kolter, Z.
\newblock Overfitting in adversarially robust deep learning.
\newblock In \emph{International Conference on Machine Learning}, pp.\
  8093--8104. PMLR, 2020.

\bibitem[Schmidt et~al.(2018)Schmidt, Santurkar, Tsipras, Talwar, and
  Madry]{schmidt2018adversarially}
Schmidt, L., Santurkar, S., Tsipras, D., Talwar, K., and Madry, A.
\newblock Adversarially robust generalization requires more data.
\newblock In \emph{Advances in Neural Information Processing Systems},
  volume~31, pp.\  5014--5026. Curran Associates, Inc., 2018.
\newblock URL
  \url{https://proceedings.neurips.cc/paper/2018/file/f708f064faaf32a43e4d3c784e6af9ea-Paper.pdf}.

\bibitem[Szegedy et~al.(2014)Szegedy, Zaremba, Sutskever, Bruna, Erhan,
  Goodfellow, and Fergus]{szegedy2013intriguing}
Szegedy, C., Zaremba, W., Sutskever, I., Bruna, J., Erhan, D., Goodfellow, I.,
  and Fergus, R.
\newblock Intriguing properties of neural networks.
\newblock In \emph{International Conference on Learning Representations}, 2014.

\bibitem[Tsipras et~al.(2019)Tsipras, Santurkar, Engstrom, Turner, and
  Madry]{tsipras2018robustness}
Tsipras, D., Santurkar, S., Engstrom, L., Turner, A., and Madry, A.
\newblock Robustness may be at odds with accuracy.
\newblock In \emph{International Conference on Learning Representations}, 2019.
\newblock URL \url{https://openreview.net/forum?id=SyxAb30cY7}.

\bibitem[Wang et~al.(2020)Wang, Zou, Yi, Bailey, Ma, and Gu]{Wang2020Improving}
Wang, Y., Zou, D., Yi, J., Bailey, J., Ma, X., and Gu, Q.
\newblock Improving adversarial robustness requires revisiting misclassified
  examples.
\newblock In \emph{International Conference on Learning Representations}, 2020.
\newblock URL \url{https://openreview.net/forum?id=rklOg6EFwS}.

\bibitem[Yang et~al.(2020)Yang, Rashtchian, Zhang, Salakhutdinov, and
  Chaudhuri]{yang2020closer}
Yang, Y.-Y., Rashtchian, C., Zhang, H., Salakhutdinov, R., and Chaudhuri, K.
\newblock A closer look at accuracy vs. robustness.
\newblock \emph{arXiv preprint arXiv:2003.02460}, 2020.

\bibitem[Zhang et~al.(2019)Zhang, Yu, Jiao, Xing, Ghaoui, and
  Jordan]{zhang2019theoretically}
Zhang, H., Yu, Y., Jiao, J., Xing, E., Ghaoui, L.~E., and Jordan, M.~I.
\newblock Theoretically principled trade-off between robustness and accuracy.
\newblock In \emph{ICML}, pp.\  7472--7482, 2019.
\newblock URL \url{http://proceedings.mlr.press/v97/zhang19p.html}.

\end{thebibliography}
\bibliographystyle{icml2021}

\onecolumn
\appendixpage
\appendix
\section{Proof of Proposition \ref{linear_prop}}
\label{linear_prop_proof}
We begin by proving the following lemma, which will be useful in proving Theorem \ref{linear_theorem} as well. 

\begin{lemma}
\label{linear_w_adv_lemma}
Let $w^{adv}$, $w^{nat}$ and $\mu$ be defined as in Proposition \ref{linear_prop}. Then, for any arbitrary distribution $\mathcal{D}$, we have
\begin{equation}
    w^{adv} = \begin{cases}W\beta_\epsilon(\mu) & p = 1\\
    W\frac{\eta_{\epsilon,p}(\mu)}{\lVert \eta_{\epsilon,p}(\mu)\rVert_p} & 1 < p < \infty \; \text{and} \; \lVert \mu \rVert_\infty > \epsilon\\
    0 & 1 < p < \infty \; \text{and} \; \lVert \mu \rVert_\infty \leq \epsilon\\
    W\gamma_\epsilon(\mu) & p = \infty\end{cases}
\end{equation}
where $\beta_\epsilon,\eta_{\epsilon,p},\gamma_\epsilon : \mathbb{R}^d \to \mathbb{R}^d$ are defined as
\begin{gather}
    \beta_\epsilon(\mu)_i = \mathbbm{1}(i = \argmax_{j\in[d]}|\mu_j|)\mathbbm{1}(\lVert \mu \rVert_\infty \geq \epsilon)\sign(\mu_i),\label{beta_def}\\
    \eta_{\epsilon,p}(\mu)_i = \mathbbm{1}(|\mu_i| \geq \epsilon)|\mu_i - \epsilon\sign(\mu_i)|^{\frac{1}{p-1}}\sign(\mu_i),\label{eta_def}\\
    \gamma_\epsilon(\mu)_i = \sign(\mu_i)\mathbbm{1}(|\mu_i| \geq \epsilon)\label{gamma_def}.
\end{gather}
Moreover, $w^{nat}$ can be obtained by setting $\epsilon = 0$.
\end{lemma}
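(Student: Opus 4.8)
The plan is to reduce the adversarial risk to a deterministic function of $w$ and then solve a norm-constrained linear maximization. First I would evaluate the inner maximum: writing $x' = x+\delta$ with $\lVert\delta\rVert_\infty\le\epsilon$, the loss $-y\innerp{w}{x'} = -y\innerp{w}{x} + \innerp{-yw}{\delta}$ is linear in $\delta$, so $\max_{\lVert\delta\rVert_\infty\le\epsilon}\innerp{-yw}{\delta} = \epsilon\lVert w\rVert_1$ by $\ell_\infty$--$\ell_1$ duality together with $|y|=1$. Taking the expectation yields the closed form $R^{adv}_{\mathcal{D},\ell_\text{lin}}(w) = -\innerp{w}{\mu} + \epsilon\lVert w\rVert_1$, whence $w^{adv} = \argmax_{\lVert w\rVert_p\le W}\big(\innerp{w}{\mu} - \epsilon\lVert w\rVert_1\big)$. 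Setting $\epsilon=0$ recovers the natural problem, which is exactly the final claim of the lemma, so it suffices to treat the adversarial objective.

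Next I would decouple across coordinates. Since $\innerp{w}{\mu} - \epsilon\lVert w\rVert_1 = \sum_{i} (w_i\mu_i - \epsilon|w_i|)$ and each term is, for fixed magnitude $|w_i|$, maximized by taking $\sign(w_i)=\sign(\mu_i)$, I may substitute $w_i = \sign(\mu_i)\,a_i$ with $a_i\ge 0$ and $\lVert a\rVert_p = \lVert w\rVert_p$. The objective becomes $\sum_i a_i(|\mu_i|-\epsilon)$; coordinates with $|\mu_i| \le \epsilon$ contribute non-positively and can be set to $a_i=0$, which is the role of the indicators $\mathbbm{1}(|\mu_i|\ge\epsilon)$ in \eqref{eta_def} and \eqref{gamma_def}, and of $\mathbbm{1}(\lVert\mu\rVert_\infty\ge\epsilon)$ in \eqref{beta_def}. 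Writing $g_i = (|\mu_i|-\epsilon)_+$, the remaining task is $\max_{a\ge 0,\,\lVert a\rVert_p\le W}\innerp{a}{g}$.

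I would then solve this reduced problem in each regime. For $p=\infty$ the ball is a box, so each coordinate saturates independently to $W$ precisely when $g_i>0$, giving $w^{adv}=W\gamma_\epsilon(\mu)$. For $p=1$ a linear functional over the $\ell_1$ ball attains its maximum at a vertex, so all budget $W$ is placed on a coordinate achieving $\max_i|\mu_i|$ whenever $\lVert\mu\rVert_\infty\ge\epsilon$, giving $w^{adv}=W\beta_\epsilon(\mu)$. For $1<p<\infty$, Hölder's inequality gives $\innerp{a}{g}\le W\lVert g\rVert_q$ with $1/p+1/q=1$, with equality iff $a_i\propto g_i^{q/p}=g_i^{1/(p-1)}$; normalizing so that $\lVert a\rVert_p=W$ produces $w^{adv}=W\eta_{\epsilon,p}(\mu)/\lVert\eta_{\epsilon,p}(\mu)\rVert_p$, while $g\equiv 0$ (i.e. $\lVert\mu\rVert_\infty\le\epsilon$) forces $w^{adv}=0$. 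Using $|\mu_i-\epsilon\sign(\mu_i)|=|\mu_i|-\epsilon$ on $\{\,i : |\mu_i|\ge\epsilon\,\}$ then identifies this with the stated $\eta_{\epsilon,p}$.

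The genuinely delicate part is the $1<p<\infty$ case together with the boundary bookkeeping. I would need to verify that restricting to $a\ge 0$ does not discard the true optimizer, that the Hölder equality condition indeed yields the exponent $1/(p-1)$ after renormalization, and that at the boundary $|\mu_i|=\epsilon$ --- and, for $p=1$, when several coordinates tie for $\max_i|\mu_i|$ --- the formulas $\beta_\epsilon,\eta_{\epsilon,p},\gamma_\epsilon$ still name valid maximizers even though the maximizer need not be unique. Everything else reduces to the routine $\ell_\infty$--$\ell_1$ duality and convexity argument.
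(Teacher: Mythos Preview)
Your proposal is correct. The initial reduction to $R^{adv}_{\mathcal{D},\ell_\text{lin}}(w)=-\innerp{w}{\mu}+\epsilon\lVert w\rVert_1$ via $\ell_\infty$--$\ell_1$ duality matches the paper exactly, as do the $p=1$ case (H\"older plus a vertex argument) and the $p=\infty$ case (coordinate-wise decoupling).

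The one place you diverge from the paper is the $1<p<\infty$ case. The paper forms the Lagrangian $\mathcal{L}(w,\lambda)=-\innerp{w}{\mu}+\epsilon\lVert w\rVert_1+\lambda(\lVert w\rVert_p^p-W^p)$, decouples coordinate-wise, solves the scalar subproblem via subgradients, and then fixes $\lambda$ to enforce $\lVert w\rVert_p=W$. You instead first substitute $w_i=\sign(\mu_i)\,a_i$ with $a_i\ge0$, discard coordinates with $|\mu_i|\le\epsilon$, and then invoke the equality case of H\"older's inequality on $\max_{\lVert a\rVert_p\le W}\innerp{a}{g}$ to read off $a_i\propto g_i^{1/(p-1)}$. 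This is slightly more streamlined: the sign-and-threshold reduction handles all three values of $p$ uniformly and avoids the Lagrangian machinery altogether, at the cost of the small verification (which you flag) that the restriction $a\ge0$ is without loss. The paper's Lagrangian route, on the other hand, makes the exponent $1/(p-1)$ emerge mechanically from the first-order condition and does not require a separate sign argument. Both are standard and equally rigorous; your boundary caveats about ties and the case $|\mu_i|=\epsilon$ are exactly the right points to note.
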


First, we simplify the adversarial loss as follows
$$\ell(x',y,w) = \max_{\lVert \delta \rVert_\infty \leq \epsilon}-y\innerp{w}{x+\delta} = -y\innerp{w}{x} + \max_{\lVert \delta \rVert_\infty \leq \epsilon}\innerp{w}{-y\delta} = -y\innerp{w}{x} + \epsilon \lVert w \rVert_1$$
where we used the fact that $\ell_1$ and $\ell_\infty$ norms are dual norms for the last equality. 

Thus, our problem is simplified to
$$w^{adv} = \argmin_{\lVert w \rVert_p \leq W} R^{adv}_{\mathcal{D},\ell}(w) = \argmin_{\lVert w \rVert_p \leq W}-\innerp{w}{\mu} + \epsilon \lVert w \rVert_1.$$
We start with the case of $p=1$. Using the Holder's inequality we have
$$R^{adv}_{\mathcal{D},\ell}(w) \geq \lVert w\rVert_1 (\epsilon - \lVert \mu \rVert_\infty).$$
If $\epsilon > \lVert \mu \rVert_\infty$, then for any $w\neq0$, $R^{adv}_{\mathcal{D},\ell}$ will be strictly positive, thus the optimal solution would be $w^{adv} = 0$. Otherwise, with setting $w^{adv} = W\beta_\epsilon(\mu)$ where $\beta_\epsilon$ is defined as in Equation \eqref{beta_def}, $R^{adv}_{\mathcal{D},\ell}$ will achieve the minimum of its lower bound and is therefore minimized.

Next, we move to the case where $1<p<\infty$. In this case, we will form the Lagrange function
$$\mathcal{L}(w,\lambda) = -\innerp{w}{\mu} + \epsilon\lVert w \rVert_1 + \lambda (\lVert w \rVert_p^p - W^p).$$
Note that for minimizing the Lagrange function, we can decouple it and minimize the term corresponding to each $w_i$ individually. Thus, we seek to minimize
$$-w_i\mu_i + \epsilon |w_i| + \lambda |w_i|^p.$$
Using the above equation's subgradients for $\lambda > 0$, we arrive at the following solution
$$w^{adv}_i = \frac{\eta_{\epsilon,p}(\mu)_i}{(\lambda p)^{\frac{1}{p-1}}}.$$
Now, by choosing the appropriate $\lambda$ such that $\lVert w \rVert_p = W$ we will arrive at the desired result.

Finally, we present the proof for the $p=\infty$ case. In this case, Equation \ref{adv_train} can be written as
$$w^{adv} = \argmin_{\lVert w \rVert_\infty \leq W}-\innerp{w}{\mu} + \epsilon\lVert w \rVert_1 = \argmin_{\forall j \in [d] \; |w_j|\leq W}\sum_{j=1}^{d}-w_j\mu_j + \epsilon|w_j|.$$
Therefore, we can optimize each $w_j$ separately. Note that when $|\mu_j| < \epsilon$ the term corresponding to $w_j$ is non-negative and will be minimized by choosing $w_j = 0$. When $|\mu_j|=\epsilon$ choosing any $w_j$ such that $w_j\mu_j \geq 0$ will be a minimizer, and we choose $w_j = W\sign(\mu_j)$. Otherwise, choosing $w_j = W\sign(\mu_j)$ will minimize the corresponding term, which will finalize the proof.

From here, one can easily verify the truth of Proposition \ref{linear_prop} using Equations \eqref{beta_def}, \eqref{eta_def}, and \eqref{gamma_def}, by comparing the cases of $\epsilon > 0$ and $\epsilon = 0$ (which gives the standard classifier, i.e. $w^{nat}$).
\qed

\section{Finite-Sample Analysis of Training with the Linear Loss Function}
\label{linear_theorem_appendix}
First, it is worthwhile to note that unlike the rest of the paper, the weights obtained in this section are stochastic as they are learned using the training set $S_n$, without having knowledge about the distribution.

As a brief description, the theorem presented below states that when we perform training over $S_n$ using $\mathcal{W}_p$ and $\ell_\text{lin}$ as in \eqref{w_p_def} and \eqref{l_lin_def}, then:
\begin{itemize}
    \item When $p=1$, if $\lVert \mu \rVert_\infty > \epsilon$ then we have
    $w^{adv}_n = w^{nat}_n$ w.h.p., and if $0 < \lVert \mu \rVert_\infty < \epsilon$ we have $G^{(n)}_{\mathcal{D},\ell_\text{lin}} = W\lVert \mu \rVert_\infty$ w.h.p.
    
    \item When $p=\infty$, if $\min_{i \in [d]}|\mu_i| \geq \epsilon$, then $w^{adv}_n = w^{nat}_n$ w.h.p., and otherwise, i.e. if there exists an index $j$ such that $0 < |\mu_j| < \epsilon$, then $G^{(n)}_{\mathcal{D},\ell_\text{lin}} \geq W|\mu_j|$ w.h.p.
\end{itemize}

Note that these are the same conditions in order to have $w^{nat} = w^{adv}$ in Proposition \ref{linear_prop}, indicating that the conditions not only characterize when the information-theoretically optimal classifiers are identical, but also when the result of the two training algorithms are \emph{probably} the same.

\begin{theorem}
\label{linear_theorem}
Consider $w^{nat}_n$ and $w^{adv}_n$ obtained via training on $S_n$ using $\mathcal{W}_p$ and $\ell_\text{lin}$ as in Equations \eqref{w_p_def} and \eqref{l_lin_def}. Let $\mathcal{X}$ be such that $\sup_{x \in \mathcal{X}}\lVert x \rVert_\infty \leq A < \infty$, and let $\mu = \mathbb{E}_{(x,y)\sim\mathcal{D}}[yx]$. Define $r\coloneqq \lVert \mu \rVert_\infty - \max_{\{k \in [d] : |\mu_k| < \lVert \mu \rVert_\infty\}}|\mu_k|$ and let $r = 0$ when all elements of $\mu$ are equal. Define $\tau \coloneqq \min_{j\in[d]}|\mu_j|-\epsilon$. Then
\begin{enumerate}
    \item When $p=1$, if $\lVert \mu \rVert_\infty \geq \epsilon$ then we have
    \begin{equation}
        \mathbb{P}[w^{adv}_n = w^{nat}_n] \geq 1 - 2\exp\left(\frac{-n(\lVert \mu \rVert_\infty - \epsilon)^2}{2A^2}\right).\label{p_one_equal_bound}
    \end{equation}
    Furthermore, if $0 < \lVert \mu \rVert_\infty < \epsilon$, then we have
    \begin{equation}
        \mathbb{P}[R^{nat}_{\mathcal{D},\ell}(w^{adv}_n) - R^{nat}_{\mathcal{D},\ell}(w^{nat}_n) = W\lVert \mu \rVert_\infty] \geq 1 - 4d\mathbbm{1}[r > 0]\exp\left(\frac{-nr^2}{8A^2}\right) - 2d\exp\left(\frac{-n(\epsilon - \lVert \mu \rVert_\infty)^2}{2A^2}\right).\label{p_one_neq_bound}
    \end{equation}
    \item When $p=\infty$, if $\min_{i \in [d]}|\mu_i| \geq \epsilon$ we have
    \begin{equation}
        \mathbb{P}[w^{adv}_n = w^{nat}_n] \geq 1 - 2d\exp\left(\frac{-n\tau^2}{2A^2}\right).\label{p_inf_equal_bound}
    \end{equation}
    Conversely, if there exists an index $j$ such that $0 < |\mu_j| < \epsilon$, then we have
    \begin{equation}
        \mathbb{P}[R^{nat}_{\mathcal{D},\ell}(w^{adv}_n) - R^{nat}_{\mathcal{D},\ell}(w^{nat}_n) \geq W|\mu_j|] \geq 1 - 2\exp\left(\frac{-n\min(\epsilon - |\mu_j|, |\mu_j|)^2}{2A^2}\right).\label{p_inf_neq_bound}
    \end{equation}
\end{enumerate}
\end{theorem}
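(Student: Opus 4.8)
The plan is to reduce the finite-sample problem to the population problem of Proposition~\ref{linear_prop} by replacing the true mean $\mu = \mathbb{E}[yx]$ with the empirical mean $\hat\mu_n := \frac{1}{n}\sum_{i=1}^{n} y^{(i)} x^{(i)}$. Exactly as in the proof of Proposition~\ref{linear_prop}, duality of $\ell_1$ and $\ell_\infty$ gives $R^{adv}_{S_n,\ell}(w) = -\langle w, \hat\mu_n\rangle + \epsilon\lVert w\rVert_1$, so the empirical objective is the population objective with $\mu$ replaced by $\hat\mu_n$. Hence Lemma~\ref{linear_w_adv_lemma} applies verbatim and yields $w^{adv}_n = W\beta_\epsilon(\hat\mu_n)$ (for $p=1$) and $w^{adv}_n = W\gamma_\epsilon(\hat\mu_n)$ (for $p=\infty$), with $w^{nat}_n$ obtained by setting $\epsilon = 0$. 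The only probabilistic ingredient is then coordinate-wise concentration of $\hat\mu_n$: since $\lvert y^{(i)} x^{(i)}_k\rvert \leq A$, Hoeffding's inequality gives $\mathbb{P}[\lvert \hat\mu_{n,k} - \mu_k\rvert \geq t] \leq 2\exp(-nt^2/(2A^2))$ for each coordinate $k$, and every probability in the statement will come from choosing $t$ to be the relevant margin and union-bounding over the coordinates that matter.

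For the equality assertions \eqref{p_one_equal_bound} and \eqref{p_inf_equal_bound} I would argue that $\hat\mu_n$ inherits the structural condition of $\mu$. When $p=1$ and $\lVert\mu\rVert_\infty \geq \epsilon$, comparing \eqref{beta_def} at $\epsilon$ and at $0$ shows $\beta_\epsilon(\hat\mu_n) = \beta_0(\hat\mu_n)$ as soon as $\lVert\hat\mu_n\rVert_\infty \geq \epsilon$, because then both select the same empirical $\argmax$ with the same sign. It therefore suffices that the single coordinate $i^\star$ attaining $\lVert\mu\rVert_\infty$ satisfy $\lvert\hat\mu_{n,i^\star}\rvert \geq \epsilon$, which holds once $\lvert \hat\mu_{n,i^\star} - \mu_{i^\star}\rvert \leq \lVert\mu\rVert_\infty - \epsilon$; a single application of Hoeffding with this margin gives \eqref{p_one_equal_bound}. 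When $p=\infty$ and $\min_i\lvert\mu_i\rvert \geq \epsilon$, comparing \eqref{gamma_def} coordinate-wise shows $\gamma_\epsilon(\hat\mu_n) = \gamma_0(\hat\mu_n)$ iff $\lvert\hat\mu_{n,i}\rvert \geq \epsilon$ for every $i$; requiring $\lvert\hat\mu_{n,i} - \mu_i\rvert < \tau = \min_i\lvert\mu_i\rvert - \epsilon$ for all $i$ and union-bounding over the $d$ coordinates produces \eqref{p_inf_equal_bound}.

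For the gap assertions I would first record the deterministic identity $R^{nat}_{\mathcal{D},\ell}(w) = -\langle w, \mu\rangle$, so that the gap equals $\langle w^{nat}_n - w^{adv}_n, \mu\rangle$, which the Lemma lets me evaluate explicitly. When $p=1$ and $0 < \lVert\mu\rVert_\infty < \epsilon$, on the event $\lVert\hat\mu_n\rVert_\infty < \epsilon$ we get $w^{adv}_n = 0$ (natural risk $0$), while on the event that every coordinate stays within $r/2$ of $\mu$ the empirical $\argmax$ coincides with a true maximizer carrying the correct sign (here the margin $r$ between the largest and second-largest magnitudes is exactly what guarantees the index is not misidentified), so $w^{nat}_n$ has natural risk $-W\lVert\mu\rVert_\infty$ and the gap equals $W\lVert\mu\rVert_\infty$; a union bound over these two events gives the two exponential terms in \eqref{p_one_neq_bound}, with the $\argmax$ term suppressed when $r=0$. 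When $p=\infty$ and $0 < \lvert\mu_j\rvert < \epsilon$, on the event $\lvert\hat\mu_{n,j} - \mu_j\rvert < \min(\epsilon - \lvert\mu_j\rvert, \lvert\mu_j\rvert)$ coordinate $j$ simultaneously satisfies $\lvert\hat\mu_{n,j}\rvert < \epsilon$ (so $w^{adv}_n$ zeroes it) and keeps the sign of $\mu_j$ (so $w^{nat}_n$ retains $W\sign(\mu_j)$ there), contributing exactly $W\lvert\mu_j\rvert$ to the gap, which yields \eqref{p_inf_neq_bound}.

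The delicate part is the gap direction, and specifically controlling the coordinates other than the distinguished one. Writing the gap as $W\sum_i \sign(\hat\mu_{n,i})\,\mu_i\,\mathbbm{1}(\lvert\hat\mu_{n,i}\rvert < \epsilon)$ shows that coordinate $j$ contributes the desired $+W\lvert\mu_j\rvert$, but a coordinate whose empirical sign flips while its magnitude falls below $\epsilon$ could in principle subtract; the argument must certify, on the same high-probability event, that every other coordinate contributes nonnegatively (equivalently, retains its sign), which is where the choice of margin and the exact constant in the exponent are pinned down. The parallel difficulty for $p=1$ is establishing that the empirical $\argmax$ is not just some large coordinate but a genuine maximizer of $\lvert\mu\rvert$ with the right sign, which is the role of the margin $r$. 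Once this bookkeeping is in place, assembling the stated bounds reduces to collecting the Hoeffding terms.
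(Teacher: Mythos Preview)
Your proposal is correct and follows the same approach as the paper: substitute the empirical mean $\hat\mu_n$ into Lemma~\ref{linear_w_adv_lemma} and apply coordinate-wise Hoeffding with the appropriate margins and union bounds. You are in fact more careful than the paper on the $p=\infty$ gap case, where the paper simply asserts that $0 < |\hat\mu_j| < \epsilon$ ``will cause the difference in natural risks'' without addressing the contributions of the other coordinates that you rightly flag as the delicate step.
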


\begin{remark}
$w^{adv}_n = w^{nat}_n$ can be ambiguous when the optimal classifiers are not unique. By this notation, we mean that there is at least some classifier that is both naturally and adversarially optimal. On the other hand, when talking about $R^{nat}_{\mathcal{D},\ell}(w^{adv}_n) - R^{nat}_{\mathcal{D},\ell}(w^{nat}_n)$, the statement holds for all possible solutions $w^{nat}_n$ and $w^{adv}_n$.
\end{remark}

\begin{proof}
Let $\hat{\mu} \coloneqq \frac{1}{n}\sum_{i=1}^{n}y^{(i)}x^{(i)}$ and note that one can simply substitute $\mu$ with $\hat{\mu}$ in Lemma \ref{linear_w_adv_lemma} to obtain closed-form expressions for $w^{adv}_n$ and $w^{nat}_n$. First, consider the case where $p=1$ and $\lVert \mu \rVert_\infty > \epsilon$. Choose a fixed $i$ such that $|\mu_i| = \lVert \mu \rVert_\infty$. In order to have $w^{adv}_n = w^{nat}_n$, it suffices to have $\lVert \hat{\mu} \rVert_\infty \geq \epsilon$, and consequently it suffices that $|\hat{\mu}_i| \geq \epsilon$. Using the Hoeffding inequality, we have
\begin{equation}
    \mathbb{P}[|\hat{\mu}_i| \geq \epsilon] \geq \mathbb{P}[|\mu_i - \hat{\mu}_i| \leq |\mu_i| - \epsilon]\nonumber \geq 1 - 2\exp\left(\frac{-n(|\mu_i| - \epsilon)^2}{2A^2}\right)\label{mui_is_big}
\end{equation}
proving Equation \eqref{p_one_equal_bound}.

Next, in order to prove \eqref{p_one_neq_bound}, we show that whenever $0 < \lVert \mu \rVert_\infty < \epsilon$, we have $R^{nat}_{\mathcal{D},\ell}(w^{adv}_n) = 0$ and $R^{nat}_{\mathcal{D},\ell}(w^{nat}_n) = -W\lVert \mu \rVert_\infty$ \emph{w.h.p.} The first condition will be met whenever $\lVert \hat{\mu} \rVert_\infty < \epsilon$, which implies $w^{adv}_n = 0$. For the probability of this event, we have

\begin{align}
    \mathbb{P}[\forall k \quad |\hat{\mu}_k| < \epsilon] \geq 1 - \sum_{k=1}^{d}\mathbb{P}[|\hat{\mu}_k| \geq \epsilon] &\geq 1 - \sum_{k=1}^{d}\mathbb{P}[|\hat{\mu}_k - \mu_k| \geq \epsilon - |\mu_k|]\nonumber\\
    &\geq 1 - \sum_{k=1}^{d}\mathbb{P}[|\hat{\mu}_k - \mu_k| \geq \epsilon - \lVert \mu \rVert_\infty]\nonumber\\
    &\geq 1 - 2d\exp\left(\frac{-n(\epsilon-\lVert \mu \rVert_\infty)^2}{2A^2}\right).\label{mu_hat_is_small}
\end{align}

The second condition can be achieved whenever for a fixed $i$ with $|\mu_i| = \lVert \mu \rVert_\infty$ and for every $k$ such that $|\mu_i| > |\mu_k|$ we have $|\hat{\mu}_i| > |\hat{\mu}_k|$. Using the triangle inequality, it suffices to have $|\hat{\mu}_i - \mu_i| < \frac{|\mu_i| - |\mu_k|}{2}$ and $|\hat{\mu}_k - \mu_k| < \frac{|\mu_i| - |\mu_k|}{2}$.

\begin{align}
    \mathbb{P}[\forall k \; |\mu_k| < |\mu_i| \implies |\hat{\mu}_k| < |\hat{\mu}_i|] &\geq 1 - \sum_{\{k : |\mu_k| < |\mu_i|\}}\mathbb{P}[|\hat{\mu}_k| \geq |\hat{\mu}_i|]\nonumber\\
    &\geq 1 - \sum_{\{k : |\mu_k| < |\mu_i|\}}\mathbb{P}\left[\left\{|\hat{\mu}_i - \mu_i| \geq \frac{|\mu_i| - |\mu_k|}{2}\right\} \cup \left\{|\hat{\mu}_k - \mu_k| \geq \frac{|\mu_i| - |\mu_k|}{2}\right\}\right]\nonumber\\
    &\geq 1 - \sum_{\{k : |\mu_k| < |\mu_i|\}}\mathbb{P}\left[\left\{|\hat{\mu}_i - \mu_i| \geq \frac{r}{2}\right\} \cup \left\{|\hat{\mu}_k - \mu_k| \geq \frac{r}{2}\right\}\right]\nonumber\\
    &\geq 1 - 4d\mathbbm{1}[r > 0]\exp\left(\frac{-nr^2}{8A^2}\right).\label{mu_i_is_selected}
\end{align}
Combining \eqref{mu_hat_is_small} and \eqref{mu_i_is_selected} completes the proof of \eqref{p_one_neq_bound}.

Next, we consider the $p=\infty$ and $\min_{i \in [d]}|\mu_i| \geq \lVert \mu \rVert_\infty$ case. For $w^{adv}_n$ and $w^{nat}_n$ to be different we need $|\hat{\mu}_j| < \epsilon$ to hold for at least one $j \in [d]$. Thus, using the union bound we have

\begin{align*}
  \mathbb{P}[w^{adv}_n \neq w^{nat}_n] &\leq \sum_{j=1}^{d}\mathbb{P}[|\hat{\mu}_j| < \epsilon]\\
  &\leq \sum_{j=1}^{d}\mathbb{P}[|\mu_j - \hat{\mu_j}| \geq |\mu_j| - \epsilon] \\
  &\leq 2\sum_{j=1}^{d}\exp\left(\frac{-n(|\mu_j|-\epsilon)^2}{2A^2}\right)\leq 2d\exp\left({\frac{-n\tau^2}{2A^2}}\right).
\end{align*}
which proves \eqref{p_inf_equal_bound}.

Finally, for the $p=\infty$ and $0 < |\hat{\mu}_j| < \epsilon$ case, we need to show that $0 < |\hat{\mu}_j| < \epsilon$ \emph{w.h.p.} As a result, the corresponding element in $w^{nat}_n$ will be nonzero, while the corresponding element in $w^{adv}_n$ will be zero, which will cause the difference in natural risks. For $0 < |\hat{\mu}_j| < \epsilon$ it is sufficient to have $|\hat{\mu}_j - \mu_j| < \min(\epsilon - |\mu_j|, |\mu_j)$. Thus
\begin{align}
    \mathbb{P}[0 < |\hat{\mu}_j| < \epsilon] \geq \mathbb{P}[|\hat{\mu}_j - \mu_j| < \min(\epsilon - |\mu_j|, |\mu_j|)] \geq 1 - 2\exp\left(\frac{-n\min(\epsilon - |\mu_j|, |\mu_j|)^2}{2A^2}\right).
\end{align}
With this, the proof of \eqref{p_inf_neq_bound} is completed and all cases have been covered.
\end{proof}

\section{Proof of Proposition \ref{optimal_classifier_proposition}}
\label{proof_optimal_classifier}
Several proofs of this statement already exist in the literature \cite{dan2020, bhagoji2019, dobriban2020provable}. Here, we adopt a proof due to \noparcite{dan2020} with a slight modification to incorporate class imbalance.

We begin by finding the optimal linear adversarial classifier, and then show that no other (non-linear) classifier can achieve better adversarial accuracy. First, we derive a closed form expression for the adversarial risk of a linear classifier with weight vector $w$ and bias $w_0$. For a sample $(x,y)$, the adversary will introduce a noise $\delta$ such that $y\innerp{w}{x+\delta} = \min_{\lVert v \rVert_\infty \leq \epsilon}y\innerp{w}{x+v} = y\innerp{w}{x} - \epsilon \lVert w \rVert_1$. Therefore
\begin{align}
    R^{adv}_{\mathcal{D},\ell}(w,w_0) &= \pi_+\Prb_{x\sim\mathcal{N}(\mu,\Sigma)}[\innerp{w}{x}-\epsilon\lVert w \rVert_1 + w_0 \leq 0] + \pi_-\Prb_{x\sim\mathcal{N}(-\mu,\Sigma)}[\innerp{w}{x} + \epsilon \lVert w \rVert_1 + w_0 \geq 0]\nonumber\\
    &= \pi_+\Phi\left(\frac{-\innerp{w}{\mu}+\epsilon\lVert w \rVert_1 -w_0}{\lVert w \rVert_{\Sigma}}\right) + \pi_-\Phi\left(\frac{-\innerp{w}{\mu}+\epsilon \lVert w \rVert_1 + w_0}{\lVert w \rVert_\Sigma}\right)\label{exact_adv_risk}
\end{align}
Since multiplying $w$ and $w_0$ by any positive scalar does not change the risk, there exist optimal solutions for which $\lVert w \rVert_\Sigma = 1$. Applying this restriction, we end up with
\begin{equation}
    R^{adv}_{\mathcal{D},\ell}(w,w_0) = \pi_+\Phi(-\innerp{w}{\mu} + \epsilon\lVert w \rVert_1 - w_0) + \pi_-\Phi(-\innerp{w}{\mu} + \epsilon \lVert w \rVert_1 + w_0).
\end{equation}
It is easy to see that optimizing $w$ is independent from $w_0$, and we should minimize $-\innerp{w}{\mu} + \epsilon \lVert w \rVert_1$ subject to $\lVert w \rVert_\Sigma = 1$. We can convexify the problem by relaxing the condition to $\lVert w \rVert_\Sigma \leq 1$ and verifying that the solution meets $\lVert w \rVert_\Sigma = 1$ at the end. Now, we can restate the problem of $\min_{\lVert w \rVert_\Sigma \leq 1}-\innerp{w}{\mu} + \epsilon \lVert w \rVert_1$ as $\min_{\lVert w \rVert_\Sigma \leq 1}\max_{\lVert z \rVert_\infty \leq 1}-\innerp{w}{\mu - \epsilon z}$. Since the objective is concave-convex and the constraints are convex sets, we can change the min-max order using von Neumann's Minimax theorem, which results in $\max_{\lVert z \rVert_\infty \leq 1}\min_{\lVert w \rVert_\Sigma \leq 1}-\innerp{w}{\mu - \epsilon z}$. It is straightforward to see that the solution to the inner maximization is given by $w^{adv} = \frac{\Sigma^{_1}(\mu - \epsilon z)}{\lVert \mu - \epsilon \rVert_{\Sigma^{-1}}}$, and we are left with the outer maximization problem which is $\max_{\lVert z \rVert_\infty \leq 1}-\lVert \mu - \epsilon z\rVert_{\Sigma^{-1}}$.

We further show that we must have $z^*_i = \sign(w^{adv}_i)$ whenever $w^{adv}_i \neq 0$, which will be used later in the proof. Returning to the problem of $z^* = \argmin_{\lVert z \rVert_\infty \leq 1}\lVert \mu - \epsilon z \rVert_{\Sigma^{-1}}^2$ and using the method of Lagrange multipliers with dual variables $u_1, ..., u_d$, we have
\begin{equation}
    \mathcal{L}(z,u) = \innerp{\mu - \epsilon z}{\Sigma^{-1}(\mu - \epsilon z)} + \sum_{i=1}^{d}u_i(|z_i| - 1).
\end{equation}
At the optimal point, the subgradient satisfies $\partial_{z}\mathcal{L}(z^*,u^*) = 0$. Therefore,
\begin{equation}
    -2(\Sigma^{-1}(\mu - \epsilon z^*))_i + u^*_i\partial_{z_i}|z^*_i| = 0 \implies -2\lVert \mu - \epsilon z^*\rVert_{\Sigma^{-1}}w^{adv}_i + u^*_i\partial_{z_i}|z^*_i| = 0
\end{equation}
where $\partial_{z_i}|z^*_i| = \sign(z^*_i)$ whenever $z^*_i \neq 0$ and any real number in $[-1,1]$ otherwise. One can see that when $w^{adv}_i \neq 0$, we must have $u^*_i > 0$ and $z^*_i = \sign(w^{adv}_i)$ due to the complementary slackness condition.

It remains to find the optimal value for $w_0$. Let $K \coloneqq \innerp{w^{adv}}{\mu} - \epsilon \lVert w^{adv} \rVert_1$. Then
\begin{equation}
    \frac{\partial R^{adv}_{\mathcal{D},\ell}(w^{adv},w_0)}{\partial w_0} = \frac{e^{\frac{-K^2-w_0^2}{2}}}{\sqrt{2\pi}}\left(-e^{\ln(\pi_+) - Kw_0}+e^{\ln(\pi_-)+Kw_0}\right)
\end{equation}
Setting $\frac{\partial R^{adv}_{\mathcal{D},\ell}(w^{adv},w_0)}{\partial w_0} = 0$ leads to $w_0^* = \frac{\ln(\pi_+/\pi_-)}{2K}$, and one can observe that $\frac{\partial R^{adv}_{\mathcal{D},\ell}(w^{adv},w_0)}{\partial w_0} > 0$ if and only if $w_0 > w_0^*$, thus $w_0^*$ is achieving the minimum value of $R^{adv}_{\mathcal{D},\ell}(w^{adv},w_0)$. Furthermore, by plugging $w^{adv}$ in $K$, we have $K = \lVert \mu - \epsilon z\rVert_{\Sigma{-1}}$. Since multiplying both $w^{adv}$ and $w_0^*$ by a positive scalar will not change the classifier, we multiply both by $K$, thus arriving at $w^{adv} = \Sigma^{-1}(\mu - \epsilon z^*)$ and $w^{adv}_0 = \frac{\ln(\pi_+/\pi_-)}{2}$, as the theorem claims.

It remains to show that this linear classifier is indeed optimal among all (potentially non-linear) classifiers. We first present a slightly extended version of Lemma 6.4 from \noparcite{dan2020}, and apply it in a similar fashion to their work.

\begin{lemma}
\label{adv_gte_nat_lemma}
Let distribution $\mathcal{D}$ and $\mathcal{D'}$ be a $(\mu,\Sigma,\pi_+)$ and a $(\mu-\epsilon z^*,\Sigma,\pi_+)$-binary Gaussian mixture respectively. For any classifier $f: \mathbb{R}^d \to \{-1,+1\}$ we have
\begin{equation}
    R^{adv}_{\mathcal{D},\ell_\text{0-1}}(f) \geq R^{nat}_{\mathcal{D'},\ell_\text{0-1}}(f) \geq \pi_+\Phi\left(-\lVert w^{adv} \rVert_\Sigma - \frac{w^{adv}_0}{\lVert w^{adv} \rVert_{\Sigma}}\right) + \pi_-\Phi\left(-\lVert w^{adv} \rVert_\Sigma + \frac{w^{adv}_0}{\lVert w^{adv} \rVert_{\Sigma}}\right).
\end{equation}
\end{lemma}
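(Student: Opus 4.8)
The plan is to prove the two inequalities separately. The left one, $R^{adv}_{\mathcal{D},\ell_\text{0-1}}(f) \geq R^{nat}_{\mathcal{D'},\ell_\text{0-1}}(f)$, is a distribution-transport (coupling) argument that exploits the fact that $\mathcal{D'}$ differs from $\mathcal{D}$ only by a uniform shift of the class means by $\epsilon z^*$. The right one is a Bayes-optimality statement: the right-hand side is exactly the natural risk over $\mathcal{D'}$ of the linear rule $(w^{adv}, w^{adv}_0)$, which is the Bayes optimal classifier for $\mathcal{D'}$, so no classifier $f$ can beat it. Chaining the two yields the claim.

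For the left inequality, I would fix $(x,y)$ drawn from $\mathcal{D}$ and consider the particular perturbation $x' = x - \epsilon y z^*$. Since $\lVert z^* \rVert_\infty \leq 1$, this is admissible, i.e. $\lVert x' - x \rVert_\infty = \epsilon\lVert z^* \rVert_\infty \leq \epsilon$, so it is always available to the adversary uniformly over $x$. Because the adversarial $0$-$1$ loss is a maximum over admissible perturbations, it dominates the loss at this specific point: $\max_{\lVert x'' - x\rVert_\infty \leq \epsilon}\mathbbm{1}[y f(x'') \leq 0] \geq \mathbbm{1}[y f(x') \leq 0]$. The crucial observation is that subtracting the deterministic vector $\epsilon y z^*$ merely shifts the mean, so the shift transports the class-conditional $\mathcal{N}(y\mu,\Sigma)$ of $\mathcal{D}$ exactly onto the class-conditional $\mathcal{N}(y(\mu-\epsilon z^*),\Sigma)$ of $\mathcal{D'}$; that is, if $(x,y)\sim\mathcal{D}$ then $(x',y)\sim\mathcal{D'}$. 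Taking expectations of the pointwise inequality then gives $R^{adv}_{\mathcal{D},\ell_\text{0-1}}(f) \geq R^{nat}_{\mathcal{D'},\ell_\text{0-1}}(f)$.

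For the right inequality, I would first identify the stated lower bound as $R^{nat}_{\mathcal{D'},\ell_\text{0-1}}(w^{adv}, w^{adv}_0)$. This is a short Gaussian computation: with $w^{adv} = \Sigma^{-1}(\mu-\epsilon z^*)$ one has $\langle w^{adv}, \mu-\epsilon z^*\rangle = \lVert w^{adv}\rVert_\Sigma^2$, so projecting a class-conditional Gaussian of $\mathcal{D'}$ onto $w^{adv}$ and applying $\Phi$ reproduces the two terms on the right. Since the means of $\mathcal{D'}$ are $\pm(\mu-\epsilon z^*)$ with common covariance $\Sigma$ and priors $\pi_\pm$, the Bayes optimal classifier for $\mathcal{D'}$ is the linear rule with weight $\Sigma^{-1}(\mu-\epsilon z^*) = w^{adv}$ and bias $\tfrac{1}{2}\ln(\pi_+/\pi_-) = w^{adv}_0$, as recalled earlier in the paper. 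By definition the Bayes classifier minimizes natural risk over all (possibly non-linear) $f$, so $R^{nat}_{\mathcal{D'},\ell_\text{0-1}}(f)$ is bounded below by this value for every $f$, which completes the chain.

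I expect the main obstacle to be the rigorous justification of the transport step rather than any heavy calculation: one must verify that the single label-dependent shift $-\epsilon y z^*$ is simultaneously (i) admissible for the adversary uniformly over all $x$, which rests only on $\lVert z^*\rVert_\infty \leq 1$, and (ii) distribution-matching in the precise sense of mapping the law of $(x,y)$ under $\mathcal{D}$ onto the law under $\mathcal{D'}$. Both are clean because $z^*$ depends only on $(\mu,\Sigma,\epsilon)$ and not on the sample, so the shift is deterministic and the pushforward is immediate; the remaining effort for the right inequality is the routine Gaussian-projection identity and the invocation of the well-known Bayes rule for equal-covariance Gaussian mixtures.
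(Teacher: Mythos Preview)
Your proposal is correct and follows essentially the same approach as the paper: the paper also bounds the adversarial risk by choosing the specific perturbation $x\mapsto x-\epsilon z^*$ on the positive class (and the analogous $+\epsilon z^*$ on the negative class, which is exactly your unified $x-\epsilon y z^*$), uses the resulting shift of Gaussians to obtain $R^{nat}_{\mathcal{D'},\ell_\text{0-1}}(f)$, and then invokes Bayes optimality of $(w^{adv},w^{adv}_0)$ for $\mathcal{D'}$ to get the final lower bound.
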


\begin{proof}
First, note that we have
\begin{equation}
    R^{adv}_{\mathcal{D},\ell_\text{0-1}}(f) = \pi_+\mathbb{E}_{x\sim \mathcal{N}(\mu,\Sigma)}\left[\max_{\{x' \; : \; \lVert x' - x \rVert_\infty \leq \epsilon\}}\mathbbm{1}(f(x') \leq 0)\right] + \pi_-\mathbb{E}_{x\sim \mathcal{N}(-\mu,\Sigma)}\left[\max_{\{x' \; : \; \lVert x' - x \rVert_\infty \leq \epsilon\}}\mathbbm{1}(f(x') \geq 0)\right].
\end{equation}
Moreover, by definition we have $\lVert \epsilon z^* \rVert_\infty \leq \epsilon$, which implies that
\begin{equation}
    \mathbb{E}_{x\sim \mathcal{N}(\mu,\Sigma)}\left[\max_{\{x' \; : \; \lVert x' - x \rVert_\infty \leq \epsilon\}}\mathbbm{1}(f(x') \leq 0)\right] \geq \mathbb{E}_{x\sim \mathcal{N}(\mu,\Sigma)}[\mathbbm{1}(f(x-\epsilon z^*) \leq 0)] = \mathbb{E}_{x\sim\mathcal{N}(\mu-\epsilon z^*,\Sigma)}[\mathbbm{1}(f(x) \leq 0)]
\end{equation}
and a similar lower bound for the negative case. Thus we have
\begin{align}
    R^{adv}_{\mathcal{D},\ell_\text{0-1}}(f) \geq \pi_+\mathbb{E}_{x\sim\mathcal{N}(\mu - \epsilon z^*, \Sigma)}[\mathbbm{1}(f(x) \leq 0)] + \pi_-\mathbb{E}_{x\sim\mathcal{N}(-\mu + \epsilon z^*, \Sigma)}[\mathbbm{1}(f(x) \geq 0)] = R^{nat}_{\mathcal{D'},\ell_\text{0-1}}(f)
\end{align}
which completes the proof of the first inequality. For the second inequality, note that the Bayes optimal classifier for $\mathcal{D'}$ is indeed a linear classifier with $w^{adv} = \Sigma^{-1}(\mu - \epsilon z^*)$ and $w^{adv}_0 = \frac{\ln(\pi_+/\pi_-)}{2}$, and any other classifier has a higher natural risk compared to the stated Bayes risk.
\end{proof}
Now, all we need to show is that the adversarial risk of $(w^{adv},w^{adv}_0)$ is equal to the lower bound stated in Lemma \ref{adv_gte_nat_lemma}. For this, note that whenever $w^{adv}_i \neq 0$ we have $z^*_i = \sign(w^{adv}_i)$. As a result, we have $\innerp{z^*}{w^{adv}} = \lVert w^{adv} \rVert_1$ and thus 
$\innerp{w^{adv}}{\mu} - \epsilon \lVert w^{adv} \rVert_1 = \innerp{w^{adv}}{\mu - \epsilon z^*}$. Plugging this result into Equation \eqref{exact_adv_risk} implies that the lower bound in Lemma \ref{adv_gte_nat_lemma} is equal to $R^{adv}_{\mathcal{D},\ell_\text{0-1}}(w^{adv},w^{adv}_0)$ which demonstrates that for any classifier $f$ we have $R^{adv}_{\mathcal{D},\ell_\text{0-1}}(f) \geq R^{adv}_{\mathcal{D},\ell_\text{0-1}}(w^{adv},w^{adv}_0)$. With that, the proof of Proposition \ref{optimal_classifier_proposition} is complete.
\qed

\section{Proof of Theorem \ref{imbalanced_theorem}}
\label{proof_imbalanced}
Both conditions in Theorem \ref{imbalanced_theorem} have a similar proof. First, for the sake of simplicity, we define
\begin{align}
    c & \coloneqq 2\innerp{\mu}{\Sigma^{-1}(\mu - \epsilon z^{*})},\label{tempvar_c}\\
    d & \coloneqq 2\mynorm{\mu - \epsilon z^{*}}\label{tempvar_d}
\end{align}
and $\pi \coloneqq \pi_{+}$. The proof is based on analyzing the second derivative of our desired function with respect to $\pi$. Define the following function which simplifies the proof
\begin{equation}
    \label{F_function}
   F(\pi) = F_{c,d}(\pi) \coloneqq  \pi \Phi\insidephi{\pi}.\\
\end{equation}
From Proposition \ref{optimal_classifier_proposition} and Equation \eqref{exact_adv_risk} by setting $\epsilon = 0$, it is obvious that
\begin{equation}
    \label{nat_risk_adv}
    R_{c,d}(\pi) \coloneqq R^{nat}_{\mathcal{D},\ell}(w^{adv},w^{adv}_0) = R^{nat}_{\mathcal{D},\ell}(w^{adv}, \frac{1}{2}\natparam{\pi} )  =  F(\pi) + F(1 - \pi).
\end{equation}

The next lemmas are essential for proving both theorems.

\begin{lemma}
\label{cd_inequality}
Consider c and d defined by Equations \eqref{tempvar_c} and \eqref{tempvar_d}. We have
    \begin{equation}
            \frac{c}{d^2} \geq \frac{1}{2}
    \end{equation}
\begin{proof}
Replace c,d by their values. we have
\begin{align*}
     \frac{c}{d^2} &= \frac{\innerp{\mu}{\Sigma^{-1}(\mu - \epsilon z^{*})}}{2 \mynorm{\mu - \epsilon z^{*}} ^ 2} \geq \frac{1}{2}\\
    &\iff \innerp{\mu}{\Sigma^{-1}(\mu - \epsilon z^{*})} \geq \innerp{\mu - \epsilon z^*}{\Sigma^{-1}(\mu - \epsilon z^*)}\\
    &\iff \innerp{z^*}{\Sigma^{-1}\mu} \geq \epsilon \innerp{z^*}{\Sigma^{-1}z^*}\\
    &\iff \innerp{z^{*}}{\Sigma^{-1} (\mu - \epsilon z^{*})} \geq 0
\end{align*}
Recall from Appendix \ref{proof_optimal_classifier} that we have $z ^{*}_i = \sign(\Sigma^{-1}(\mu - \epsilon z^{*}))_i$ whenever $(\Sigma^{-1}(\mu - \epsilon z^{*}))_i \neq 0$, hence the last inequality holds. Note that if $\epsilon = 0$, we will have $\frac{c}{d^2} = \frac{1}{2}$.
\end{proof}     
\end{lemma}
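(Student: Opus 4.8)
The plan is to reduce the claimed inequality $c/d^2 \geq \frac{1}{2}$ to a single non-negativity statement about an inner product, and then invoke the sign characterization of $z^*$ established in the derivation of the optimal adversarial classifier. First I would substitute the definitions $c = 2\innerp{\mu}{\Sigma^{-1}(\mu - \epsilon z^*)}$ and $d^2 = 4\mynorm{\mu - \epsilon z^*}^2 = 4\innerp{\mu - \epsilon z^*}{\Sigma^{-1}(\mu - \epsilon z^*)}$ into the ratio. After cancelling the factors, this shows that $c/d^2 \geq \frac{1}{2}$ is equivalent to
\begin{equation*}
    \innerp{\mu}{\Sigma^{-1}(\mu - \epsilon z^*)} \geq \innerp{\mu - \epsilon z^*}{\Sigma^{-1}(\mu - \epsilon z^*)}.
\end{equation*}

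Next I would subtract the right-hand side from the left. Because $\Sigma^{-1}$ is symmetric, the difference telescopes: it equals $\innerp{\mu - (\mu - \epsilon z^*)}{\Sigma^{-1}(\mu - \epsilon z^*)} = \epsilon\innerp{z^*}{\Sigma^{-1}(\mu - \epsilon z^*)}$. Thus the entire lemma collapses (for $\epsilon > 0$, with equality in the $\epsilon = 0$ case) to establishing the scalar non-negativity
\begin{equation*}
    \innerp{z^*}{\Sigma^{-1}(\mu - \epsilon z^*)} \geq 0.
\end{equation*}

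The decisive step is to recognize that $\Sigma^{-1}(\mu - \epsilon z^*)$ is precisely $w^{adv}$, and to recall the complementary-slackness condition derived in Appendix \ref{proof_optimal_classifier}: for every coordinate $i$ with $w^{adv}_i \neq 0$ we have $z^*_i = \sign(w^{adv}_i)$. Writing the inner product coordinatewise, each summand is $z^*_i w^{adv}_i$, which equals $|w^{adv}_i| \geq 0$ when $w^{adv}_i \neq 0$ and vanishes otherwise. Hence the sum equals $\lVert w^{adv} \rVert_1 \geq 0$, which is exactly what is needed.

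The main obstacle is not computational---the algebra is a one-line telescoping---but conceptual: the argument hinges entirely on importing the KKT sign condition on $z^*$ from the earlier characterization of the optimal classifier. Without that structural fact, the inner product $\innerp{z^*}{\Sigma^{-1}(\mu - \epsilon z^*)}$ carries no definite sign, so the crux is to ensure this characterization is available and applied correctly on a coordinatewise basis.
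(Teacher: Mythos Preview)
Your proposal is correct and follows essentially the same approach as the paper: both reduce the inequality to $\innerp{z^*}{\Sigma^{-1}(\mu - \epsilon z^*)} \geq 0$ via the same algebraic rearrangement, and both finish by invoking the KKT sign condition $z^*_i = \sign(w^{adv}_i)$ from Appendix~\ref{proof_optimal_classifier}. Your version is slightly more explicit in observing that the resulting sum is exactly $\lVert w^{adv}\rVert_1$, but the argument is otherwise identical.
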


\begin{lemma}
\label{lemmaF}
    Consider the function $F$ defined by Equation \eqref{F_function}. The second derivative of this function is
            \begin{equation}
                F''(\pi) = \frac{\phi\insidephi{\pi}}{d\pi(1- \pi)^2}\left(\frac{1}{d^2}(c + \natparam{\pi}) - 1 \right)
            \end{equation}
    where $\phi(x) = \frac{d}{dx}\Phi(x)  = \frac{1}{\sqrt{2\pi}} e^ {- \frac{x ^ 2}{2}}$.        
    \begin{proof}
        First, we have
        \begin{equation}
        \label{F_prim}
             F'( \pi) = \frac{d}{d\pi} \pi\Phi\insidephi{\pi} = \Phi\insidephi{\pi} - \frac{1}{d(1- \pi)}\phi \insidephi{\pi}.
    \end{equation}
    Now, differentiating $F'(\pi)$ results in
        \begin{align}
                 F''(\pi) = \frac{d}{d\pi} F'(\pi) &=  -\frac{1}{d\pi(1-\pi)}\phi\insidephi{\pi} -\frac{1}{d(1-\pi)^2}\phi\insidephi{\pi} \nonumber\\ &-\frac{1}{d^2\pi(1-\pi)^2}\insidephi{\pi}\phi\insidephi{\pi} \nonumber\\
            &= \frac{\phi\insidephi{\pi}}{d\pi(1- \pi)^2}\left( -(1-\pi) - \pi + \frac{1}{d^2}(c + \natparam{\pi}) \right) \nonumber\\
            &= \frac{\phi\insidephi{\pi}}{d\pi(1- \pi)^2}\left(\frac{1}{d^2}(c + \natparam{\pi}) - 1 \right).
        \end{align}
    \end{proof}        
\end{lemma}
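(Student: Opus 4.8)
The plan is to prove Lemma \ref{lemmaF} by direct differentiation, treating it as a routine but careful calculus exercise on the function $F(\pi) = \pi\,\Phi\insidephi{\pi}$. The first step is to differentiate once. Writing the argument of $\Phi$ as $g(\pi) = \frac{-c - \natparam{\pi}}{d}$, the chain rule gives $\frac{d}{d\pi}\Phi(g(\pi)) = \phi(g(\pi))\,g'(\pi)$, so I must compute $g'(\pi)$. Since $\natparam{\pi} = \ln\pi - \ln(1-\pi)$, its derivative is $\frac{1}{\pi} + \frac{1}{1-\pi} = \frac{1}{\pi(1-\pi)}$, hence $g'(\pi) = \frac{-1}{d\,\pi(1-\pi)}$. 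Applying the product rule to $F(\pi) = \pi\,\Phi(g(\pi))$ then yields
\begin{equation}
F'(\pi) = \Phi\insidephi{\pi} + \pi\,\phi\insidephi{\pi}\cdot\frac{-1}{d\,\pi(1-\pi)} = \Phi\insidephi{\pi} - \frac{1}{d(1-\pi)}\phi\insidephi{\pi},
\end{equation}
which matches Equation \eqref{F_prim}. The cancellation of $\pi$ in the second term is the reason the intermediate expression simplifies.

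The second step is to differentiate $F'(\pi)$. There are three contributions. Differentiating $\Phi\insidephi{\pi}$ again gives $\phi(g)\,g' = -\frac{1}{d\pi(1-\pi)}\phi\insidephi{\pi}$, which is the first term of $F''$. Differentiating the factor $\frac{-1}{d(1-\pi)}$ in the second term produces $\frac{-1}{d(1-\pi)^2}\phi\insidephi{\pi}$ via the quotient/chain rule on $(1-\pi)^{-1}$. Finally, differentiating the $\phi\insidephi{\pi}$ inside the second term requires $\phi'(x) = -x\,\phi(x)$, so $\frac{d}{d\pi}\phi(g) = -g\,\phi(g)\,g' = -\insidephi{\pi}\phi\insidephi{\pi}\cdot\frac{-1}{d\pi(1-\pi)}$; multiplying by the prefactor $\frac{-1}{d(1-\pi)}$ gives the third term $-\frac{1}{d^2\pi(1-\pi)^2}\insidephi{\pi}\phi\insidephi{\pi}$. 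Collecting all three terms over the common factor $\frac{\phi\insidephi{\pi}}{d\pi(1-\pi)^2}$ is where the final simplification happens.

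The last step is the algebraic consolidation. Factoring out $\frac{\phi\insidephi{\pi}}{d\pi(1-\pi)^2}$ leaves the bracket $\bigl(-(1-\pi) - \pi + \frac{1}{d^2}\pi(1-\pi)\cdot\frac{d}{\pi(1-\pi)}\cdot(\text{argument})\bigr)$; here the key observation is that $\insidephi{\pi} = \frac{-c-\natparam{\pi}}{d}$, so the third term contributes $\frac{1}{d^2}(c + \natparam{\pi})$ after accounting for the sign. Since $-(1-\pi)-\pi = -1$, the bracket collapses to $\frac{1}{d^2}(c + \natparam{\pi}) - 1$, giving exactly the claimed formula.

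I do not anticipate a genuine conceptual obstacle, since this is a bookkeeping computation; the only place requiring care is correctly tracking the signs and the factors of $\frac{1}{\pi(1-\pi)}$ coming from $g'(\pi)$, together with the identity $\phi'(x) = -x\phi(x)$ that converts the derivative of $\phi\insidephi{\pi}$ back into a multiple of $\insidephi{\pi}$ itself. The main risk is an arithmetic slip in combining the three terms over the common denominator, so the check I would perform is to verify that the non-argument terms sum to $-(1-\pi)-\pi = -1$ and that the argument-dependent term correctly reproduces $\frac{1}{d^2}(c+\natparam{\pi})$.
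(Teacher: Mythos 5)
Your proposal is correct and follows essentially the same route as the paper's proof: direct differentiation using the product rule, the chain rule with $g'(\pi) = \frac{-1}{d\pi(1-\pi)}$, and the identity $\phi'(x) = -x\phi(x)$, then collecting the three terms over the common factor $\frac{\phi\insidephi{\pi}}{d\pi(1-\pi)^2}$ so that the non-argument terms sum to $-(1-\pi)-\pi = -1$ and the third term yields $\frac{1}{d^2}\left(c + \natparam{\pi}\right)$. The only blemish is the garbled intermediate bracket in your final consolidation step, but since you correctly state the sign accounting and the resulting bracket $\frac{1}{d^2}\left(c + \natparam{\pi}\right) - 1$, this is a presentational slip rather than a gap.
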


\begin{lemma}
\label{second_driv_lemma}
   The second derivative (with respect to $\pi$ while other parameters are constant) of the natural risk of a classifier with parameters $(c,d)$ defined according to Equations \eqref{tempvar_c} and \eqref{tempvar_d} is given by 
        \begin{equation}
            \frac{Ce ^ {-\frac{c^2 + (\natparam{\pi})^2}{2 d ^2}}}{d (\sqrt{\pi(1-\pi)})^3} \left((\frac{c}{d^2} -1) \left((\frac{\pi}{1- \pi})^{-\frac{c}{d^2}+ \frac{1}{2}} + (\frac{1-\pi}{\pi})^{-\frac{c}{d^2}+ \frac{1}{2}} \right)+ \frac{\natparam{\pi}}{d^2}
            \left ((\frac{\pi}{1- \pi})^{-\frac{c}{d^2}+ \frac{1}{2}} - (\frac{1-\pi}{\pi})^{-\frac{c}{d^2}+ \frac{1}{2}} \right) \right)
        \end{equation}
    where $C$ is a positive constant.   
    \begin{proof}
        Using Lemma \ref{lemmaF}, we have
        \begin{align}
             &\frac{d^2}{d\pi^2} R_{c,d} (\pi) = F''(\pi) + F''(1- \pi) \nonumber\\
            &= \frac{\phi\insidephi{\pi}}{d\pi(1- \pi)^2}\left(\frac{1}{d^2}(c + \natparam{\pi}) - 1 \right)  +
                \frac{\phi\left(\frac{-c + \ln{\frac{\pi}{1 - \pi}}}{d}\right)}{d\pi^2(1- \pi)}\left(\frac{1}{d^2}(c - \natparam{\pi}) - 1 \right) \nonumber\\
            &= \frac{Ce ^ {-\frac{c^2 + (\natparam{\pi})^2}{2 d ^2}}}{d (\sqrt{\pi(1-\pi)})^3} \left((\frac{c}{d^2} -1) \left((\frac{\pi}{1- \pi})^{-\frac{c}{d^2}+ \frac{1}{2}} + (\frac{1-\pi}{\pi})^{-\frac{c}{d^2}+ \frac{1}{2}} \right) +  \frac{\natparam{\pi}}{d^2}
            \left ((\frac{\pi}{1- \pi})^{-\frac{c}{d^2}+ \frac{1}{2}} - (\frac{1-\pi}{\pi})^{-\frac{c}{d^2}+ \frac{1}{2}} \right) \right)
        \end{align}
    \end{proof}
\end{lemma}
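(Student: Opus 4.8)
The plan is to start from the decomposition $R_{c,d}(\pi) = F(\pi) + F(1-\pi)$ recorded in Equation \eqref{nat_risk_adv}, differentiate twice, and then consolidate the two resulting terms into the single closed form. Since $\frac{d^2}{d\pi^2}F(1-\pi) = F''(1-\pi)$ by the chain rule, we have $\frac{d^2}{d\pi^2}R_{c,d}(\pi) = F''(\pi) + F''(1-\pi)$, so the task reduces to applying Lemma \ref{lemmaF} twice: once directly for $F''(\pi)$, and once with the substitution $\pi \mapsto 1-\pi$ for $F''(1-\pi)$. The key reduction for the second term is the identity $\natparam{1-\pi} = -\natparam{\pi}$, which flips the sign of the logarithmic contribution everywhere at once: it turns the density argument from $\insidephi{\pi}$ into $\insidephipos{\pi}$, the bracketed factor $\frac{1}{d^2}(c + \natparam{\pi}) - 1$ into $\frac{1}{d^2}(c - \natparam{\pi}) - 1$, and the denominator $d\pi(1-\pi)^2$ into $d\pi^2(1-\pi)$.

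Next I would expand the Gaussian density explicitly as $\phi(x) = \frac{1}{\sqrt{2\pi}}e^{-x^2/2}$ and split the two exponents using $(c \pm \natparam{\pi})^2 = c^2 + (\natparam{\pi})^2 \pm 2c\,\natparam{\pi}$. This isolates the common factor $e^{-\frac{c^2 + (\natparam{\pi})^2}{2d^2}}$, which is exactly the prefactor appearing in the claimed expression, and leaves behind cross terms $e^{\mp c\,\natparam{\pi}/d^2}$. I would then rewrite these cross terms as pure power factors: $e^{-c\,\natparam{\pi}/d^2} = \left(\frac{\pi}{1-\pi}\right)^{-c/d^2}$ for the $F''(\pi)$ term, and $e^{+c\,\natparam{\pi}/d^2} = \left(\frac{1-\pi}{\pi}\right)^{-c/d^2}$ for the $F''(1-\pi)$ term, using $\natparam{\pi} = \ln\frac{\pi}{1-\pi}$.

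In parallel I would normalize the two distinct denominators to the common form $(\pi(1-\pi))^{3/2}$ in the statement, via $\frac{1}{\pi(1-\pi)^2} = (\pi(1-\pi))^{-3/2}\left(\frac{\pi}{1-\pi}\right)^{1/2}$ and $\frac{1}{\pi^2(1-\pi)} = (\pi(1-\pi))^{-3/2}\left(\frac{1-\pi}{\pi}\right)^{1/2}$. Multiplying each residual power factor from the exponent by the matching factor from the denominator produces the exponents $-\frac{c}{d^2} + \frac{1}{2}$ on $\frac{\pi}{1-\pi}$ and on $\frac{1-\pi}{\pi}$ for the first and second terms respectively. Finally I would regroup the bracketed factors $\frac{c}{d^2} - 1 \pm \frac{\natparam{\pi}}{d^2}$ by collecting the symmetric part $\frac{c}{d^2} - 1$, which multiplies the \emph{sum} of the two power terms, and the antisymmetric part $\frac{\natparam{\pi}}{d^2}$, which multiplies their \emph{difference}. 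This lands precisely on the claimed formula with the positive constant $C = \frac{1}{\sqrt{2\pi}}$.

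The bookkeeping is entirely elementary, so the main obstacle is keeping every sign and exponent consistent through the $\pi \mapsto 1-\pi$ substitution; in particular, one must propagate the single sign flip $\natparam{1-\pi} = -\natparam{\pi}$ simultaneously through the density argument, the bracketed factor, and the denominator, and then correctly match the fractional exponents so that the symmetric/antisymmetric regrouping reproduces exactly the stated power terms. A minor notational caution is that the constant $\pi$ inside $\frac{1}{\sqrt{2\pi}}$ must not be conflated with the variable $\pi = \pi_+$; it is exactly this Gaussian normalization constant that is absorbed into the positive $C$.
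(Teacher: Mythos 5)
Your proposal is correct and takes essentially the same route as the paper's proof: decompose $R_{c,d}(\pi) = F(\pi) + F(1-\pi)$, apply Lemma \ref{lemmaF} to both terms via the identity $\natparam{1-\pi} = -\natparam{\pi}$, and consolidate the Gaussian exponents and the two denominators into the common factor $e^{-\frac{c^2 + (\natparam{\pi})^2}{2d^2}}\bigl(\pi(1-\pi)\bigr)^{-3/2}$ with the symmetric/antisymmetric regrouping. Your write-up simply makes explicit the algebra the paper compresses into a single display, and correctly identifies the absorbed constant as $C = \frac{1}{\sqrt{2\pi}}$.
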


\begin{lemma}
\label{main_imbalanced_lemma}
Consider the following function 
    \begin{equation}
            S(x) = \alpha x^2 - \ln{\left (x(e^{Ax} - e^{-Ax}) - \beta(A - 0.5)(e^{Ax} + e^{-Ax}) \right)}
    \end{equation}
    where $A$ and $\beta$ are positive real numbers and $\alpha \geq 0$. There exists $x_{0} \in \mathbb{R}^{+} \cup \{0\}$ such that the domain of S(x) is $(-\infty, -x_0) \cup (x_0, \infty)$. Furthermore, there exists $x_{1} \in \mathbb{R}^{+}$ such that $S(x)$ is monotonically decreasing on $(x_0, x_1)$ and monotonically increasing on $(x_1, \infty)$.
\begin{proof}
    Since $S(x)$ is an even function, we only consider the non-negative part of its domain ($x \geq 0$) as everything is symmetric about zero. We have two distinct cases for $A$. $A \leq 0.5$ implies that the domain of $S$ is $\mathbb{R}$. On the other hand, $A > 0.5$ imposes the following condition on $x \geq 0$ to ensure that the logarithmic term remains valid
    \begin{equation}
        \label{inequality_tanh}
        x(e^{Ax} - e^{-Ax}) - \beta(A - 0.5)(e^{Ax} + e^{-Ax}) \geq 0.
        \iff \frac{e^{Ax} - e^{-Ax}}{e^{Ax} + e^{-Ax}} \geq \frac{\beta(A - 0.5)}{x}.
    \end{equation}
    The RHS is strictly decreasing on $(0,\infty)$ and tends to 0 as $x \to \infty$. However, the LHS, which is $\tanh(Ax)$, is strictly increasing on $(0,\infty)$. Hence, there exists $x_0$ such that the above inequality holds for all $x \in (x_0, \infty)$. Also, The domain of $S$ is  $(-\infty, -x_0) \cup (x_0, \infty)$. The proof of first statement is complete.
    
    We break down the proof of the second statement to the following three cases.
    \begin{enumerate}
        \item If $\alpha = 0$, we have
               \begin{align}
                    \label{alpha_zero}
                    S'(x) &= -\frac{Ax(e^{Ax} + e^{-Ax}) + e^{Ax} - e^{-Ax} - \beta A(A- 0.5)(e^{Ax} - e^{-Ax}) }{x(e^{Ax} - e^{-Ax}) - \beta(A - 0.5)(e^{Ax} + e^{-Ax})}
               \end{align} 
        The denominator is always positive on the domain of $S(x)$. When $A \leq 0.5$, it is clear that the numerator is also positive. If $A \geq 0.5$, we have
                \begin{align}
                    x(e^{Ax} - e^{-Ax}) &- \beta(A - 0.5)(e^{Ax} + e^{-Ax}) \geq 0 \nonumber\\
                    &\Rightarrow Ax(e^{Ax} - e^{-Ax}) - \beta A(A - 0.5)(e^{Ax} + e^{-Ax}) \geq 0 \nonumber\\
                    &\Rightarrow 2Axe^{-Ax} +2 \beta A(A - 0.5)e^{-Ax} + Ax(e^{Ax} - e^{-Ax}) - \beta A(A - 0.5)(e^{Ax} + e^{-Ax}) \geq 0 \nonumber\\ 
                    &\Rightarrow Ax(e^{Ax} + e^{-Ax}) + e^{Ax} - e^{-Ax} - \beta A(A- 0.5)(e^{Ax} - e^{-Ax}) \geq 0
                \end{align}
        Again, the first inequality is correct on the domain of $S$, and the second and third inequalities follow by multiplying by $A$ and adding a positive value respectively. Thus, $S'(x)$ is negative on $(x_0, \infty)$, and $x_1 = \infty$. Therefore, $S$ is strictly decreasing on $(x_0,\infty)$, and doesn't have an increasing stage. The proof for this case is complete.
        
        \item  $\alpha > 0$ and $A \geq 0.5$:
        Another form of S(x) is
            \begin{equation}
                S(x) = \alpha x^2 + Ax - \ln{\left (x(e^{2Ax} - 1) - \beta(A - 0.5)(e^{2Ax} + 1) \right)}.
            \end{equation}
        We will use this alternative form of $S$ in the following arguments. The first derivative of $S$ is 
            \begin{equation}
                S'(x) = 2\alpha x + A - \frac{e^{2Ax} (2Ax + 1 - 2\beta A(A- 0.5) ) - 1}{x(e^{2Ax} - 1) - \beta(A - 0.5)(e^{2Ax} + 1)}
            \end{equation}
            and after simplification, we have
            \begin{equation}
                S''(x) = 2\alpha - \frac{-8A\beta(A - 0.5) e^{2Ax} + 4A^2e^{2Ax}\left((\beta(A - 0.5)) ^ 2 - x^2 \right) - (e^{2Ax} - 1)^2}{\left (x(e^{2Ax} - 1) - \beta(A - 0.5)(e^{2Ax} + 1) \right)^2}.\\
            \end{equation}
            Furthermore, \eqref{inequality_tanh} implies
            \begin{align}
                \frac{e^{Ax} - e^{-Ax}}{e^{Ax} + e^{-Ax}} \geq \frac{\beta(A - 0.5)}{x} \Rightarrow x \geq \beta(A - 0.5).
            \end{align}
            Consequently, $(\beta(A - 0.5)) ^ 2 \leq  x^2$ holds and we can finally conclude that $S''(x) > 0$. Therefore, $S(x)$ is strictly convex on $(x_0, \infty)$, which guarantees the existence of $x_1$ (potentially infinite) as mentioned in the lemma.
            
        \item Assume that $A \leq 0.5$ and $\alpha > 0$. We want to show that $S'(x)$ has at most one root in $x > 0$, and $S'(x) > 0$ as $x \to \infty$. Recall that when $A \leq 0.5$, the domain of $S$ is $\mathbb{R}$. Note that $S'(0)$ is $0$. Let
        \begin{equation}
            L(x) \coloneqq 2\alpha x + A \qquad T(x) \coloneqq \frac{e^{2Ax} (2Ax + 1 - 2\beta A(A- 0.5) ) - 1}{x(e^{2Ax} - 1) - \beta(A - 0.5)(e^{2Ax} + 1)}
        \end{equation}
        Then we have $S'(x) = L(x) - T(x)$. Assume for now that there exists $a$ such that $T(x)$ is strictly concave on $(0,a)$ and strictly decreasing on $(a,\infty)$. Then, since $L$ is a linear function with a positive slope, $L(x) = T(x)$ can have at most one solution on $(0,a]$. If such solution exists, then $T(a) \leq L(a)$ and hence, no more solutions exist beyond $a$. Otherwise, we will either have $T(a) > L(a)$, and exactly one solution will exist beyond $a$, or have $T(a) < L(a)$ and no solution exists on $x > 0$. In conclusion, there can be at most one solution on $(0, \infty)$, and clearly, $T(x) < L(x)$ as $x \to \infty$.
        
        All that is left to prove is the existence of $a$. We will show that $T'(x)$ is strictly decreasing until $a$, and is negative beyond $a$. Similar to the derivative of $S(x)$, we have
        \begin{align}
                    \label{T_prim}
                    T'(x) &= \frac{-8A\beta(A - 0.5) e^{2Ax} + 4A^2e^{2Ax}\left((\beta(A - 0.5)) ^ 2 - x^2 \right) - (e^{2Ax} - 1)^2}{\left (x(e^{2Ax} - 1) - \beta(A - 0.5)(e^{2Ax} + 1) \right)^2} \nonumber\\
                         &= \frac{-8A\beta(A - 0.5)+ 4A^2\left((\beta(A - 0.5)) ^ 2 - x^2 \right) - (e^{Ax} - e^{-Ax})^2}{\left (x(e^{Ax} - e^{-Ax}) - \beta(A - 0.5)(e^{Ax} + e^{-Ax}) \right)^2}.
                \end{align}
        
        Note that
        \begin{align}
            \label{T_prim_condition}
           T'(x) \geq 0 \iff (\beta(A - 0.5)) ^ 2 - 8A\beta(A - 0.5) \geq (2Ax)^2  + (e^{Ax} - e^{-Ax})^2.
        \end{align}
        While the LHS in the above inequality is constant, the RHS is a strictly increasing function of $x$, meaning that there is exactly one $a$ such that $T'(x) < 0$ for all $x > a$, meaning that $T$ is strictly decreasing beyond $a$. Take this $a$. When $x < a$, the nominator is positive and strictly decreasing in $x$, while the denominator is strictly increasing in $x$, since $A \leq 0.5$. As a result, for $x < a$, $T'(x)$ is strictly decreasing in $x$, meaning that $T$ is strictly concave in this area. We have proved the existence of the promised $a$, and the proof of this case, as well as the entire lemma, is complete.
    \end{enumerate}
\end{proof}    
\end{lemma}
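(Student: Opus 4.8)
The plan is to exploit the evident evenness of $S$ and then control the sign structure of $S'$ on the positive half-line, organizing the work around the same three regimes that govern the qualitative behavior: $\alpha=0$; $\alpha>0$ with $A\ge\tfrac12$; and $\alpha>0$ with $A\le\tfrac12$. First I would check that $S(-x)=S(x)$ (replacing $x$ by $-x$ leaves $\alpha x^2$ and the bracketed argument of the logarithm unchanged), so it suffices to argue on $x\ge 0$. For the domain, I would factor the log-argument $g(x)\coloneqq x(e^{Ax}-e^{-Ax})-\beta(A-\tfrac12)(e^{Ax}+e^{-Ax})$ as $(e^{Ax}+e^{-Ax})\big(x\tanh(Ax)-\beta(A-\tfrac12)\big)$, so positivity is equivalent to $\tanh(Ax)>\beta(A-\tfrac12)/x$. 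When $A\le\tfrac12$ the right side is nonpositive while the left is positive for $x>0$, so the condition holds on all of $\mathbb{R}$ and $x_0=0$; when $A>\tfrac12$, I would compare the strictly increasing $\tanh(Ax)$ (from $0$ to $1$) against the strictly decreasing positive $\beta(A-\tfrac12)/x$ (from $+\infty$ to $0$) to obtain a single threshold $x_0>0$, yielding the domain $(-\infty,-x_0)\cup(x_0,\infty)$.

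For the monotonicity I would aim to show that $S'$ changes sign at most once on $(x_0,\infty)$, passing from negative to positive, which is exactly the claimed decrease-then-increase (the increasing stage being allowed to be empty, $x_1=\infty$). Writing $S'(x)=2\alpha x-g'(x)/g(x)$, the two easy regimes are as follows. When $\alpha=0$, I would show $g'(x)>0$ on the domain, so $S'<0$ and $S$ is strictly decreasing: for $A\le\tfrac12$ every term of $g'$ is nonnegative, while for $A>\tfrac12$ I would multiply the domain inequality $g(x)\ge 0$ by $A$ and add a manifestly nonnegative slack to recover $g'(x)>0$. When $\alpha>0$ and $A\ge\tfrac12$, I would instead prove strict convexity $S''>0$, so $S'$ is increasing and has at most one zero; the key is that the domain forces $x\ge\beta(A-\tfrac12)$, hence $x^2\ge(\beta(A-\tfrac12))^2$, which turns the only possibly-positive term in the subtracted fraction nonpositive and leaves $S''=2\alpha+(\text{nonnegative})>0$.

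The \emph{main obstacle} is the remaining case $\alpha>0$, $A\le\tfrac12$, where the domain is all of $\mathbb{R}$, one computes $S'(0)=0$, and convexity can genuinely fail. Here I would rewrite $S(x)=\alpha x^2+Ax-\ln h(x)$ with $h(x)=x(e^{2Ax}-1)-\beta(A-\tfrac12)(e^{2Ax}+1)$ (factoring $e^{-Ax}$ out of $g$), and split $S'(x)=L(x)-T(x)$ with the affine increasing $L(x)=2\alpha x+A$ and $T(x)=h'(x)/h(x)$. The crux is a shape statement for $T$: from the explicit $T'$ one reads off that $T'(x)\ge 0$ is equivalent to a fixed constant dominating a strictly increasing function of $x$, giving a unique $a>0$ with $T$ increasing before and decreasing after $a$; and on $(0,a)$ the numerator of $T'$ is positive and decreasing while its denominator $g(x)^2$ is increasing, so $T'$ is decreasing and $T$ is strictly concave there. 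A positive-slope line meets a curve that is strictly concave on $(0,a)$ and then strictly decreasing on $(a,\infty)$ at most once on $(0,\infty)$: using the anchor $L(0)=T(0)=A$, the function $L-T=S'$ is strictly convex on $(0,a)$ and vanishes at $0$, so it has at most one further zero there, and beyond $a$ the increasing $L$ minus the decreasing $T$ is strictly monotone, precluding a second crossing; since $T\to 2A$ is bounded while $L\to\infty$, $S'$ is eventually positive. This pins down a single sign change of $S'$ and identifies $x_1$. I expect the delicate part to be rigorously establishing the concave-then-decreasing shape of $T$ and then carefully chasing the subcases (a crossing in $(0,a]$ versus none) to conclude that $S'$ has at most one positive root with the correct sign at infinity.
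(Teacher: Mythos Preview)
Your proposal is correct and follows essentially the same route as the paper: the evenness reduction, the $\tanh$ comparison for the domain, and the same three-case split ($\alpha=0$ via $g'>0$; $\alpha>0,\,A\ge\tfrac12$ via strict convexity using $x\ge\beta(A-\tfrac12)$; $\alpha>0,\,A\le\tfrac12$ via the $L-T$ decomposition with $T$ concave-then-decreasing). Your write-up is in fact slightly cleaner in the third case, since you make explicit the anchor $L(0)=T(0)=A$ that justifies why the line meets the concave portion at most once, a point the paper uses only implicitly.
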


\begin{lemma}
\label{decreasing_natural_lemma}
There exists $\pi^{*}$ such that $R_{c,d}(\pi)$ is monotonically decreasing on $(\pi^{*},1)$, or equivalently, $R'_{c,d}(\pi)$ is negative on this interval. 
\begin{proof}
    we have
\begin{align}
    R'_{c,d} (\pi) = F'(\pi) - F'(1-\pi)
\end{align}
By definition of $F$ and Equation \eqref{F_prim}, we have
\begin{align}
    &\lim\limits_{\pi \to 1} F'(\pi) = \lim\limits_{\pi \to 1} \Phi\insidephi{\pi} - \lim\limits_{\pi \to 1}\frac{1}{d(1- \pi)}\phi \insidephi{\pi} = 0 - 0 = 0\\
    &\lim\limits_{\pi \to 1} F'(1- \pi) = \lim\limits_{\pi \to 1} \Phi\left(\frac{-c + \ln{\frac{\pi}{1 - \pi}}}{d}\right) - \lim\limits_{\pi \to 1}\frac{1}{d\pi}\phi\left(\frac{-c + \ln{\frac{\pi}{1 - \pi}}}{d}\right) = 1 - 0 = 1
\end{align}
Hence,
\begin{align}
    \lim\limits_{\pi \to 1} R'_{c,d} = -1
\end{align}
And it shows that there exists a $\pi^{*}$ such that $R'_{c,d} (\pi)$ is negative on $(\pi^{*}, 1)$.
\end{proof}
\end{lemma}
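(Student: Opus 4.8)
The plan is to reduce the monotonicity claim near the right endpoint to a single one-sided limit computation. Since $R_{c,d}(\pi)=F(\pi)+F(1-\pi)$, the chain rule gives
\begin{equation}
R'_{c,d}(\pi) = F'(\pi) - F'(1-\pi),
\end{equation}
so it suffices to understand each term as $\pi\to 1^-$. If I can show that $\lim_{\pi\to 1^-}R'_{c,d}(\pi)$ equals a strictly negative constant, then since $R'_{c,d}$ is continuous on $(0,1)$ there is a left-neighborhood $(\pi^*,1)$ of $1$ on which $R'_{c,d}$ stays negative, which is exactly the claim.

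First I would insert the explicit formula for $F'$ from Equation \eqref{F_prim} and evaluate $\lim_{\pi\to 1^-}F'(\pi)$. The $\Phi$-term vanishes because the argument of $\Phi$ tends to $-\infty$ as $\natparam{\pi}\to+\infty$, so $\Phi(-\infty)=0$. The density term $\frac{1}{d(1-\pi)}\phi\insidephi{\pi}$ is the delicate one, because its prefactor blows up. I would control it through the substitution $u=1-\pi\to 0^+$: the argument of $\phi$ is then asymptotically $\frac{\ln u - c}{d}$, so the exponent scales like $-(\ln u)^2/(2d^2)$, while the prefactor scales like $1/u$. Writing $v=\ln(1/u)\to+\infty$, the whole product behaves like $e^{v-v^2/(2d^2)}\to 0$, the quadratic term defeating the linear one. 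Hence $\lim_{\pi\to 1^-}F'(\pi)=0$.

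Next I would evaluate $\lim_{\pi\to 1^-}F'(1-\pi)$. Substituting $s=1-\pi$ into \eqref{F_prim} rewrites this quantity as $\Phi\insidephipos{\pi}-\frac{1}{d\pi}\phi\insidephipos{\pi}$, whose argument now tends to $+\infty$ rather than $-\infty$. Thus the $\Phi$-term tends to $1$, and the density term vanishes because $\phi(+\infty)=0$ while the prefactor $\frac{1}{d\pi}$ stays bounded. Combining the two limits gives $\lim_{\pi\to 1^-}R'_{c,d}(\pi)=0-1=-1<0$, from which the existence of $\pi^*$ follows by continuity as above.

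The main obstacle is precisely the density term in $F'(\pi)$: showing that $\frac{1}{d(1-\pi)}\phi\insidephi{\pi}\to 0$ despite the diverging prefactor. Everything else is a routine evaluation of continuous functions at their limiting arguments, so this single Gaussian-versus-prefactor competition is the only place requiring care, and the substitution $v=\ln(1/(1-\pi))$ makes the Gaussian dominance transparent.
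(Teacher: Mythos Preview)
Your proposal is correct and follows essentially the same approach as the paper: both compute $\lim_{\pi\to 1^-}R'_{c,d}(\pi)=-1$ by evaluating $F'(\pi)\to 0$ and $F'(1-\pi)\to 1$ from Equation~\eqref{F_prim}, then invoke continuity. Your treatment of the density term $\frac{1}{d(1-\pi)}\phi\insidephi{\pi}$ via the substitution $v=\ln(1/(1-\pi))$ is in fact more careful than the paper, which simply asserts this limit is $0$ without justifying the Gaussian-versus-prefactor competition.
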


\begin{lemma}
\label{deriv_gap_lemma}
Let $G(\pi)$, the natural accuracy gap between the optimal adversarial and Bayes optimal classifiers, be defined as
\begin{equation}
    \label{gap}
    G(\pi) \coloneqq  G_{\mathcal{D},\ell_\text{0-1}} = R^{nat}_{\mathcal{D},\ell}(w^{adv}, \frac{1}{2}\natparam{\pi} )  - R^{nat}_{\mathcal{D},\ell}(w^{nat}, \frac{1}{2}\natparam{\pi} ) 
\end{equation}
where $w^{adv}$ and $w^{nat}$ are the weight of the optimal adversarial and Bayes classifiers, respectively. There exists a $\pi^{*}$ such that $G(\pi)$ is monotonically decreasing on $(\pi^{*}, 1)$.
\begin{proof}
We show that there exists a $\pi^{*}$ such that $G'(\pi)$ is negative on $(\pi^{*},1)$. Let $c'$ and $d'$ be defined using Equations \eqref{tempvar_c} and \eqref{tempvar_d} when $\epsilon = 0$. Then, we have
\begin{align}
    G'(\pi)= F'_{c,d}(\pi) - F'_{c,d}(1- \pi) - F'_{c',d'}(\pi) + F'_{c',d'}(1- \pi) 
\end{align}
By definitions of $c'$ and $d'$ and Equation \eqref{F_prim}, we have
\begin{equation}
  \frac{1}{(1- \pi)}\phi\left(\frac{-c' - \natparam{\pi}}{d'}\right) = \frac{1}{\pi}\phi\left(\frac{-c'  + \natparam{\pi}}{d'}\right)
\end{equation}
Hence,
\begin{equation}
    F'_{c',d'}(\pi) - F'_{c',d'}(1- \pi) = \Phi \left(\frac{-c' - \natparam{\pi}}{d'}\right) - \Phi \left(\frac{-c' + \natparam{\pi}}{d'}\right)
\end{equation}
and
\begin{align}
    G'(\pi) = &\Phi\insidephi{\pi} - \frac{1}{d(1- \pi)}\phi \insidephi{\pi} - \Phi\insidephipos{\pi} \nonumber\\
    &+ \frac{1}{d\pi}\phi \insidephipos{\pi} 
    - \Phi \left(\frac{-c' - \natparam{\pi}}{d'}\right) + \Phi \left(\frac{-c' + \natparam{\pi}}{d'}\right) 
\end{align}

Using definitions of $d$ and $d'$, it is obvious that $d < d'$. Therefore, there exists $\pi^{*}$ such that for all $\pi \in (\pi^{*}, 1)$, we have the following inequalities,
\begin{align}
    &\frac{-c + \natparam{\pi}}{d} > \frac{-c' + \natparam{\pi}}{d'} > 0 , \\
      &\frac{-c - \natparam{\pi}}{d} < \frac{-c' - \natparam{\pi}}{d'} < 0.
\end{align}

Since $\Phi$ is concave on $\mathbb{R}^+$, for any $a. b > 0$ we have $\Phi(b) - \Phi(a) \leq \phi(a)(b-a)$. Setting $a = \frac{-c + \natparam{\pi}}{d} $ and $b = \frac{-c' + \natparam{\pi}}{d'}$ would result in,
\begin{equation}
    \Phi \left(\frac{-c' + \natparam{\pi}}{d'}\right) - \Phi\insidephipos{\pi} \leq \phi \insidephipos{\pi} \left(\natparam{\pi}(\frac{1}{d'} - \frac{1}{d}) +  \frac{c}{d} - \frac{c'}{d'} \right)
\end{equation}
Combining above inequalities, we have
\begin{align}
    G'(\pi)  = \Phi\insidephi{\pi} &- \Phi \left(\frac{-c' - \natparam{\pi}}{d'}\right) + \Phi \left(\frac{-c' + \natparam{\pi}}{d'}\right) - \Phi\insidephipos{\pi} \nonumber\\
        & + \frac{1}{d\pi}\phi \insidephipos{\pi} - \frac{1}{d(1- \pi)}\phi \insidephipos{\pi} \nonumber\\
        &\leq \frac{1}{d\pi}\phi \insidephipos{\pi} + \phi \insidephipos{\pi} \left(\natparam{\pi}(\frac{1}{d'} - \frac{1}{d}) +  \frac{c}{d} - \frac{c'}{d'}  \right)
\end{align}
where we used the fact that for large enough $\pi$ we have $\Phi\insidephi{\pi} - \Phi \left(\frac{-c' - \natparam{\pi}}{d'}\right) < 0$, and $\phi \geq 0$. It suffices to show that,
\begin{align}
    \frac{1}{d\pi}\phi \insidephipos{\pi} + \phi \insidephipos{\pi} \left(\natparam{\pi}(\frac{1}{d'} - \frac{1}{d}) + \frac{c}{d} - \frac{c'}{d'}  \right) \leq 0 \nonumber\\
    \iff \frac{1}{d\pi} \leq \natparam{\pi}(\frac{1}{d} - \frac{1}{d'}) +  \frac{c'}{d'} - \frac{c}{d}
\end{align}
When $\pi \to 1$, the LHS in the last inequality tends to $\frac{1}{d}$, while the RHS tends to $\infty$ (since $d < d'$), and is monotonically increasing. Consequently, we set $\pi^{*}$ large enough such that the last inequality holds, which completes the proof.
\end{proof}
\end{lemma}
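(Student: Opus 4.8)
The plan is to prove the claim by reducing monotonicity of $G$ to a sign analysis of $G'(\pi)$ for $\pi$ near $1$. Writing $c',d'$ for the Bayes classifier's parameters (the values of $c,d$ at $\epsilon=0$) and using the decomposition in \eqref{nat_risk_adv}, we have $G(\pi)=\bigl(F_{c,d}(\pi)+F_{c,d}(1-\pi)\bigr)-\bigl(F_{c',d'}(\pi)+F_{c',d'}(1-\pi)\bigr)$, hence
\[
G'(\pi)=F'_{c,d}(\pi)-F'_{c,d}(1-\pi)-F'_{c',d'}(\pi)+F'_{c',d'}(1-\pi).
\]
I would then substitute the closed form of $F'$ from \eqref{F_prim}. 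The first simplification to exploit is the Bayes-specific identity $c'=d'^2/2$, which follows from Lemma \ref{cd_inequality} evaluated at $\epsilon=0$; this forces
\[
\frac{1}{1-\pi}\phi\Bigl(\frac{-c'-\natparam{\pi}}{d'}\Bigr)=\frac{1}{\pi}\phi\Bigl(\frac{-c'+\natparam{\pi}}{d'}\Bigr),
\]
so the two $\phi$-terms from the Bayes part cancel and $F'_{c',d'}(\pi)-F'_{c',d'}(1-\pi)$ collapses to a clean difference of two $\Phi$ values.

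Next I would record the key ordering of the arguments. Since $z^*$ minimizes $\mynorm{\mu-\epsilon z}$ over $\lVert z\rVert_\infty\le 1$, we have $d=2\mynorm{\mu-\epsilon z^*}\le 2\mynorm{\mu}=d'$, with strict inequality when $\epsilon>0$. Combined with $\natparam{\pi}\to+\infty$ as $\pi\to1$, the strict inequality $d<d'$ yields, for all sufficiently large $\pi$,
\[
\frac{-c+\natparam{\pi}}{d}>\frac{-c'+\natparam{\pi}}{d'}>0,\qquad \frac{-c-\natparam{\pi}}{d}<\frac{-c'-\natparam{\pi}}{d'}<0.
\]
These orderings let me dispatch the surviving $\Phi$-differences: the pair $\Phi\insidephi{\pi}-\Phi\bigl(\frac{-c'-\natparam{\pi}}{d'}\bigr)$ is negative for large $\pi$ (by monotonicity of $\Phi$) and can be dropped, while the positive-argument pair is controlled by concavity of $\Phi$ on $\mathbb{R}^+$ through the tangent-line bound $\Phi(b)-\Phi(a)\le\phi(a)(b-a)$ applied with $a=\frac{-c+\natparam{\pi}}{d}$ and $b=\frac{-c'+\natparam{\pi}}{d'}$.

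After these bounds, the estimate for $G'(\pi)$ reduces to a single $\phi$-multiple, and establishing $G'(\pi)\le 0$ becomes the scalar inequality
\[
\frac{1}{d\pi}\le \natparam{\pi}\Bigl(\frac{1}{d}-\frac{1}{d'}\Bigr)+\frac{c'}{d'}-\frac{c}{d}.
\]
Here the left-hand side tends to the finite limit $1/d$ as $\pi\to1$, whereas the right-hand side is monotonically increasing and diverges to $+\infty$, because $\natparam{\pi}\to+\infty$ and $\frac{1}{d}-\frac{1}{d'}>0$; hence the inequality holds past some threshold $\pi^*$, giving $G'<0$ on $(\pi^*,1)$. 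I expect the \textbf{main obstacle} to be the bookkeeping around the $\Phi$-difference terms: they do not all cancel, so the argument genuinely relies on (i) obtaining the correct ordering of the four arguments, which rests on $d<d'$ and $c'=d'^2/2$, and (ii) applying concavity in the correct direction so that the residual collapses to the single tractable $\phi$-factor above. Everything downstream of that reduction is a routine limit computation.
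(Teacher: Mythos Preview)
Your proposal follows the paper's proof essentially step for step: the same decomposition of $G'$ via $F'$, the same Bayes identity $c'=d'^2/2$ to cancel the $\phi$-terms, the same ordering from $d<d'$, the same concavity bound on the positive-argument $\Phi$-difference, and the same terminal scalar inequality settled by $\natparam{\pi}\to\infty$. The only (harmless) omission is that you do not explicitly say the leftover term $-\tfrac{1}{d(1-\pi)}\phi\insidephi{\pi}$ is nonpositive and may be dropped, which the paper handles with the remark ``$\phi\ge 0$''.
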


\subsection{Proof of Equation \eqref{risk_surprising_cond}}
We begin by proving the following claim: if $R''_{c,d}$ has no roots in $(0.5, 1)$, then $R_{c,d}$ is monotonically decreasing on $(0.5, 1)$, and if $R''_{c,d}$ has exactly one root in $(0.5, 1)$ and $R''_{c,d}(0.5) > 0$, then $R_{c,d}$ is increasing at first and then decreasing, yielding exactly one global maximum in $(0.5, 1)$. The first part of the claim is true since $R'_{c,d}(0.5) = 0$ and $R_{c,d}$ is eventually decreasing, meaning that it must be concave, and subsequently monotonically decreasing on $(0.5, 1)$. For the second part, $R_{c,d}$ will be convex and increasing at first, and concave after that. Since $R_{c,d}$ is eventually decreasing as well, it will also achieve its global maximum during its convex phase.

Assume $\pi$ is a root of $R''_{c,d}$. By using Lemma \ref{second_driv_lemma}, we have
\begin{align}
   (\frac{c}{d^2} - 1) \left((\frac{\pi}{1- \pi})^{-\frac{c}{d^2}+ \frac{1}{2}} + (\frac{1-\pi}{\pi})^{-\frac{c}{d^2}+ \frac{1}{2}} \right) +  \frac{\natparam{\pi}}{d^2}
            \left ((\frac{\pi}{1- \pi})^{-\frac{c}{d^2}+ \frac{1}{2}} - (\frac{1-\pi}{\pi})^{-\frac{c}{d^2}+ \frac{1}{2}} \right) = 0
\end{align}
Let $x = \natparam{\pi}$, $A = \frac{c}{d^2} - \frac{1}{2}$, and $\beta = d^2$. We should have
\begin{equation}
    x(e^{Ax} - e^{-Ax}) - \beta(A - 0.5)(e^{Ax} + e^{-Ax}) = 0
\end{equation}
Similar to proof of first part of Lemma \ref{main_imbalanced_lemma}, when $A > 0.5$ the above equation has only one solution in $x > 0$ and when $A \leq 0.5$, cannot have any roots in $x > 0$. Moreover, one can verify that when $A > 0$ we have $R''_{c,d}(0.5) > 0$ as well. Thus, $A > 0.5$ and $A \leq 0.5$ exactly differentiate between the two regimes.

Finally, note that
\begin{equation}
    A > 0.5 \iff \frac{c}{d^2} > 0.5 \iff \innerp{\mu}{\Sigma^{-1}(\mu - \epsilon z^{*})} > 2 \mynorm{\mu - \epsilon z^{*}}^2
\end{equation}

\subsection{Proof of Equation \eqref{gap_surprising_cond}}
The proof is similar to proof of Equation \eqref{risk_surprising_cond}. We want to analyze the second derivation of $G(\pi)$ defined by Equation \eqref{gap}. Recall that we only consider $\pi \geq \frac{1}{2}$. Assume that $\pi$ is root of $G''(\pi)$. Then we have
\begin{align}
    G''(\pi) = R''_{c,d} (\pi) - R''_{c', d'}(\pi) = 0 \iff R''_{c, d}(\pi) = R''_{c', d'}(\pi)
\end{align}
Recall that for the Bayes optimal classifier, we have $\frac{c'}{d'^2} = \frac{1}{2}$ from Lemma \ref{cd_inequality}. Recall from Lemma \ref{second_driv_lemma} we have
\begin{align}
 G''(\pi) &= R''_{c,d}(\pi) - R''_{c', d'}(\pi) \nonumber\\
   &=\frac{Ce ^ {-\frac{c^2 + (\natparam{\pi})^2}{2 d ^2}}}{d (\sqrt{\pi(1-\pi)})^3}
   \left((\frac{c}{d^2} -1) \left((\frac{\pi}{1- \pi})^{-\frac{c}{d^2} + \frac{1}{2}}+ (\frac{1-\pi}{\pi})^{-\frac{c}{d^2}+ \frac{1}{2}} \right) + \frac{\natparam{\pi}}{d^2}
            \left ((\frac{\pi}{1- \pi})^{-\frac{c}{d^2}+ \frac{1}{2}} - (\frac{1-\pi}{\pi})^{-\frac{c}{d^2}+ \frac{1}{2}} \right) \right) \nonumber\\
      &\phantom{=}+\frac{Ce ^ {-\frac{{c'}^2 + (\natparam{\pi})^2}{2 d'^2}}}{d' (\sqrt{\pi(1-\pi)})^3} 
\end{align}
Setting $A = \frac{c}{d^2} - \frac{1}{2}$ and $x = \natparam{\pi}$ implies
\begin{equation}
    G''(\pi) = \frac{Ce^{-\frac{c^2 + (\natparam{\pi})^2}{2 d ^2}}}{d^3 (\sqrt{\pi(1-\pi)})^3}\left(d^2(A-0.5)(e^{-Ax}+e^{Ax}) + x(e^{-Ax}-e^{Ax}) + \frac{d^3}{d'} e^{\frac{1}{2} \left(x^2(\frac{1}{d^2} - \frac{1}{d'^2}) + \frac{c^2}{d^2} - \frac{c'^2}{d'^2} \right)}\right).\label{expanded_g_sec}
\end{equation}
Therefore, in order to have $G''(\pi) = 0$ we must have
\begin{align}
    x( e^{Ax} - e^{-Ax}) - d^2(A - 0.5)(e^{-Ax} + e^{Ax}) &= \frac{d^3}{d'} e^{\frac{1}{2} \left(x^2(\frac{1}{d^2} - \frac{1}{d'^2}) + \frac{c^2}{d^2} - \frac{c'^2}{d'^2} \right)} \nonumber\\
    \iff \ln{ \left (x( e^{Ax} - e^{-Ax}) - d^2(A - 0.5)(e^{-Ax} + e^{Ax}) \right)} &= \ln{\frac{d^3}{d'}} + \frac{1}{2}\left(x^2(\frac{1}{d^2} - \frac{1}{d'^2}) + \frac{c^2}{d^2} - \frac{c'^2}{d'^2} \right).
\end{align}
Let $\beta = d^2$ and $\alpha = \frac{1}{2}(\frac{1}{d^2} - \frac{1}{d'^2})$. It is clear that $\alpha, \beta > 0$. As a result, we have
\begin{align}
    &\ln{ \left (x( e^{Ax} - e^{-Ax}) - \beta(A - 0.5)(e^{-Ax} + e^{Ax}) \right)} = \ln{\frac{d^3}{d'}} + \frac{1}{2}\left(x^2(\frac{1}{d^2} - \frac{1}{d'^2}) + \frac{c^2}{d^2} - \frac{c'^2}{d'^2} \right) \nonumber \\
    &\iff T + \alpha x^2 - \ln{ \left (x( e^{Ax} - e^{-Ax}) - \beta(A - 0.5)(e^{-Ax} + e^{Ax}) \right)} = 0 \nonumber\\
    &\iff T + S(x) = 0
\end{align}
where T is a constant. From Lemma \ref{main_imbalanced_lemma} we know that $S(x)$ in $(0, \infty)$ has two stages, decreasing at first, and then increasing. Similarly, $T + S(x)$ has these two stage in $x > 0$. We have two cases
\begin{enumerate}
    \item If $G''(\frac{1}{2}) \leq 0$ or equivalently, using the definition of $G''(\pi)$,
\begin{equation}
     2(\frac{c}{d^2}- 1)\frac{e^{-\frac{1}{2}(\frac{c}{d})^2}}{d} \leq -\frac{e^{-\frac{\mynorm{\mu}^2}{2}}}{2 \mynorm{\mu}}
\end{equation}
then, one can observe from Equation \eqref{expanded_g_sec} that $A \leq 0.5$ and per Lemma \ref{main_imbalanced_lemma} the domain of $S(x)$ is $(0, \infty)$. Furthermore, $S(0) + T \leq 0$ holds according to $G''(\frac{1}{2}) \leq 0$. From Lemma \ref{main_imbalanced_lemma}, we know that there exists $x_1$ such that $S(x)$ is decreasing on  $(0, x_1)$ and increasing on $(x_1, \infty)$. Combined with the fact that $S(0) + T \leq 0$, we can conclude that $S(x) + T$ has at most one root in $(0,\infty)$. Similarly, $G''$ has at most a root in $(0.5, 1)$. Furthermore, Lemma \ref{deriv_gap_lemma} ensures that $G$ is eventually decreasing. Following the first claim of the proof of Equation \eqref{risk_surprising_cond}, we know that $G''(0) \leq 0$ and $G''$ having no roots in $(0.5, 1)$ imply that $G$ is monotonically decreasing on $(0.5, 1)$, which completes the proof in this case.

\item If $G''(\frac{1}{2}) > 0$, or equivalently
\begin{align}
2(\frac{c}{d^2}- 1)\frac{e^{-\frac{1}{2}(\frac{c}{d})^2}}{d} > -\frac{e^{-\frac{\mynorm{\mu}^2}{2}}}{2 \mynorm{\mu}}
\end{align}
then Lemma \ref{main_imbalanced_lemma} shows that $S(x) + T$ has at most two roots in $(0,\infty)$, which implies that $G''$ has at most two roots in $(0.5, 1)$, since it is impossible for $G''$ to have roots corresponding to $x$ out of the domain of $S(x)$. Moreover, $\frac{1}{2}$ is a local minimal of $G$ since $G'(\frac{1}{2}) = 0$ and $G''(\frac{1}{2}) > 0$. As $G$ is eventually decreasing, $G$ cannot remain convex on the entirety of $(0.5, 1)$, meaning that $G''$ has either exactly one or two roots. When $G''$ has one root, similar to the proof of Equation \eqref{risk_surprising_cond}, $G$ is increasing at first and then decreasing on $(0.5, 1)$, with exactly one global maximum. When $G''$ has two roots, $G$ is convex and increasing at first, then concave, and then convex again. Note that since $G$ is eventually decreasing, it must be decreasing on the entirety of its final convex phase, which means that it has already become decreasing on its concave phase, and has reached its maximum value in this phase as well. Once again, we have shown that $G$ is increasing at first and then decreasing on $(0.5, 1)$, and the proof of this case, as well as the entire theorem, is complete.
\end{enumerate}

\section{Proof of Theorem \ref{no_tradeoff_theorem}}
\label{no_tradeoff_proof}
Based on Proposition \ref{optimal_classifier_proposition}, we can write $w^{adv} = \Sigma^{-1}(\mu - \epsilon z^*)$, where $z^* = \argmin_{\lVert z \rVert_\infty \leq 1}\lVert \mu - \epsilon z\rVert_{\Sigma^{-1}}^2$. In order for the two classifiers to be equivalent, we need $w^{adv} = \lambda w^{nat}$ for some $\lambda > 0$. Since $w^{nat} = \Sigma^{-1}\mu$, this would imply $z^* = \frac{\mu(1-\lambda)}{\epsilon}$. Moreover, recall from Appendix \ref{proof_optimal_classifier} that whenever $w^{adv}_i \neq 0$ we have $z^*_i = \sign(w^{adv}_i)$, which here it would imply $z^*_i = \sign((\Sigma^{-1}\mu)_i)$ whenever $(\Sigma^{-1}\mu)_i \neq 0$. Besides, based on $\lambda > 1$ or $\lambda < 1$, we must either have $\sign(z^*_i) = \sign(\mu_i)$ or $\sign(z^*_i) = -\sign(\mu_i)$ for all $i$. Since $\Sigma$ is positive definite, we must have $\innerp{\mu}{\Sigma^{-1}\mu} \geq 0$, therefore we always have $\sign(z^*_i) = \sign(\mu_i)$, i.e. $\lambda < 1$. In conclusion, we have
\begin{align}
    &(\Sigma^{-1}\mu)_i \neq 0 \implies \frac{\mu_i(1-\lambda)}{\epsilon} = \sign((\Sigma^{-1}\mu)) \implies \mu_i = c\sign((\Sigma^{-1}\mu)_i)\\
    &(\Sigma^{-1}\mu)_i = 0 \implies |z^*_i| \leq 1 \implies |\mu_i| \leq c
\end{align}
where $c = \frac{\epsilon}{1-\lambda} > \epsilon$. Conversely, note that whenever the above conditions hold, one can easily verify that $w^{adv} = \Sigma^{-1}\mu$ is a valid solution, completing the proof.
\qed

\section{Proof of Theorem \ref{small_gap_theorem}}
\label{small_gap_proof}
Recall that $w^{adv} = \Sigma^{-1}(\mu - \epsilon z^*)$, where $z^* = \argmin_{\lVert z \rVert_\infty \leq 1}\lVert \mu - \epsilon z \rVert_{\Sigma^{-1}}^2$. Also, when $\pi_+ = \pi_- = \frac{1}{2}$ we have
\begin{equation}
    R^{nat}_{\mathcal{D},\ell}(w^{adv}) = \Phi\left(\frac{-\innerp{w^{adv}}{\mu}}{\lVert w^{adv} \rVert_\Sigma}\right) = \Phi\left(\frac{-\lVert \mu \rVert_{\Sigma^{-1}}^2 + \epsilon\innerp{z^*}{\Sigma^{-1}\mu}}{\lVert \mu - \epsilon z^* \rVert_{\Sigma^{-1}}}\right).
\end{equation}
Furthermore, using the Hölder's inequality we have $\innerp{z^*}{\Sigma^{-1}\mu} \leq \lVert z^* \rVert_\infty\lVert \Sigma^{-1}\mu \rVert_1 \leq \lVert \Sigma^{-1}\mu \rVert_1$, and by definition of $z^*$ we have $\lVert \mu - \epsilon \sign(\Sigma^{-1}\mu)\rVert_{\Sigma^{-1}} \geq \lVert \mu - \epsilon z^* \rVert_{\Sigma^{-1}}$. Taking $\epsilon < \frac{\lVert \mu \rVert_{\Sigma^{-1}}^2}{2\lVert \Sigma^{-1}\mu \rVert_1}$ into account, we have
\begin{equation}
    R^{nat}_{\mathcal{D},\ell}(w^{adv}) \leq \Phi\left(\frac{-\lVert \mu \rVert_{\Sigma^{-1}}^2 + \epsilon \lVert \Sigma^{-1}\mu \rVert_1}{\sqrt{\lVert \mu \rVert_{\Sigma^{-1}}^2-2\epsilon\lVert\Sigma^{-1}\mu\rVert_1+\epsilon^2\lVert \sign(\Sigma^{-1}\mu) \rVert_{\Sigma^{-1}}^2}}\right).
\end{equation}
For simplicity, we define $A \coloneqq \frac{\lVert \Sigma^{-1}\mu \rVert_1}{\lVert \mu \rVert_{\Sigma^{-1}}^2}$ and $B \coloneqq \frac{\lVert \sign(\Sigma^{-1}\mu)\rVert_{\Sigma^{-1}}^2}{\lVert \mu \rVert_{\Sigma^{-1}}^2}$. Then
\begin{align}
    R^{nat}_{\mathcal{D},\ell}(w^{adv}) \leq \Phi\left(-\lVert \mu \rVert_{\Sigma^{-1}}\frac{1-\epsilon A}{\sqrt{1 - 2\epsilon A + \epsilon^2 B}}\right) &= \Phi\left(-\lVert \mu \rVert_{\Sigma^{-1}}\frac{1}{\sqrt{1 + \frac{\epsilon^2(B-A^2)}{(1 - \epsilon A)^2}}}\right) \nonumber\\
    &\leq \Phi\left(-\lVert \mu \rVert_{\Sigma^{-1}}\frac{1}{\sqrt{1+4\epsilon^2(B-A^2)}}\right)\nonumber\\
    &\leq \Phi\left(-\lVert \mu \rVert_{\Sigma^{-1}}(1-2\epsilon^2(B-A^2))\right)
\end{align}
where the second and third inequalities follow from the facts that $\epsilon \leq \frac{1}{2A}$ and $\frac{1}{\sqrt{1+x}} \geq 1 - \frac{x}{2}$ for all $x > -1$. Recall that $C_{\Sigma,\mu} = \lVert \sign(\Sigma^{-1}\mu)\rVert_{\Sigma^{-1}}^2 - \frac{\lVert \Sigma^{-1}\mu \rVert_1^2}{\lVert \mu \rVert_{\Sigma^{-1}}^2}$. Thus, it is straightforward to see
\begin{equation}
    g \leq \Phi(-\lVert \mu \rVert_{\Sigma^{-1}} + \frac{2C_{\Sigma,\mu}\epsilon^2}{\lVert \mu \rVert_{\Sigma^{-1}}}) - \Phi(-\lVert \mu \rVert_{\Sigma^{-1}}).
\end{equation}
Besides, $\Phi$ is convex on $\mathbb{R}^-$. Therefore, for any $a,b \leq 0$ we have $\Phi(b) - \Phi(a) \leq \frac{e^{\frac{-b^2}{2}}}{\sqrt{2\pi}}(b-a)$. Consequently, we have

\begin{equation}
    \label{precise_bound}
    g \leq \frac{2e^\frac{\left(-\lVert \mu \rVert_{\Sigma^{-1}} + \frac{2C_{\Sigma,\mu}\epsilon^2}{\lVert \mu \rVert_{\Sigma^{-1}}}\right)^2}{2}C_{\Sigma,\mu}\epsilon^2}{\sqrt{2\pi}\lVert \mu \rVert_{\Sigma^{-1}}}.
\end{equation}

Moreover, with $\epsilon \leq \frac{\lVert \mu \rVert_{\Sigma^{-1}}}{2\sqrt{C_{\Sigma,\mu}}}$ we have $-\lVert \mu \rVert_{\Sigma^{-1}} + \frac{2C_{\Sigma,\mu}\epsilon^2}{\lVert \mu \rVert_{\Sigma^{-1}}} \leq \frac{-\lVert \mu \rVert_{\Sigma^{-1}}}{2}$ and
\begin{equation}
    g \leq \frac{2e^{\frac{-\lVert \mu \rVert_{{\Sigma^{-1}}}^2}{8}}C_{\Sigma,\mu}\epsilon^2}{\sqrt{2\pi}\lVert \mu \rVert_{\Sigma^{-1}}}.
\end{equation}
In order to prove the lower bound, we first redefine $A$ and $B$ as $A \coloneqq \frac{\innerp{z^*}{\Sigma^{-1}\mu}}{\lVert \mu \rVert_{\Sigma^{-1}}^2}$ and $B \coloneqq \frac{\lVert z^* \rVert_{\Sigma^{-1}}^2}{\lVert \mu \rVert_{\Sigma^{-1}}^2}$. Now we have
\begin{align}
    R^{nat}_{\mathcal{D},\ell}(w^{adv}) &= \Phi\left(-\lVert \mu \rVert_{\Sigma^{-1}}\frac{1-\epsilon A}{\sqrt{1 - 2\epsilon A + \epsilon^2B}}\right)\nonumber\\
    &= \Phi\left(\frac{-\lVert \mu \rVert_{\Sigma^{-1}}}{\sqrt{1 + \frac{\epsilon^2(B-A^2)}{(1-\epsilon A)^2}}}\right)\nonumber\\
    &\geq \Phi\left(\frac{-\lVert \mu \rVert_{\Sigma^{-1}}}{\sqrt{1 + \epsilon^2(B-A^2)}}\right)\label{lower_bound_interim}.
\end{align}
Furthermore, by the definition of $z^*$, when $\Sigma$ is diagonal and $\min_{i\in[d]}|\mu_i| \geq \epsilon$, it is easy to observe that $z^* = \sign(\mu) = \sign(\Sigma^{-1}\mu)$. As a result, we have $B - A^2 = \frac{C_{\Sigma,\mu}}{\lVert \mu \rVert_{\Sigma^{-1}}^2}$. In addition, the conditions of the theorem stipulate that $\frac{C_{\Sigma,\mu}\epsilon^2}{\lVert \mu \rVert_{\Sigma^{-1}}^2} \leq \frac{1}{4}$, and for $x \leq \frac{1}{2}$ we have $\frac{1}{\sqrt{1 + x}} \leq 1 - \frac{x}{3}$. Combined with \eqref{lower_bound_interim}, we have
\begin{equation}
    R^{nat}_{\mathcal{D},\ell}(w^{adv}) \geq \Phi\left(-\lVert \mu \rVert_{\Sigma^{-1}}\left(1-\frac{C_{\Sigma,\mu}\epsilon^2}{3\lVert \mu \rVert_{\Sigma^{-1}}^2}\right)\right) \implies g \geq \Phi(-\lVert \mu \rVert_{\Sigma^{-1}}+\frac{C_{\Sigma,\mu}\epsilon^2}{3\lVert \mu \rVert_{\Sigma^{-1}}}) - \Phi(-\lVert \mu \rVert_{\Sigma^{-1}}).
\end{equation}
Again, the convexity of $\Phi$ on $\mathbb{R}^-$ yields that for any $a,b \leq 0$ we have $\Phi(b) - \Phi(a) \geq \frac{e^{\frac{-a^2}{2}}}{\sqrt{2\pi}}(b-a)$. Thus, we have
\begin{equation}
    g \geq \frac{e^{\frac{-\lVert \mu \rVert_{\Sigma^{-1}}^2}{2}}C_{\Sigma,\mu}\epsilon^2}{3\sqrt{2\pi}\lVert \mu \rVert_{\Sigma^{-1}}}.
\end{equation}
\qed

\section{Numerical Results}
\label{numerical_exp}
In the section, we present numerical experiments that validate Theorems \ref{imbalanced_theorem} and \ref{small_gap_theorem}. Furthermore, we discover a new trend in the gap as the adversarial budget increases. A theoretical analysis of this trend can be an interesting line of future work.

\subsection{Class Imbalance}
In Section \ref{Class_imbalance} we analyzed the impact of class imbalance on the natural risk of the optimal adversarial classifier and the natural risk gap
between the two classifiers. Theorem \ref{imbalanced_theorem} proposes the necessary and sufficient condition for transitioning between two different regimes, one with a single global maximum and another with two global maxima. Here, we validate this theorem in two cases. In the first case, we set $\mu = (1.5, 2, 4)$ and $\Sigma = 3I_{3}$. In Figure \ref{fig:imbalanced_diagonal}, we can observe the occurrence of two distinct regimes for both the natural risk of the optimal adversarial classifier and the gap, which demonstrates our desired result. For example, $\epsilon = 1.5$ does not satisfy the condition of Equation \eqref{risk_surprising_cond}. Hence, the natural risk of adversarial classifier has a concave shape with a single maximum at $\frac{1}{2}$. On the other hand, $\epsilon = 2.5$ satisfies this condition. As a result, we have the promised two maxima located symmetrically around $\frac{1}{2}$ and a local minimum at $\frac{1}{2}$. 

Similarly, for a non-diagonal covariance matrix,  we consider $\mu = (2, 1, 3)$ and 
$$\Sigma = \begin{pmatrix}
2 & 1 & 1 \\
1 & 2 & 1.5 \\
1 & 1.5 & 3
\end{pmatrix}.$$
Figure \ref{fig:imbalanced_nondiagonal} shows the result of this case, which conforms with our theoretical results. Note that the natural risk of the Bayes classifier does not depend on the adversarial budget ($\epsilon$), and as expected, we only have a single maximum at $\frac{1}{2}$ for this risk.

In this section and the next ones, in order to obtain exact risks for the optimal adversarial classifier and the gap, we need to numerically solve Equation \eqref{optimal_solution_z} and obtain $z^*$. We used CVXPY \cite{cvxpy} to this end.

\begin{figure}
    \centering
\begin{subfigure}[t]{.3\textwidth}
    \centering
    \includegraphics[width=\textwidth]{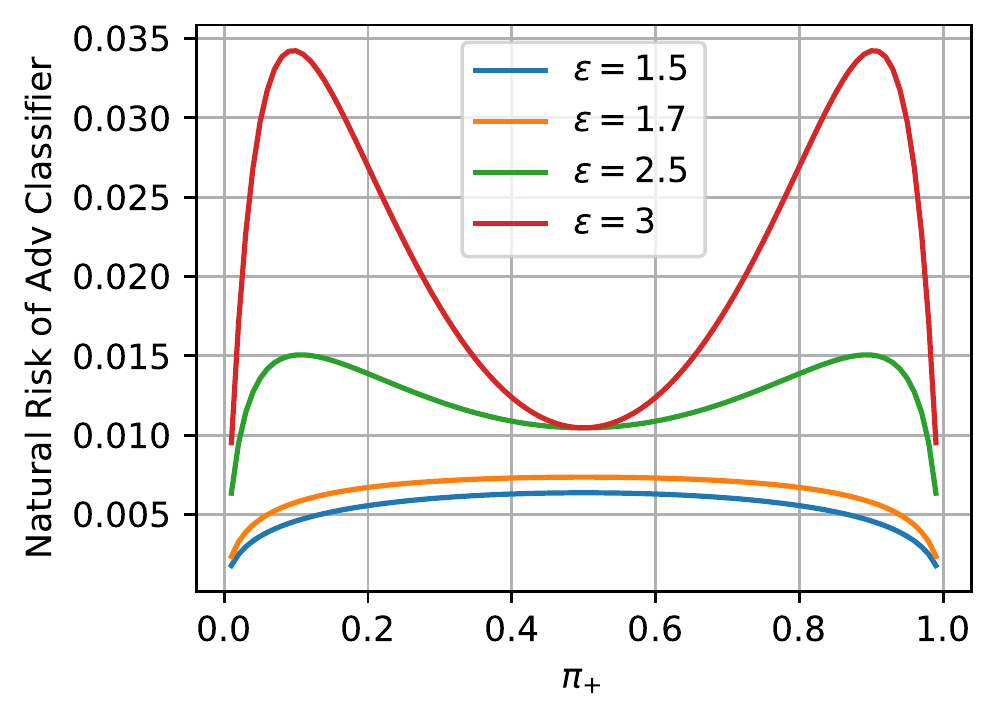}
    \caption{The natural risk of the optimal adversarial classifier.}
\end{subfigure}
\hfill
\begin{subfigure}[t]{.3\textwidth}
    \centering
    \includegraphics[width=\textwidth]{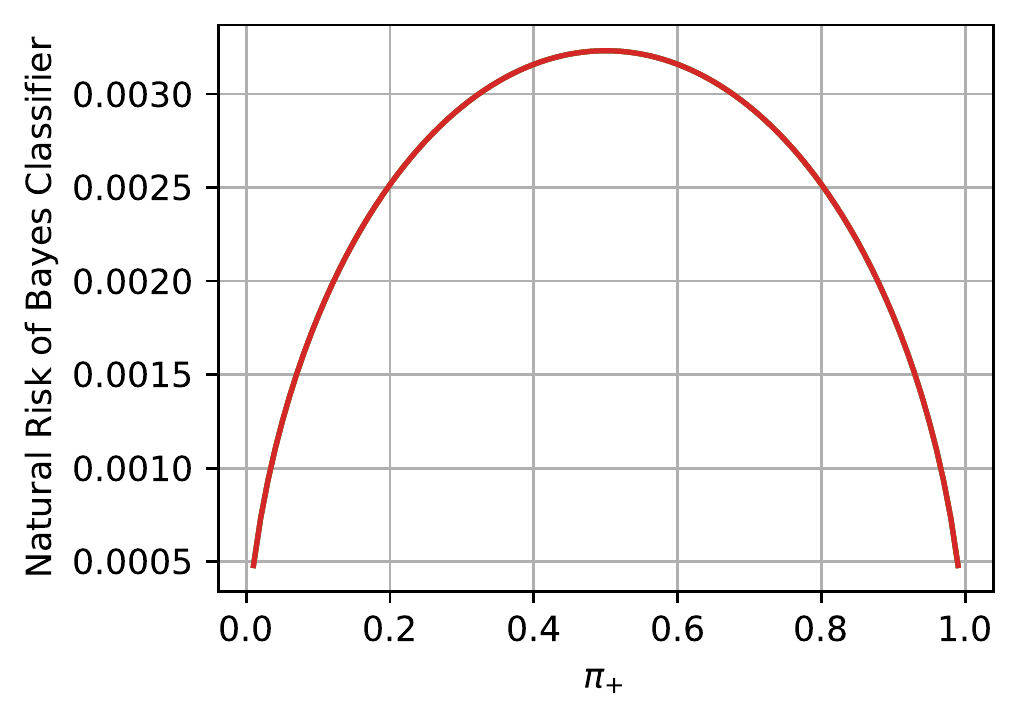}
       \caption{The natural risk of the Bayes classifier.}
\end{subfigure}
\hfill
\begin{subfigure}[t]{.3\textwidth}
    \centering
    \includegraphics[width=\textwidth]{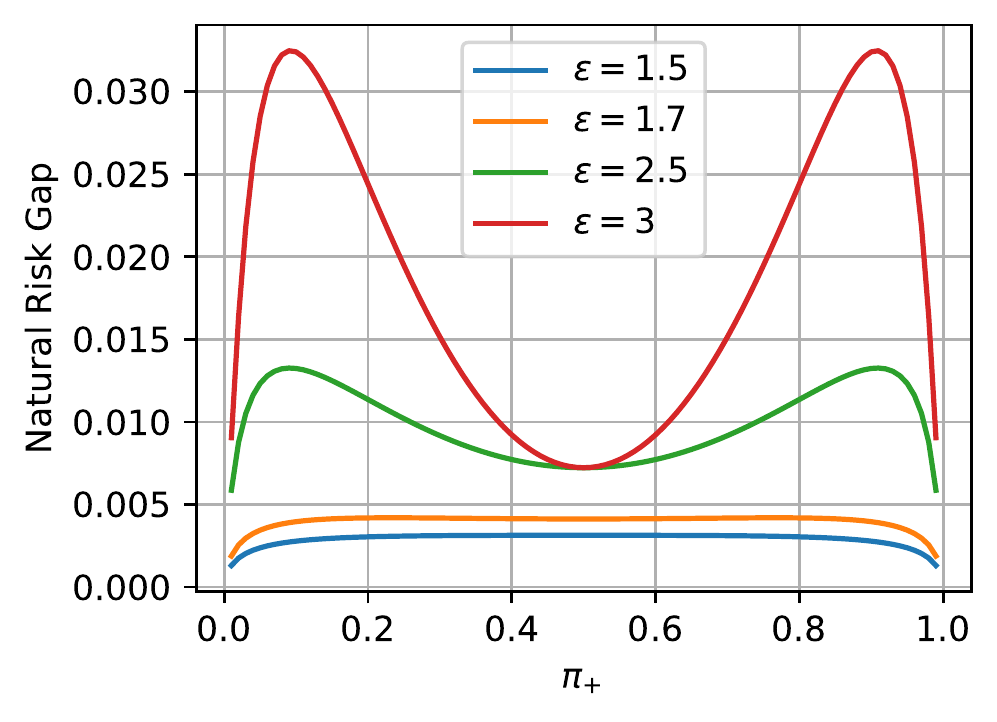}
    \caption{The natural risk gap between the optimal adversarial and the Bayes classifier.}
\end{subfigure}
\caption{Natural risk curves obtained with parameters $\mu = (1.5, 2, 4)$ and $\Sigma = 3I_{3}$. }
\label{fig:imbalanced_diagonal}
\end{figure}

\begin{figure}
    \centering
\begin{subfigure}[t]{.3\textwidth}
    \centering
    \includegraphics[width=\textwidth]{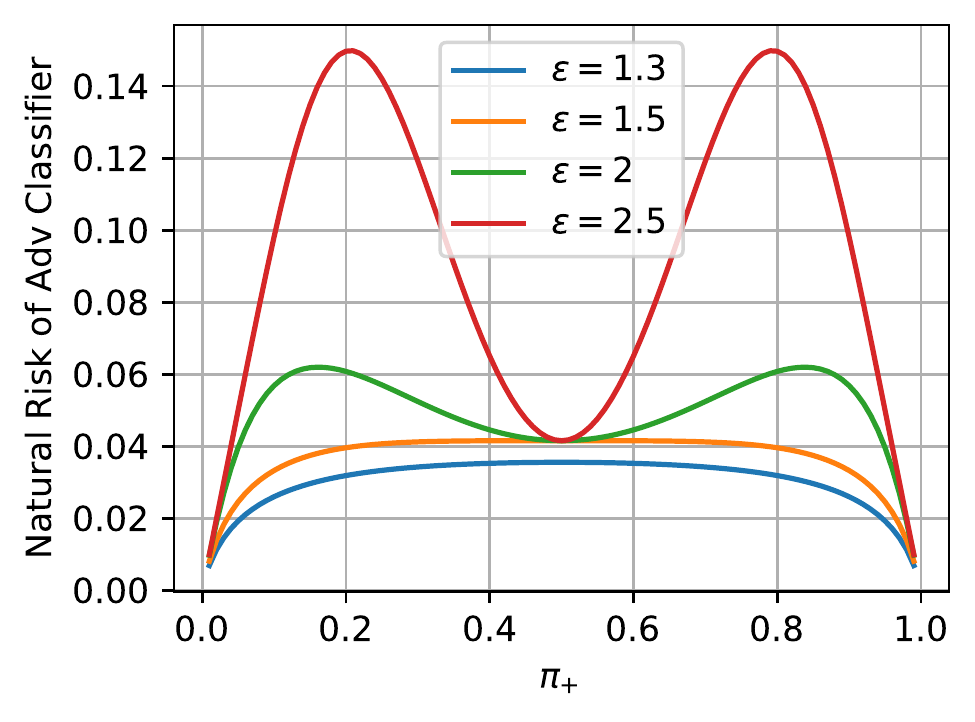}
    \caption{The natural risk of the optimal adversarial classifier.}
\end{subfigure}
\hfill
\begin{subfigure}[t]{.3\textwidth}
    \centering
    \includegraphics[width=\textwidth]{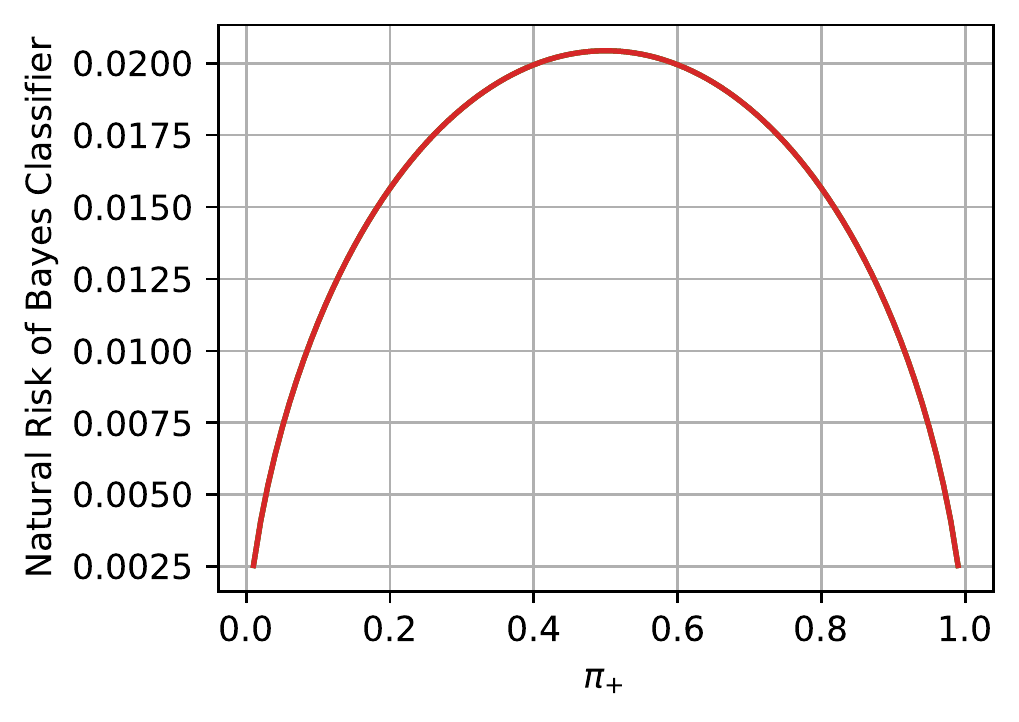}
       \caption{The natural risk of the Bayes classifier.}
\end{subfigure}
\hfill
\begin{subfigure}[t]{.3\textwidth}
    \centering
    \includegraphics[width=\textwidth]{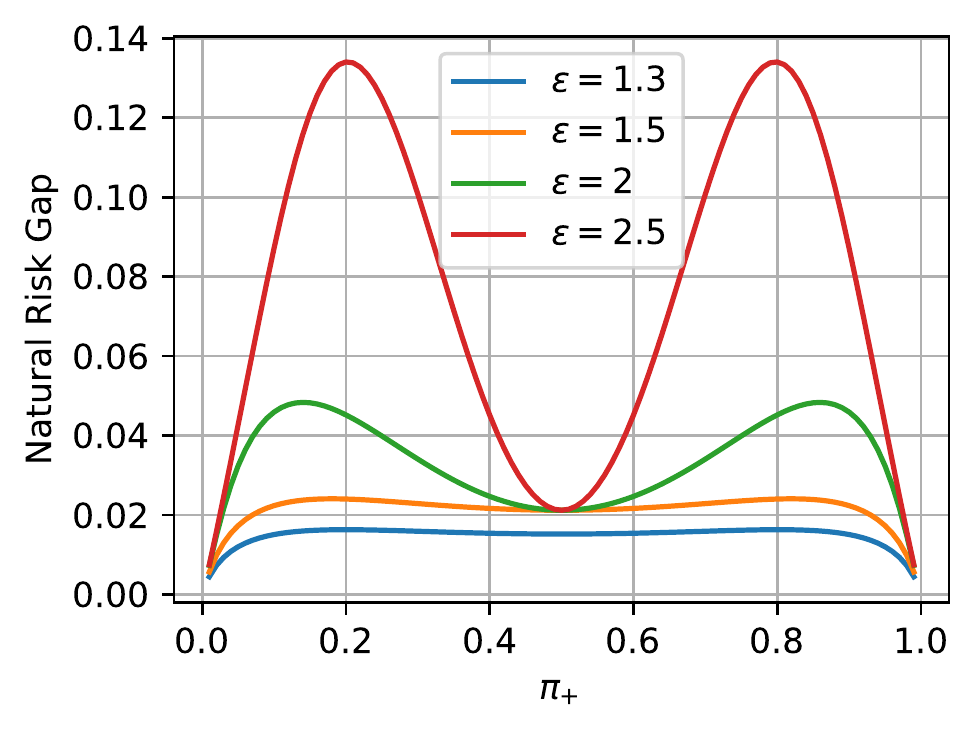}
    \caption{The natural risk gap between the optimal adversarial and the Bayes classifier.}
\end{subfigure}
\caption{Natural risk curves obtained with parameters $\mu = (2, 1, 3)$ and $\Sigma = \begin{pmatrix}
2 & 1 & 1 \\
1 & 2 & 1.5 \\
1 & 1.5 & 3
\end{pmatrix}
$.}
\label{fig:imbalanced_nondiagonal}
\end{figure}

\subsection{Gap Lower and Upper Bounds}
In this experiment, we validate our bounds for the natural risk gap between the two classifiers. Theorem \ref{small_gap_theorem} provides a general upper bound for all balanced binary Gaussian mixture settings, as well as a lower bound when the covariance matrix is diagonal. For the upper bound, we consider Equation \eqref{precise_bound} as it provides a tighter bound. Firstly, we evaluate both the upper and lower bounds for a diagonal covariance matrix. With $\mu = (1.5, 2, 4)$ and $\Sigma = 3I_{3}$, Figure \ref{fig:upper_lower_bound} depicts the exact value of the gap as a function of $\epsilon$ along with our proposed lower and upper bounds. Since for the non-diagonal setting we only have an upper bound, it is the only bound plotted
in Figure \ref{fig:upper_bound}, when $\mu = (1, 1, 1.5)$ and $$\Sigma = \begin{pmatrix}
    2 & 0.5 & 1 \\
    0.5 & 2 & 1.5 \\
    1 & 1.5 & 4
    \end{pmatrix}.
    $$
\begin{figure}
    \centering
    \begin{subfigure}[t]{0.49\textwidth}
     \includegraphics[width = \linewidth]{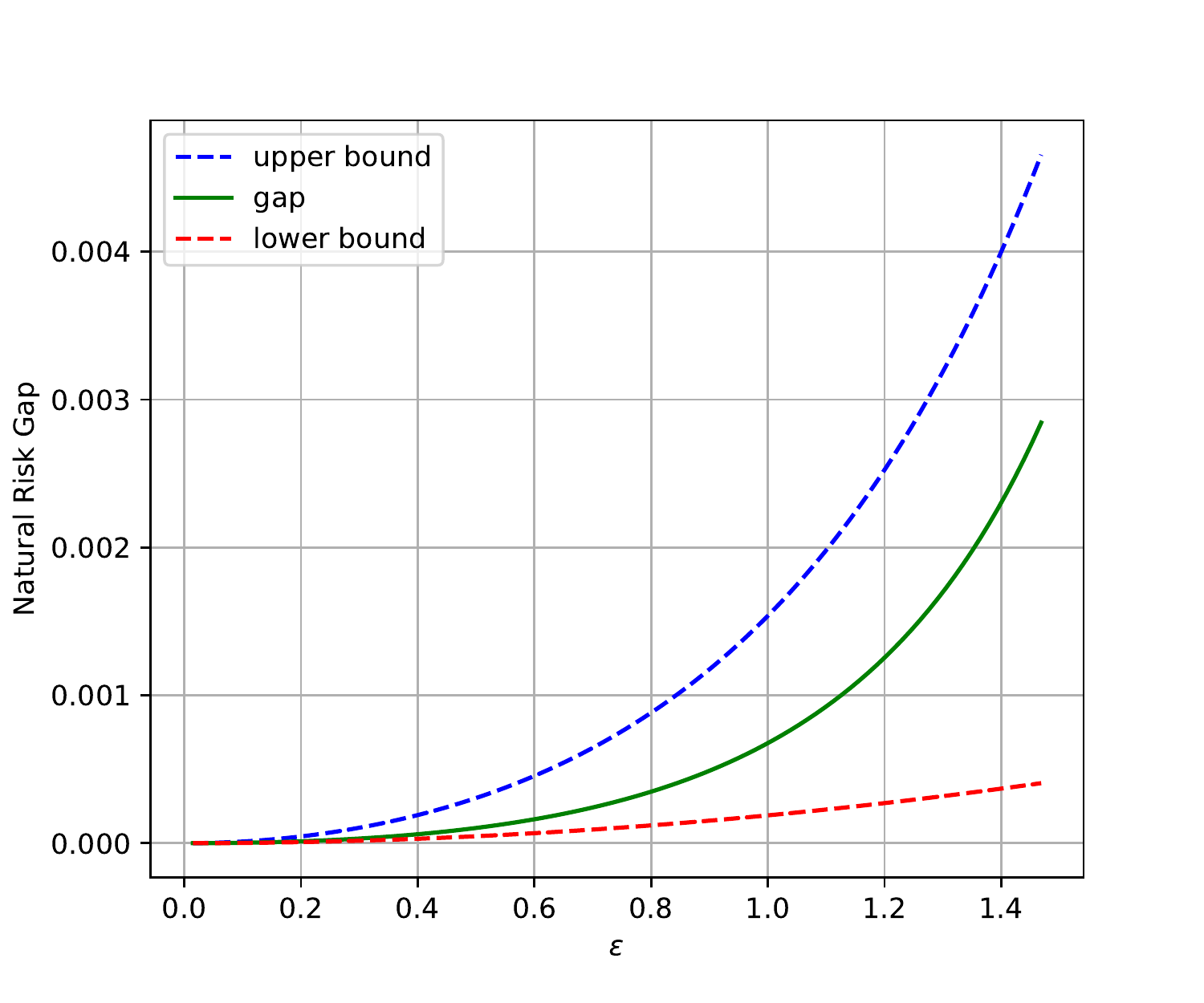}
    \caption{Chosen fixed parameters are $\mu = (1.5, 2, 4)$ and $\Sigma = 3I_{3}$.}
    \label{fig:upper_lower_bound}
    \end{subfigure}
    \begin{subfigure}[t]{0.49\textwidth}
        \includegraphics[width =\linewidth]{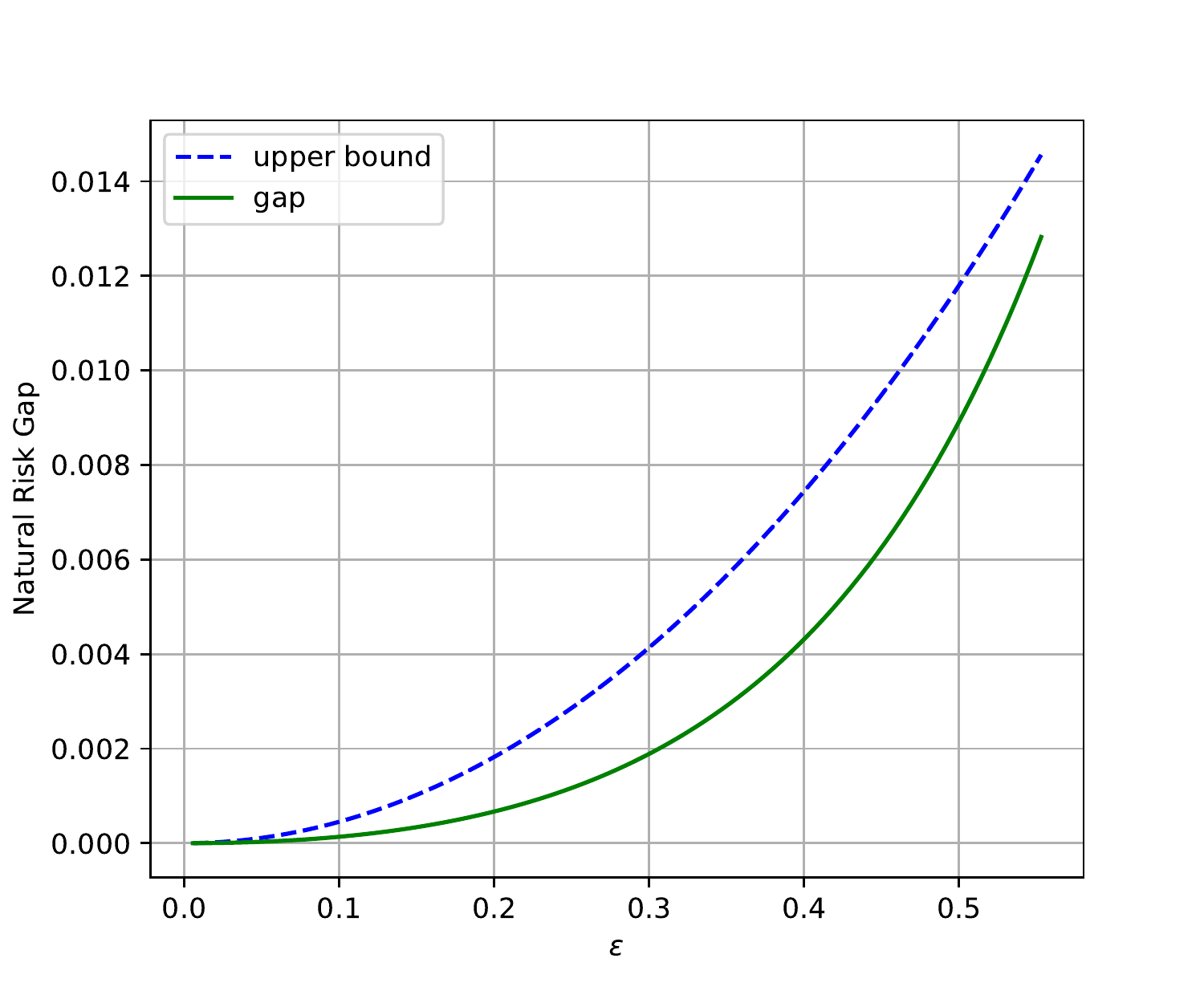}
        \caption{Chosen fixed parameters are $\mu = (1, 1, 1.5)$ and $\Sigma = \begin{pmatrix}
    2 & 0.5 & 1 \\
    0.5 & 2 & 1.5 \\
    1 & 1.5 & 4
    \end{pmatrix}
    $. We only have an upper bound in this case.}
    \label{fig:upper_bound}
    \end{subfigure}
    \caption{The natural risk gap as a function of $\epsilon$, along with the proposed upper and lower bounds.}
\end{figure}

While our bounds are tight enough for the cases studied here, we show that they can be improved in the next section.

\subsection{The Gap and Adversarial Budget}
In this experiment, we aim to delve deeper into the behavior of the natural risk gap as a function of the adversarial budget, $\epsilon$. We plot the gap as a function of $\epsilon$ with $\pi_{+} = \frac{1}{2}$ and
$\mu = (1, 2, 3, 3.4)$ and $$\Sigma = \begin{pmatrix}
    3 & 1 & 1 & 0 \\
    1 & 3 & 0 & 0 \\
    1 & 0 & 3 & 1 \\
    0 & 0 & 1 & 3 \\
    \end{pmatrix}.$$
As shown in Figure \ref{fig:gap_epsilon}, the curve has 3 break points, i.e. points where the function is non-differentiable. The first is between $0$ and $0.5$, the second is $2$ and the last occurs between $2.5$ and $3$. The existence of these points is due to the $\ell_\infty$ norm constraint of the optimization problem in Equation \eqref{optimal_solution_z}. The piecewise behavior of the gap encourages locating the break points and providing a tighter gap between each two break points in the future.

\begin{figure}
    \centering
    \includegraphics[width = 0.5\linewidth]{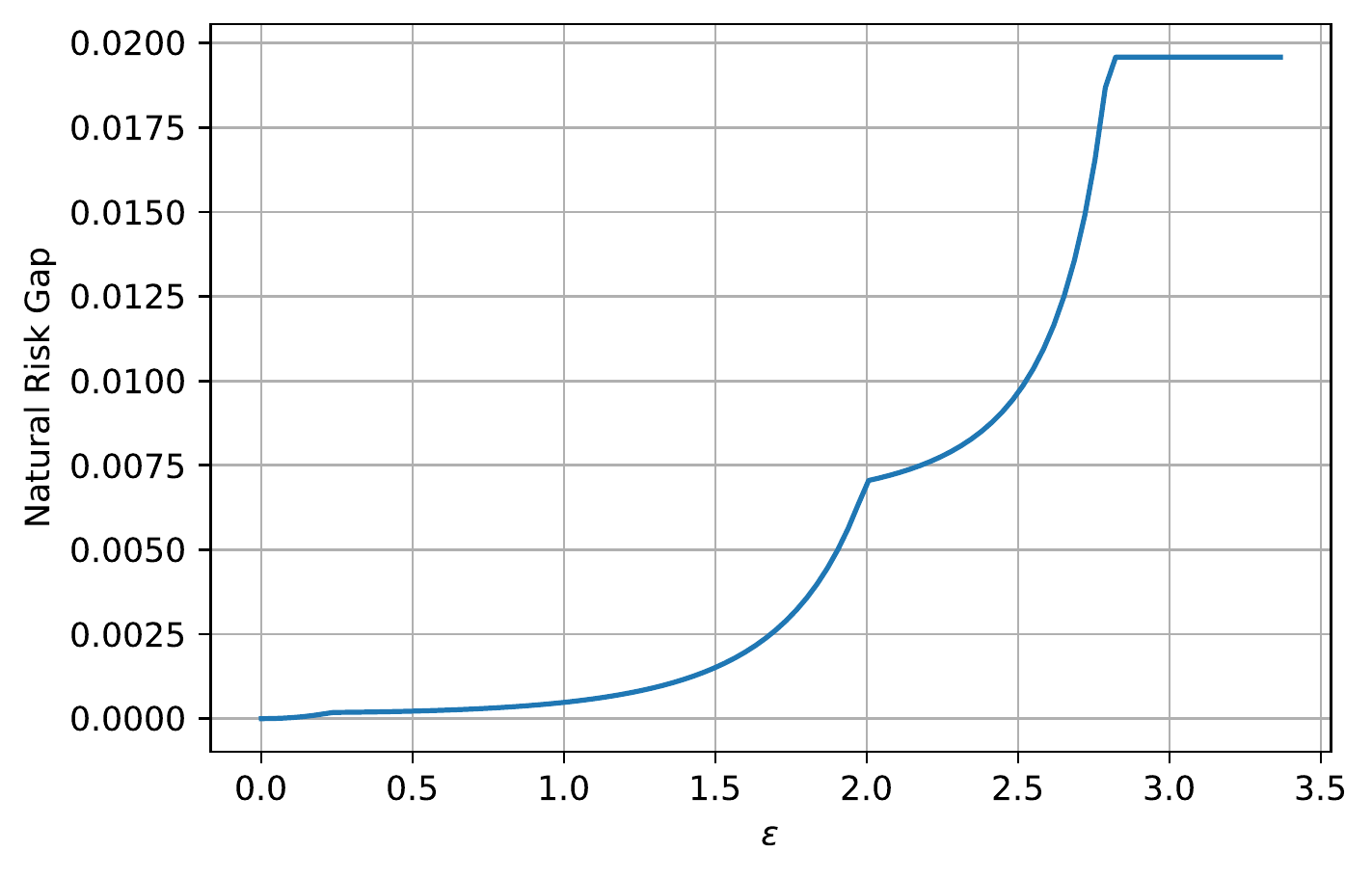}
    \caption{The natural risk gap as a function of $\epsilon$ with fixed parameters $\mu = (1,2,3, 3.4)$ and $\Sigma = \begin{pmatrix}
    3 & 1 & 1 & 0 \\
    1 & 3 & 0 & 0 \\
    1 & 0 & 3 & 1 \\
    0 & 0 & 1 & 3 \\
    \end{pmatrix}$. This figure indicates the existence of break points which change the trend of the curve.
    }
    \label{fig:gap_epsilon}
\end{figure}

\begin{figure}
    \centering
    \includegraphics[width = 0.5\linewidth]{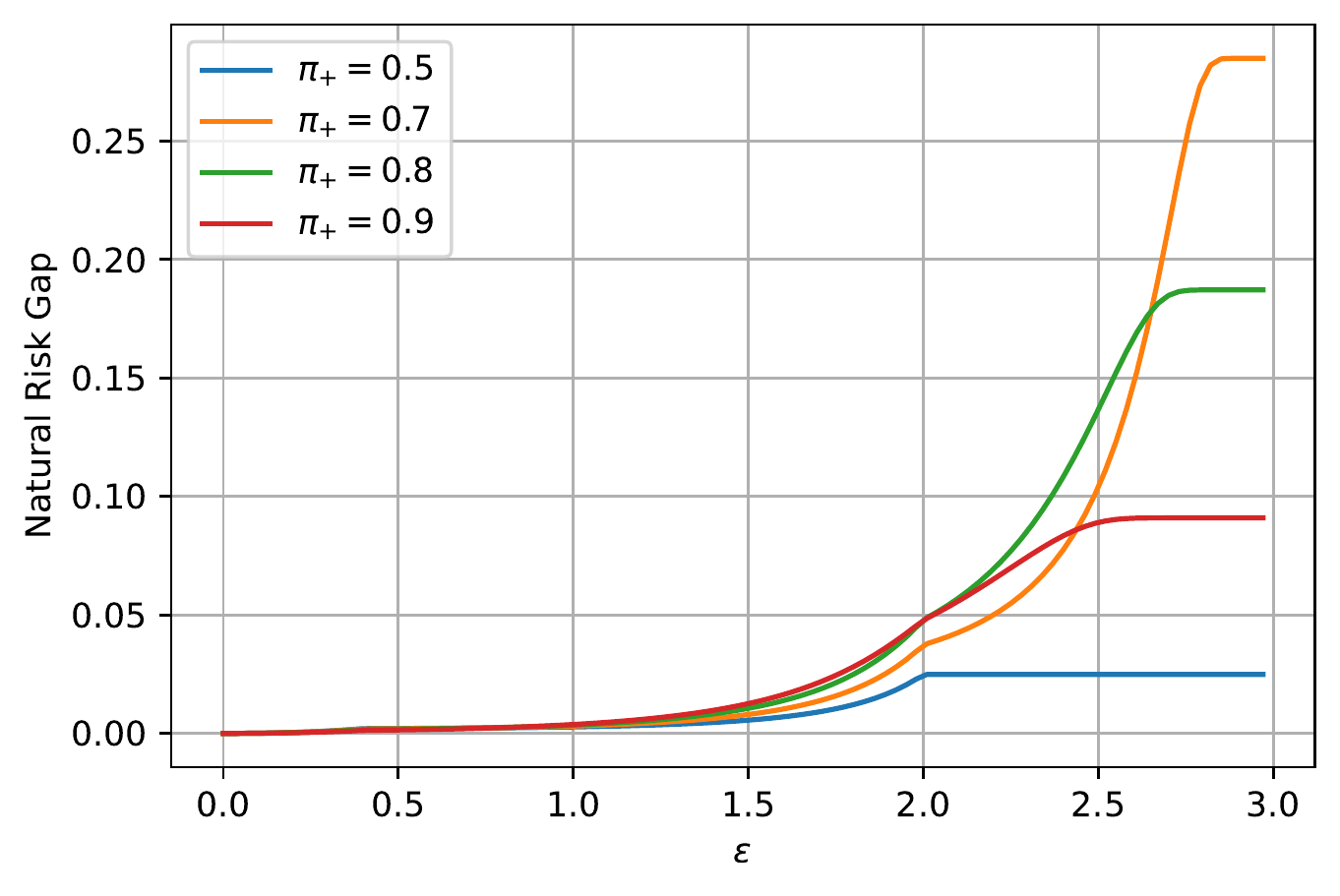}
     \caption{The gap as a function of $\epsilon$ for different $\pi_+$ with fixed parameters $\mu = (1, 2, 3)$ and $\Sigma = \begin{pmatrix}
    3   & 1 & 1 \\
    1   & 3 & 0 \\
    1   & 0 & 3
    \end{pmatrix}$.}
    \label{fig:imbalanced_epsilon}
\end{figure}

\subsection{The Gap and Imbalanced Priors}
In this experiment, we compare the trend of the natural risk gap as a function of $\epsilon$, obtained with different $\pi_{+}$, when we have fixed $\mu = (1, 2, 3)$ and 
$$\Sigma = \begin{pmatrix}
    3 & 1 & 1 \\
    1 & 3 & 0 \\
    1 & 0 & 3
    \end{pmatrix}.$$
We have two observations from Figure \ref{fig:imbalanced_epsilon}, which their theoretical study is left as future work.
\begin{enumerate}
    \item In contrast to the case of $\pi_+ = \frac{1}{2}$ where the gap remains constant beyond $\epsilon = 2$, when $\pi_{+} \neq \frac{1}{2}$ the gap still increases after this point. This observation shows that the convergence point of the gap, i.e. the $\epsilon$ beyond which the gap is constant, varies with $\pi_+$.
    
    \item The worst priors for the gap vary with $\epsilon$. For instance, when $\epsilon \in (1.5, 2)$, $\pi_{+} = 0.8$ achieves a higher gap compared to $\pi_{+} = 0.7$, which reverses beyond $\epsilon > 2.5$.
\end{enumerate}

\end{document}